\renewcommand{\cite}[1]{\citep{#1}} 
\newcommand{\Qrefs}{\cite{watkins_1989,watkins-dayan_1992}}
\newcommand{\ramp}{Value-Ramp}
\numberwithin{table}{section}
\numberwithin{figure}{section}
\numberwithin{equation}{section}
\newcommand{\proofapp}[1]{\textit{(Proof in Appendix~\ref{#1}.)}\qed}
\newcommand{\proofmain}[1]{The proof is given in Section~\ref{#1}.}
\newcommand{\propproof}[1]{Proof of Property~\ref{#1}}
\theoremstyle{definition}
\newtheorem{definition}{Definition}[section]
\newtheorem{theorem}{Theorem}[section]
\newtheorem{property}[theorem]{Property}
\newtheorem{example}[theorem]{Example}
\newtheorem{lemma}[theorem]{Lemma}
\newtheorem{corollary}[theorem]{Corollary}
\newtheorem{remark}[theorem]{Remark}
\newtheorem{claim}[theorem]{Claim}
\newtheorem*{theoremexplore}{Exploration}
\newtheorem*{theoremgreedy}{Greediness}
\newcommand{\fname}[1]{\text{\textit{#1}}} 
\renewcommand{\qed}{\hfill$\square$} 
\newcommand{\X}[2]{#1_{#2}}
\newcommand{\nat}{\mathbb{N}}
\newcommand{\ints}{\mathbb{Z}}
\newcommand{\set}[1]{\left\{#1\right\}}
\newcommand{\ssize}[1]{\left|#1\right|}
\newcommand{\length}[1]{\left|#1\right|}
\newcommand{\powset}[1]{\mathcal{P}(#1)}
\newcommand{\dom}[1]{\fname{dom}\left(#1\right)} 
\newcommand{\V}{V} 
\newcommand{\VX}[1]{\X{\V}{#1}}
\newcommand{\get}[1]{[#1]} 
\newcommand{\step}{K}
\newcommand{\pref}[2]{\fname{pref}(#1,#2)}
\newcommand{\rl}{\Delta_{\step}}
\newcommand{\jump}[1]{{}\xrightarrow{#1}}
\newcommand{\cnf}{c}
\newcommand{\cnftup}{(\st,\V)}
\newcommand{\cnftupX}[1]{(\stX{#1},\VX{#1})}
\newcommand{\transX}[1]{\cnftupX{#1}\jump{\actX{#1},\,\stX{#1+1}}\cnftupX{#1+1}}
\newcommand{\run}{\mathcal{X}}
\newcommand{\clamp}[1]{\left\llbracket#1\right\rrbracket}
\newcommand{\edge}[1]{\xrightarrow{#1}}
\newcommand{\actions}{A}
\newcommand{\act}{a}
\newcommand{\actX}[1]{\X{\act}{#1}}
\newcommand{\task}{T}
\newcommand{\tasktup}{(\states,\startstates,\actions,\tr,\R)}
\newcommand{\states}{S}
\newcommand{\startstates}{S^\mathrm{start}}
\newcommand{\tr}{\delta} 
\newcommand{\st}{s}
\newcommand{\stX}[1]{\X{\st}{#1}}
\newcommand{\R}{R} 
\newcommand{\ceil}[1]{\fname{ceiling}(#1)}
\newcommand{\Det}[3]{\tr(#1,#2)=\set{#3}} 
\newcommand{\DC}{DC}
\newcommand{\pval}[1]{\fname{val}(#1)} 
\newcommand{\optval}[1]{\fname{opt-val}(#1)}
\newcommand{\explore}[1]{\fname{explore}(#1)}
\newcommand{\M}{M}
\newcommand{\plen}[1]{\fname{len}(#1)}
\newcommand{\sub}[1]{\fname{sub}(#1)} 
\newcommand{\alt}[1]{#1^*}
\newcommand{\viol}[1]{\fname{viol}(#1)}
\newcommand{\violmax}[1]{\fname{viol-max}(#1)}
\newcommand{\rewards}[1]{\fname{rewards}(#1)}
\newcommand{\goals}[1]{\fname{goals}(#1)}
\newcommand{\reducelayer}[1]{L_{#1}(\task)}
\newcommand{\reduce}[1]{\fname{reduce}(#1)}
\newcommand{\nonreduce}[1]{\fname{non-reduce}(#1)}
\newcommand{\RR}{RR}
\newcommand{\highest}[1]{\fname{highest}(#1)}
\newcommand{\layer}[1]{\fname{layer}(#1)}
\newcommand{\stratlayer}[1]{z_{#1}(\V)}
\newcommand{\stratlayerX}[2]{z_{#2}(#1)} 
\newcommand{\strategy}[1]{\fname{strategy}(#1)}
\newcommand{\fixp}[1]{\fname{fixp}(#1)}
\newcommand{\Glayer}[1]{\G_{#1}}
\newcommand{\G}{g^{(\st)}}
\newcommand{\go}[1]{\fname{go}^{(#1)}}
\newcommand{\restart}[1]{\fname{restart}(#1)}
\newcommand{\expect}[3]{\fname{expect}(#1,#2,#3)}
\newcommand{\nextname}{\beta}
\newcommand{\next}[1]{\nextname(#1)}
\DeclareMathOperator{\cause}{\triangleright}
\newcommand{\cycle}{\mathcal{C}}
\begin{document}

\title{Learning with \ramp}

\author{
    Tom~J.~Ameloot%
    \thanks{This work was made while T.J.~Ameloot was a postdoctoral fellow of the Research Foundation -- Flanders (FWO), at Hasselt University and the transnational University of Limburg (Belgium).}
    \and
    Jan~Van~den~Bussche
}
\date{}

\maketitle{}

\begin{abstract}    
    We study a learning principle based on the intuition of forming ramps. The agent tries to follow an increasing sequence of values until the agent meets a peak of reward. The resulting \ramp\ algorithm is natural, easy to configure, and has a robust implementation with natural numbers.
\end{abstract}

\setcounter{tocdepth}{2}
\tableofcontents

\section{Introduction}
\label{sec:intro}

In reinforcement learning, techniques such as temporal difference learning~\cite{sutton_1988} are used to model biological learning mechanisms~\cite{potjans_2011,schultz_2013,schultz_2015}. In that context, \citet{fremaux_2013} have simulated neuron-based agents acting on various tasks, where the firing frequency of some neurons represents the value of encountered states. \citet{fremaux_2013} observed that the simulated value neurons behave in a ramp-like manner: the firing frequency of a value neuron steadily increases as the agent approaches reward.
Moreover, \citet{fremaux_2013} discuss an interesting link between their simulations and the behavior of real ``ramp'' neurons studied by~\citet{van-der-meer_2011}.
Therefore, we believe that the ramp intuition deserves further analysis, to better understand its potential use as a learning principle.
        
Our aim in this paper is to study the value-ramp principle from a general reinforcement learning perspective. Thereto, we formalize the intuition with a concrete algorithm, called \ramp. 
As in Q-learning~\Qrefs, we compute a value $\V(\st,\act)$ for each state-action pair $(\st,\act)$. The state value $\V\get\st$ is the maximum across the actions, i.e., $\V\get\st=\max\set{\V(\st,\act)\mid\act\text{ is an action}}$.
Letting $\R(\st,\act)$ be a nonnegative reward quantity obtained when performing $\act$ in $\st$, and letting $\st'$ be the successor state,  \ramp\ updates $\V(\st,\act)$ as follows:
\begin{equation}
    \V'(\st,\act) := \max(0, \V(\st,\act) + d),
\end{equation}
where
\[
    d = \max(\V\get{\st'},\R(\st,\act))-\step-\V\get\st,
\]
and $\step\geq1$ is a fixed step size. Rewards are assumed to be nonnegative, and values are constrained to be nonnegative.%
    \footnote{The nonnegative range is inspired by biological learning models where the (positive) reward spectrum has a dedicated representation mechanism, leaving room for a dual mechanism to represent the aversive spectrum~\cite{schultz_2013,hennigan_2015}. Representing the aversive spectrum is left as an item for further work (see Section~\ref{sec:conclusion}).}
We assume throughout this paper that values are natural numbers; natural numbers are adequate for our study. As a benefit, natural numbers can be implemented compactly and robustly on a computer.
    
Intuitively, value is reward expectation, or closeness to reward.
On a path, \ramp\ propagates encountered reward quantities (and values) backwards in time, where each step subtracts $\step$.    
The effect is that when following the path forwards, we see increasingly larger values, and there is a reward peak at the end.  
By choosing actions to maximize value, the agent can follow an increasing ramp of values. After learning, the agent experience may appear as in Figure~\ref{fig:ramp-intro}.

\begin{figure}[H]
    \begin{center}
        \includegraphics[width=\textwidth]{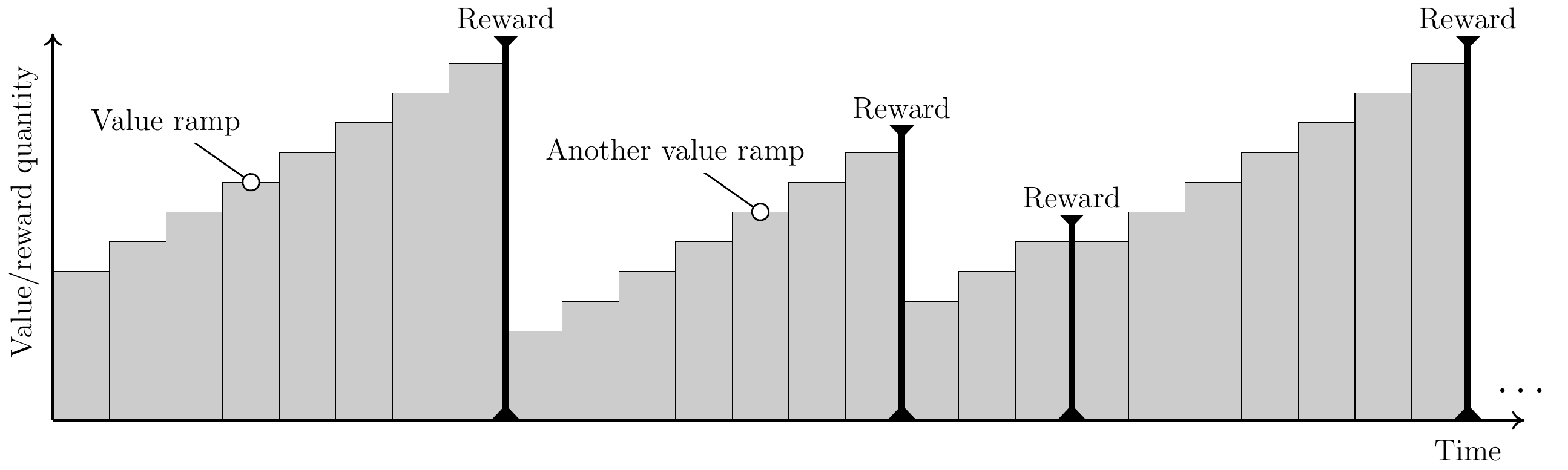}
    \end{center}
    \caption{Illustration of the ramp-like value experience of the agent.}
    \label{fig:ramp-intro}    
\end{figure}
        
To find concrete insights about the resulting agent behavior, our approach is to formally study \ramp\ on well-defined tasks. This approach can be likened to devising specific experiments in which the observed agent behavior is described. An important difference, though, is that we formally prove the observations. Although one may expect real tasks to be more complex than those investigated here, it still appears beneficial to have concrete insights about well-defined circumstances.
    Possibly, real-world behaviors can be understood as a mixture of formally described behaviors.
Below, we summarize the insights of our study in an informal manner.
\begin{theoremexplore}[Theorem~\ref{theo:explore} and Theorem~\ref{theo:sprint}] 
    When exploring, the agent wields a global viewpoint where it can reason about different reward magnitudes in the task. The agent could compute a height-map of values.
    To elaborate that intuition, we have considered deterministic tasks, where the agent always sees the same successor state for each state-action pair. We additionally assume that any state can be reached from any other state. For each state $\st$, reward quantities become less important when they are remote from $\st$.
    We show the following: by repeatedly trying all state-action pairs (in exploratory fashion), the agent learns for each state which rewards have the best quantity-versus-distance trade-off.
    Subsequently, by choosing actions to maximize value, the agent continuously moves to the highest reward as soon as possible. This insight shows the potential use of \ramp\ as a behavior optimizer.
\end{theoremexplore}

\begin{theoremgreedy}[Theorem~\ref{theo:greedy}]
    When constantly choosing actions to maximize reward, i.e., in a greedy approach, the agent has a local viewpoint restricted to measuring progress along a path. Here, the agent does not care about all rewards, just about finding one reward. This is useful for navigational tasks, even in abstract state spaces.
    To elaborate that intuition, we have considered nondeterministic tasks, where the successor state resulting from a state-action pair could vary.
    We make the relaxing assumption that the state space can be viewed as a stack of layers, where the bottom layer contains reward, and where the states at each layer can move robustly into a deeper layer (but without knowing the precise successor).
    We show the following: by constantly choosing actions to maximize value, the agent eventually learns to completely avoid cycles without reward. Phrased differently, eventually, whenever the agent walks in a cycle, the cycle is broken by reward (no matter how small the cycle is).
    This insight shows that \ramp\ can keep navigating to reward, even in tasks with a degree of unpredictability.
\end{theoremgreedy}       

The above insights apply to many tasks, ranging from navigation on maps to finding rewarding strategies in abstract state spaces.
Interestingly, \ramp\ appears easy to configure. The theorems work for any step size $\step\geq 1$, but in practice one could simply take $\step=1$.
We also introduce a parameter $\epsilon$ to control the degree of exploration, which is common practice in reinforcement learning. 
Other approaches in reinforcement learning often have multiple parameters, e.g., in Q-learning~\Qrefs\ one has the learning rate and the reward discounting factor.
 
In summary, the \ramp\ algorithm has useful characteristics: (1) it is conceptually simple, (2) it is easy to configure, and (3) it has a stable implementation based on natural numbers.
Additionally, insights discussed in this paper suggest that the algorithm might be versatile.

\paragraph*{Outline}
This paper is organized as follows.
Section~\ref{sec:alg} formalizes the \ramp\ algorithm and tasks. Next, Section~\ref{sec:explore} contains the insights about exploration and optimization on deterministic tasks. Section~\ref{sec:greedy} contains the insight about greedy learning on nondeterministic tasks. We conclude with items for further work in Section~\ref{sec:conclusion}.

\section{\ramp\ algorithm}
\label{sec:alg}

In this section, we introduce the \ramp\ algorithm in a general reinforcement learning setting~\cite{sutton-barto_1998}.  In subsequent sections, we analyze the behavior of the algorithm on two classes of applications, one based on continued exploration (Section~\ref{sec:explore}) and the other based on greedy path following (Section~\ref{sec:greedy}).

\subsection{Basic definitions}
Suppose we have a finite set $\states$ of states and a finite set $\actions$ of actions.
We write \[
    \st \edge{\act} \st'
\]
to denote that we can reach state $\st'$ by applying action $\act$ to state $\st$.

As in Q-learning~\Qrefs, we assign a numerical value to each pair $(\st,\act)\in\states\times\actions$. This setup reflects the intuition that states by themselves do not necessarily have meaning, but rather it is the intention, or action, in the state that matters.
In the present paper, natural numbers are sufficient for representing values. Hence, a value function $\V$ is of the form
\[
    \V:\states\times\actions\to\nat.
\]
We define the value of a state $\st$, denoted $\V\get\st$, as the maximum of the values over the actions:
\[
    \V\get\st = \max\set{\V(\st,\act)\mid \act\in\actions}.
\]
The set of actions preferred by $\st$ in $\V$, denoted $\pref\st\V$, contains the actions with the highest value in $\st$:
\[
    \pref\st\V = \set{\act\in\actions\mid\V(\st,\act)=\V\get\st}.
\]
Note that always $\pref\st\V\neq\emptyset$. 

For each pair $(\st,\act)\in\states\times\actions$, we have an immediate reward quantity $\R(\st,\act)\in\nat$ to say how good action $\act$ is in state $\st$.
The reward is given externally to the agent, whereas a value function forms the internal belief system of the agent about expectations (of reward).

As convenience notation, for each integer $x\in\ints$, we define a clamping operation
\[
\clamp x = \begin{cases}
            x & \text{if }x \geq 0;\\
            0 & \text{if }x < 0.
            \end{cases}
\] For any two integers $x$ and $y$, note that $x\leq y$ implies $\clamp{x}\leq\clamp{y}$. In the proofs we also frequently use the equality $\max(\clamp{x},\clamp{y})=\clamp{\max(x,y)}$.%
    \footnote{If $x\leq y$ then $\clamp{x}\leq\clamp{y}$; subsequently, $\max(\clamp{x},\clamp{y})=\clamp{y}=\clamp{\max(x,y)}$. The other case is symmetrical.}

\subsection{Desired properties}\label{sub:desired}
Suppose we have a path
\[
    \stX{1}
    \edge{\actX 1}
    \ldots
    \edge{\actX{n-1}}
    \stX n
    \edge{\actX n}
    \stX{n+1}.
\]
We fix some step size $\step\in\nat$ with $\step\geq1$.
Now, if the agent would repeatedly visit the above path, our intention of the \ramp\ algorithm is to find a value function $\V$ with the following properties: for each $i\in\set{1,\ldots,n}$,
\begin{itemize}
    \item for each $j\in\set{i,\ldots,n}$, we have $\V(\stX i,\actX i) \geq \clamp{\R(\stX j,\actX j) - (j-i+1)\step}$; and,

    \item $\V(\stX i,\actX i) \leq \clamp{\max(\V\get{\stX{i+1}}, \R(\stX i,\actX i)) - \step}$.
\end{itemize} 
The first property says that value should reflect reward expectation, taking into account the time until reward, as illustrated in Figure~\ref{fig:ramp-pillar}.%
    \footnote{If $j=i$ then we subtract $\step$ once from $\R(\stX i,\actX i)$, to indicate that action $\actX i$ should first be executed at state $\stX i$ before the reward is given.}
Rewards can be surpassed by value expectations for larger rewards, as illustrated in Figure~\ref{fig:ramp-propagate}. 
The first property suggests a mechanism for increasing value, to propagate reward information backwards in time. 

The second property says that local maxima can be sustained only by reward; other local maxima should be gradually removed, as illustrated in Figure~\ref{fig:ramp-violation}. We might refer to local value maxima without reward as violating values.    
The second property suggests a mechanism for subtracting value, in order to maintain the ramp shape. 
    
\begin{remark}[Note on violating values]
    In absence of reward, the only way to prevent nonzero values from being labeled as violating, is to have an infinite ramp of increasingly larger values. But there are only a finite number of states, so eventually the ramp should meet true reward.
    \qed
\end{remark}

\begin{figure}
\begin{center}
    \includegraphics[width=0.6\textwidth]{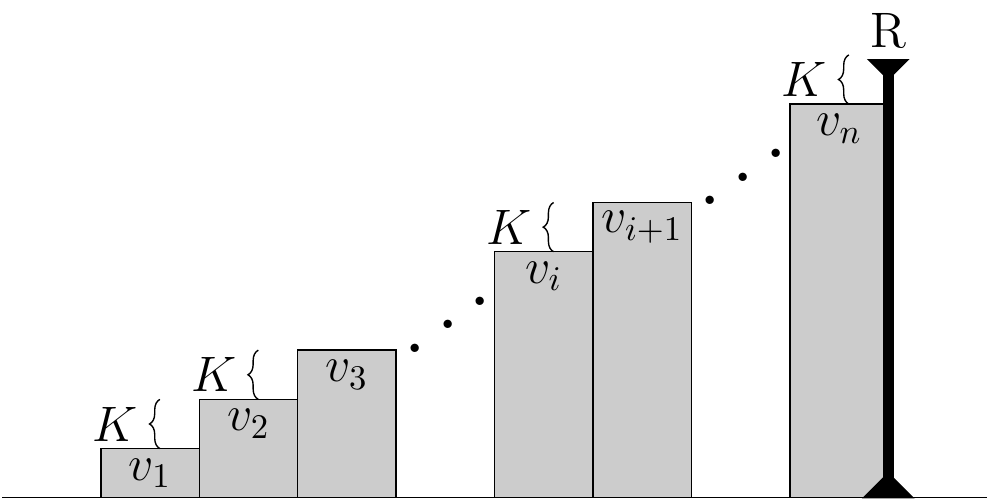}
\end{center}
\caption{Depiction of a value ramp. The agent is following a path of $n$ states, whose values $v_1,\ldots,v_n$ (in the encountered value functions) form a ramp. For graphical simplicity, there is only one nonzero reward, at the end. The height of the reward ``pillar'' reflects the actual reward quantity.}
\label{fig:ramp-pillar}
\end{figure}

\begin{figure}
\begin{center}
    \includegraphics[width=\textwidth]{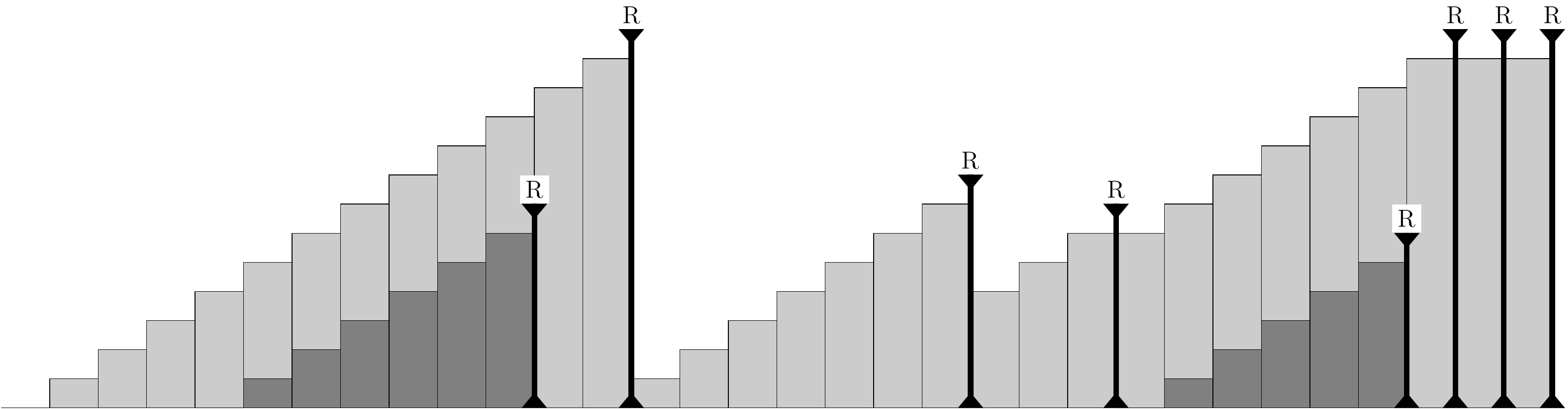}
\end{center}
\caption{Rewards can be surpassed by higher value expectations. We have given a darker shade to the ramps of surpassed rewards. Also, note that in principle the same reward quantity could be repeated in subsequent time steps (suggested at the end of this figure).}
\label{fig:ramp-propagate}
\end{figure}

\begin{figure}
\begin{center}
    \includegraphics[width=0.8\textwidth]{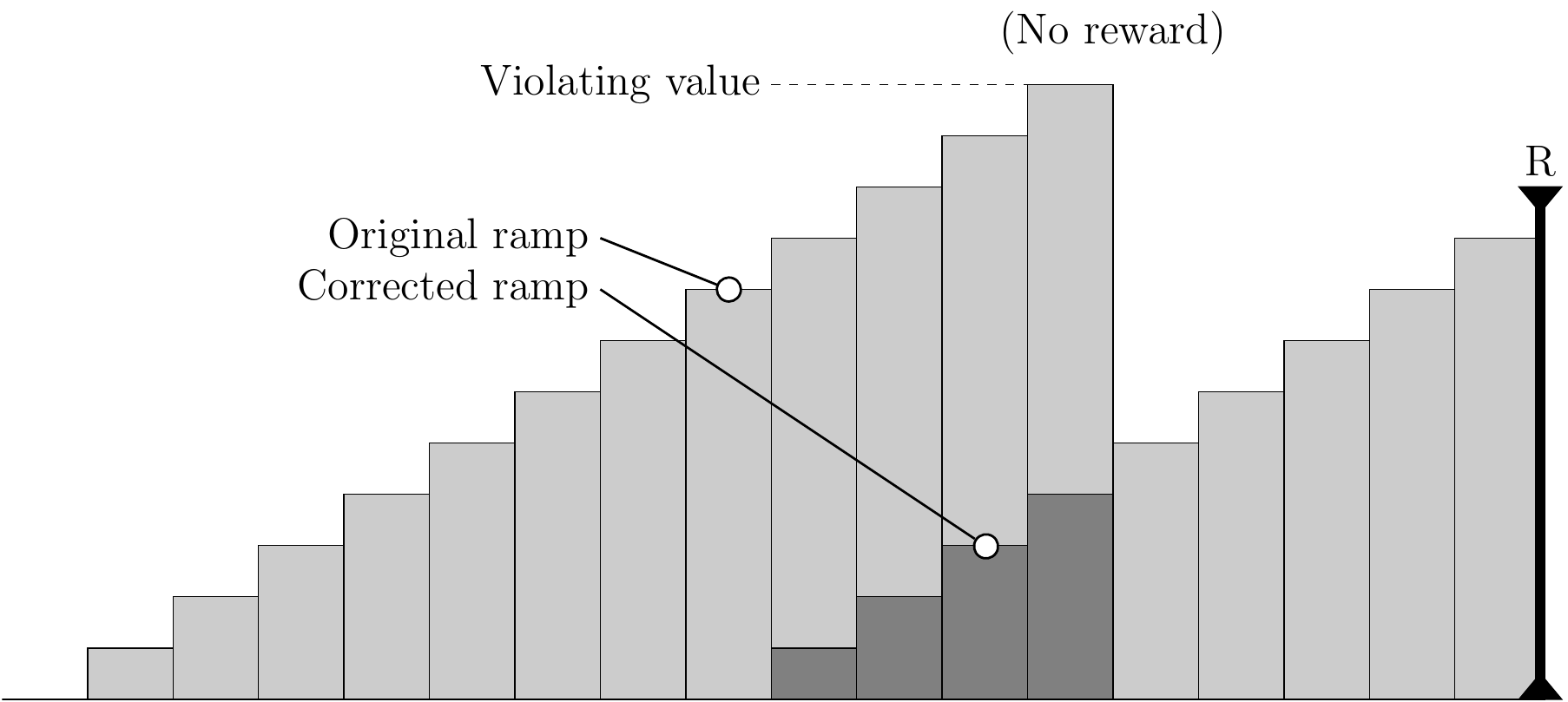}
\end{center}
\caption{To obtain ramps in alignment with reward, we should remove local maxima without reward, called violations.}
\label{fig:ramp-violation}
\end{figure}

\begin{remark}[Outlook]
    A value ramp reflects some ideal value function that, likely, can only be obtained under the right circumstances. 
    Moreover, it might be difficult to describe properties that are both interesting and sufficiently general, because there are widely different kinds of tasks (or environments) upon which an agent could operate. 
    For these reasons, in Section~\ref{sec:explore} and Section~\ref{sec:greedy} we provide more detailed insights for specific classes of tasks. This could provide an initial foundation for understanding the value ramp principle.
    \qed
\end{remark}

\subsection{Concrete algorithm}\label{sub:alg}
A learning rule is a function that produces a value change (as an integer) when given a triple $(v,v',r)\in\nat\times\nat\times\nat$, where $v$ is the value of the current state, $v'$ is the value of the next state, and $r$ is the reward quantity observed during the transition from the current state to the next state.
The desired properties from Section~\ref{sub:desired} inspire a concrete learning rule $\rl$, where $\step\geq 1$, defined for each triple $(v,v',r)$ as
\begin{equation}\label{eq:ramp}
    \rl(v,v',r) = \max(v',r)-\step-v.
\end{equation}
The proposed value change could be either strictly positive, zero, or strictly negative.

\begin{remark}[Usage of state value]
    Recall that state value is defined as the maximum over state-action values. There are multiple reasons for using state values to compute the update in Equation~\eqref{eq:ramp} instead of using bare state-action values. First, using state value appears more biologically plausible, since the brain likely assigns one (global) value to each observed state~\cite{potjans_2011,fremaux_2013}. When moving from state to state, the global value could be the aggregate of detailed state-action values. 
    Second, if we try to use bare state-action values instead, there is no natural mechanism to select which action value to associate with an observed successor state; causing us to default to the highest action value for instance.
    \qed
\end{remark}

Based on Equation~\eqref{eq:ramp}, we now formalize how the agent updates value through experience. A configuration of the system is a pair $(\st,\V)$ saying that we are in state $\st$ and that the current value function is $\V$.
\begin{definition}[Transition]\label{def:trans}
A transition is a quintuple $(\stX 1,\VX 1,\actX 1,\stX 2,\VX 2)$, where $\cnftupX 1$ and $\cnftupX 2$ are configurations; $\stX 2$ is reached from $\stX 1$ through action $\actX 1$; and, $\VX 2$ is defined, for each $(\st,\act)\in\states\times\actions$, as
\[
    \VX 2(\st,\act) = 
        \begin{cases}
            \clamp{\VX1(\stX 1,\actX 1) + \rl\big(\VX 1\get{\stX 1}, \VX 1\get{\stX 2},\R(\stX 1,\actX 1)\big)} & \text{if }(\st,\act)=(\stX 1,\actX 1);\\
            
            \VX 1(\st,\act) & \text{otherwise}.
        \end{cases}
\]
\end{definition}
\noindent We emphasize that during the transition, the value of the successor state $\stX 2$ is based on the old value function $\VX 1$.
We also write the transition as 
\[
    \cnftupX 1
    \jump{\actX 1,\, \stX 2}
    \cnftupX 2.
\]

Algorithm~\ref{alg:update} gives pseudocode for performing transitions.
The full \ramp\ algorithm, shown in Algorithm~\ref{alg:full}, repeatedly generates transitions.    
Note that there is a probability $\epsilon$ at each time step of choosing from all actions, instead of choosing from the actions with highest value.    
If $\epsilon=0$ then the algorithm follows the best known path to reward, without further exploration; in that case, we say that the algorithm is greedy.

\begin{algorithm}[H]
\captionsetup{singlelinecheck=off}
\caption[\ramp\ update function]{
 \ramp\ update function.\\    
    Input: 
    
    \quad$\bullet$ $(\st,\V)$: current configuration
    
    \quad$\bullet$ $\act$: performed action
    
    \quad$\bullet$ $\st'$: observed successor state
}
\label{alg:update}
\begin{algorithmic}[1]
    \Function{Update}{$\st,\V,\act,\st'$}       
    \State $d$ := $\max(\V\get{\st'},\R(\st,\act)) - \step - \V\get\st$
    \State Define $\V'$ as $\V$ but set $\V'(\st,\act):=\clamp{\V(\st,\act)+d}$
    \State\Return $\V'$
    \EndFunction
\end{algorithmic}
\end{algorithm}

\begin{algorithm}[h]
\captionsetup{singlelinecheck=off}
\caption[\ramp\ algorithm]{
    \ramp\ algorithm.\\    
    Input:
    
    \quad$\bullet$ $\VX 1$: initial value function (random)
    
    \quad$\bullet$ $\stX 1$: initial start state
    
    \quad$\bullet$ $\step$: step size with $\step\geq 1$
    
    \quad$\bullet$ $\epsilon$: probability in $[0,1]$
}
\label{alg:full}
\begin{algorithmic}[1]    
    \Procedure{\ramp}{}    
    \State $\V$ := $\VX 1$
    \State $\st$ := $\stX 1$  
    
    \Repeat                
        \State $\act$ := choose from $\pref\st\V$
        \State With probability $\epsilon$, do $\act$ := choose from $\actions$
        \State $\st'$ := some state resulting from $(\st,\act)$
        \State $\V'$ := Update($\st,\V,\act,\st'$)\Comment{See Algorithm~\ref{alg:update}}
        
        \State $\st$ := $\st'$
        \State $\V$ := $\V'$
    \Until{Interrupt}
    \EndProcedure
\end{algorithmic}
\end{algorithm}

\begin{remark}[Natural numbers]\label{remark:natural-numbers}
    Since we use a discrete time framework,
    natural numbers are a perfect fit for representing the steps of a ramp.
    Practical implementations of natural numbers are robust under addition and subtraction.
    Also, as is commonly known, a string of $n$ bits can represent any natural number in the range $\set{0,\ldots,2^n-1}$. Modest storage requirements can therefore accommodate huge values. That might be useful for learning (very) long paths in navigation problems (see Section~\ref{sec:greedy}).
    
    Many approaches in reinforcement learning are based on rational numbers~\cite{sutton-barto_1998}. Approximation errors arise when rational numbers are implemented as floating point numbers, inspiring the development of new digital number formats~\cite{gustafson_2015}.  By using natural numbers, \ramp\ avoids approximation errors.
    \qed
\end{remark}

\begin{remark}[Parameters]    
    A first parameter of \ramp\ is the step size $\step$. We develop the formal insights for a general $\step\geq 1$ (e.g.\ Theorem~\ref{theo:explore} and Theorem~\ref{theo:greedy}). In practice, it might be useful to simply set $\step=1$, because then rewards generate longer ramps, allowing the agent to learn longer strategies to reward.    
    Second, the exploration probability $\epsilon$ in Algorithm~\ref{alg:full} is a standard principle in reinforcement learning~\cite{sutton-barto_1998}.    
    
    \ramp\ has no other parameters besides $\step$ and $\epsilon$.
    In comparison, the general framework of reinforcement learning introduces an $\alpha$ and $\gamma$ parameter~\cite{sutton-barto_1998}. This applies in particular to Q-learning~\Qrefs, which has famous applications~\cite{mnih_2015}. Parameter $\alpha$ can be understood as the learning rate. Parameter $\gamma$, representing reward-discounting, is slightly less intuitive and could require detailed knowledge of the task domain in order to produce desired agent behavior~\cite{schwartz_1993}.
    
    \ramp\ replaces the $\alpha$ parameter by a fast value update mechanism that immediately establishes a (local) ramp shape on encountered states. An item for further work is to slow down the value update in the context of biological plausibility (see Section~\ref{sec:conclusion}).
    
    \ramp\ dismisses the $\gamma$ parameter by directly using reward quantities to define the height of ramps. For each reward, the ramp shape establishes a natural trade-off between the quantity of a reward and the time to get there. An item for further work is to investigate in more detail the relationship between reward discounting and the value ramp principle (see Section~\ref{sec:conclusion}).
    \qed
\end{remark}

\begin{remark}[Fixing $\step$]
    All definitions and results hold for any $\step\geq 1$. But for notational simplicity, we choose not to mention the symbol ``$\step$'' explicitly in the notations. We assume that throughout the rest of the paper, some particular $\step\geq 1$ is fixed.
    \qed
\end{remark}

\subsection{Tasks formalized}
\label{sub:task}

We want to describe the effect of \ramp\ on tasks. 
Formally, a task $\task$ is a tuple $\tasktup$, where
\begin{itemize}
    \item $\states$ is a finite, nonempty, set of states;
    \item $\startstates\subseteq\states$ is a set of start states;
    \item $\actions$ is a finite, nonempty, set of actions;
    \item $\tr:\states\times\actions\to\powset\states$ is the transition function, where $\tr(\st,\act)\neq\emptyset$ for each $(\st,\act)\in\states\times\actions$;%
        \footnote{For a set $X$, the symbol $\powset{X}$ denotes the powerset of $X$, which is the set of all subsets of $X$.}
    and,
    \item $\R:\states\times\actions\to\nat$ is the reward function.
\end{itemize}
For any $(\st,\act,\st')\in\states\times\actions\times\states$, we write $\st\edge{\act}\st'$ if $\st'\in\tr(\st,\act)$.

A run of \ramp\ on $\task$ is an infinite sequence of transitions, where the target configuration of each transition is the source configuration of the next transition,
\[
    \cnftupX 1
        \jump{\actX 1,\,\stX 2}
    \cnftupX 2
        \jump{\actX 2,\,\stX 3}
    \cnftupX 3
        \jump{\actX 3,\,\stX 4}
    \ldots,
\]
where $\stX 1\in\startstates$, and for each $i\geq 1$ we have $\stX{i+1}\in\tr(\stX i,\actX i)$.
For each $i\geq 1$, we recall that $\VX{i+1}$ is uniquely determined by $(\stX i,\VX i,\actX i,\stX{i+1})$ (see Section~\ref{sub:alg}).
We allow $\VX 1$ to be a random value function.
We emphasize that the successor state of each transition is restricted by function $\tr$.

The following lemma is a general observation that we will use frequently in proofs:
\begin{lemma}\label{lem:finite-cnf}
    For any task, in any infinite transition sequence, there are only a finite number of possible configurations.
\end{lemma}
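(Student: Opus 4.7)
The plan is to show that along any run the value functions are uniformly bounded, from which the finiteness of configurations follows immediately since $S$ is finite and $S\times A$ is finite.

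Concretely, I would proceed as follows. Let $R_{\max}=\max\{\R(\st,\act)\mid (\st,\act)\in\states\times\actions\}$; this exists and is finite because $\states\times\actions$ is finite. Given a run with initial value function $\VX 1$, let $M_1=\max\{\VX 1(\st,\act)\mid (\st,\act)\in\states\times\actions\}$, again finite by finiteness of $\states\times\actions$. Set $B=\max(M_1,R_{\max})$. I would then prove by induction on $i\geq 1$ that $\VX i(\st,\act)\leq B$ for every $(\st,\act)\in\states\times\actions$.

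The base case is immediate by the definition of $M_1$. For the inductive step, consider a transition $\cnftupX i\jump{\actX i,\stX{i+1}}\cnftupX{i+1}$. By Definition~\ref{def:trans}, $\VX{i+1}$ agrees with $\VX i$ on all pairs except $(\stX i,\actX i)$, so the induction hypothesis handles every other pair directly. For the updated entry, the crucial observation is that $\VX i(\stX i,\actX i)\leq \VX i\get{\stX i}$ by definition of state value, hence
\[
    \VX i(\stX i,\actX i)+\max(\VX i\get{\stX{i+1}},\R(\stX i,\actX i))-\step-\VX i\get{\stX i}\leq \max(\VX i\get{\stX{i+1}},\R(\stX i,\actX i))-\step.
\]
Since $\step\geq 1$, clamping, and using the induction hypothesis on $\VX i\get{\stX{i+1}}\leq B$ together with $R(\stX i,\actX i)\leq R_{\max}\leq B$, this yields $\VX{i+1}(\stX i,\actX i)\leq B$, closing the induction.

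Once the bound is established, the conclusion is essentially counting. There are at most $(B+1)^{|\states|\cdot|\actions|}$ functions $\V:\states\times\actions\to\{0,\ldots,B\}$, and only $|\states|$ possible current states, so the set of configurations that can appear along the run is bounded by $|\states|\cdot (B+1)^{|\states|\cdot|\actions|}$, which is finite. I expect the main (and really the only) technical point to be the inductive value bound, specifically the use of $\VX i(\stX i,\actX i)\leq \VX i\get{\stX i}$ to cancel the $V$-terms in the update rule; everything else is bookkeeping.
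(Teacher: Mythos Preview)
Your proposal is correct and follows essentially the same route as the paper: define a bound $B=\max(R_{\max},M_1)$ and show it is preserved along the run, using the key inequality $\VX i(\stX i,\actX i)\leq\VX i\get{\stX i}$ to cancel terms in the update rule. The paper phrases this as ``the ceiling $\ceil\V=\max(R_{\max},\max_{\st,\act}\V(\st,\act))$ is non-increasing along transitions'' (proved as a separate lemma in the appendix with the same computation you give), but the content is identical.
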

\begin{proof}
    Let $\tasktup$ be the task. For a value function $\V$ we define $\ceil\V=\max(m_1,m_2)$ where
    \begin{align*}
        m_1 &= \max\set{
        \R(\st,\act)\mid(\st,\act)\in\states\times\actions}\text{; and,}\\
        m_2 &= \max\set{\V(\st,\act)\mid(\st,\act)\in\states\times\actions}.
    \end{align*}
    Intuitively, $\ceil\V$ is the highest quantity accessible by the agent; this quantity is either defined by reward or by the value function itself.
    For each transition 
    \[
        \cnftup
            \jump{\act,\st'}
        (\st',\V'),
    \]
    we can show that $\ceil\V\geq\ceil{\V'}$ (see Appendix~\ref{lem:finite-cnf--PROOF}).
    By transitivity, for every infinite transition sequence, the ceiling quantity of the first value function is an upper bound on the ceiling quantity of all subsequent value functions. So, the infinite transition sequence has a finite number of value functions because (1) there is an upper bound on the values, (2) value functions are composed of natural numbers, and (3) there are a finite number of states and actions. Therefore there are a finite number of configurations.
\end{proof}

\begin{remark}[Perception and finiteness]
    The task structure represents how the agent perceives its environment. The agent perception is in general the result of various processing steps applied to sensory information. Agent perception is not the focus of this paper. Although the environment in which the agent resides could have infinitely many states, we assume that the agent has a limited conceptual framework consisting of finitely many states. We still allow many states though.
    The finiteness of the state space is important for the convergence proofs of this paper; more precisely, the assumption is used in the general Lemma~\ref{lem:finite-cnf}. \qed
\end{remark}

\subsubsection{Kinds of run: exploring versus greedy}
\label{sub:runs}

Hereafter, we restrict attention to two kinds of run. 

\paragraph*{Exploring}
First, we say that a run is exploring if the following holds: if a configuration $\cnftup$ occurs infinitely often in the run, then for each $\act\in\actions$ and each $\st'\in\tr(\st,\act)$, there are infinitely many transitions
\[
    \transX i,
\]
where $(\stX i,\VX i,\actX i,\stX{i+1})=(\st,\V,\act,\st')$.
Intuitively, an exploring run contains a fairness assumption to ensure that the system explores infinitely often those options that are infinitely often available. 

\paragraph*{Greedy}
Second, we say that a run is greedy if the following holds:
\begin{enumerate}
    \item each transition 
        $\cnftup
            \jump{\act,\,\st'}
        (\st',\V')$ in the run satisfies $\act\in\pref\st\V$; and,
        
    \item if a configuration $\cnftup$ occurs infinitely often in the run, then for each $\act\in\pref\st\V$ and each $\st'\in\tr(\st,\act)$, there are infinitely many transitions
    \[
        \transX i,
    \]
    where $(\stX i,\VX i,\actX i,\stX{i+1})=(\st,\V,\act,\st')$.
\end{enumerate}
In a greedy run, we always select a preferred action, but the system can not reliably choose only one action from equally-preferred actions; moreover, as a fairness assumption, the system can not indefinitely postpone witnessing a certain successor state.

\begin{remark}[Relationship with Algorithm~\ref{alg:full}]
    In Algorithm~\ref{alg:full}, we generate exploring runs by setting $\epsilon> 0$. We will not use the specific $\epsilon$ value to delineate strict subclasses of exploring runs whose exploration rate satisfies $\epsilon$.
    In Algorithm~\ref{alg:full}, we generate greedy runs by setting $\epsilon=0$.
    
    When running Algorithm~\ref{alg:full} on a task $\task=\tasktup$, we assume that if the same state-action pair $(\st,\act)$ is executed infinitely often then each successor state in $\tr(\st,\act)$ is infinitely often the result of $(\st,\act)$.
    \qed
\end{remark}

\section{Exploration on deterministic tasks}
\label{sec:explore}

In a first study, we would like to show optimal value estimation of \ramp\ on at least some (well-behaved) class of tasks. Thereto we consider tasks that are both deterministic and connected, abbreviated \DC.
In Section~\ref{sub:opt} we show that exploring runs learn optimal values on \DC\ tasks. In Section~\ref{sub:sprint}, we subsequently show that when the agent uses the optimal values to select actions, the agent follows so-called optimal paths. In Section~\ref{sub:shortest}, we apply the results to shortest path following.

\subsection{Optimal value estimation}\label{sub:opt}
We first define a few auxiliary concepts.
Let $\task=\tasktup$ be a task. To improve readability, we omit symbol $\task$ from the notations below where possible; it will be clear from the context which task is meant.

\paragraph*{\DC\ tasks}
We say that $\task$ is deterministic if $\ssize{\tr(\st,\act)}=1$ for each $(\st,\act)\in\states\times\actions$. 
Next, we say that $\task$ is connected if for each $(\st,\st')\in\states\times\states$, there is a path 
\[
    \stX{1}
    \edge{\actX 1}
    \ldots
    \edge{\actX{n-1}}
    \stX n
\]
with $\stX 1=\st$ and $\stX{n}=\st'$. Connectedness means that for each state we can go to any other state.
We say that a task is \DC\ if the task is both deterministic and connected.

\paragraph*{Consistency}
On deterministic tasks, it will be interesting to observe eventual stability of the value function. 
In that context, we say that a value function $\V$ is consistent if it satisfies: 
    $\forall\st\in\states$, $\forall\act\in\pref\st\V$, denoting $\Det\st\act{\st'}$,
        \[
        \V\get\st = \clamp{\max(\V\get{\st'},\R(\st,\act)) - \step}.
        \]%
Intuitively, this means that the agent knows exactly what value to expect when following preferred actions. We will see below in the context of Corollary~\ref{corol:fixed} that consistency eventually halts the learning process on \DC\ tasks.

\paragraph*{Optimal value}
Next, we define a notion related to shortest paths. 
Let $\st$ be a state. An action-path for $\st$ is a sequence 
\[
    p=
    (\stX{1},\actX 1),
    \ldots,
    (\stX{n-1},\actX{n-1}),
    (\stX n,\actX n),
\]
of state-action pairs, where $\stX 1=\st$, and $\stX{i+1}\in\tr(\stX i,\actX i)$ for each $i\in\set{1,\ldots,n-1}$. We allow $n=1$.
We define the value of $p$, denoted $\pval p$, as
\begin{equation}
    \pval p = \max\set{\clamp{\R(\stX i,\actX i)-i\step}\mid i\in\set{1,\ldots,n}}.\label{eq:pval}
\end{equation}
This value expresses a trade-off between time and reward amplitude. For example, high rewards could become less important than lower rewards if the time distance is too long.
The concept is illustrated in Figure~\ref{fig:opt-val}.
Note that always $\pval p\geq 0$ due to the clamping operation.

\begin{figure}
    \begin{center}
        \includegraphics[width=0.9\textwidth]{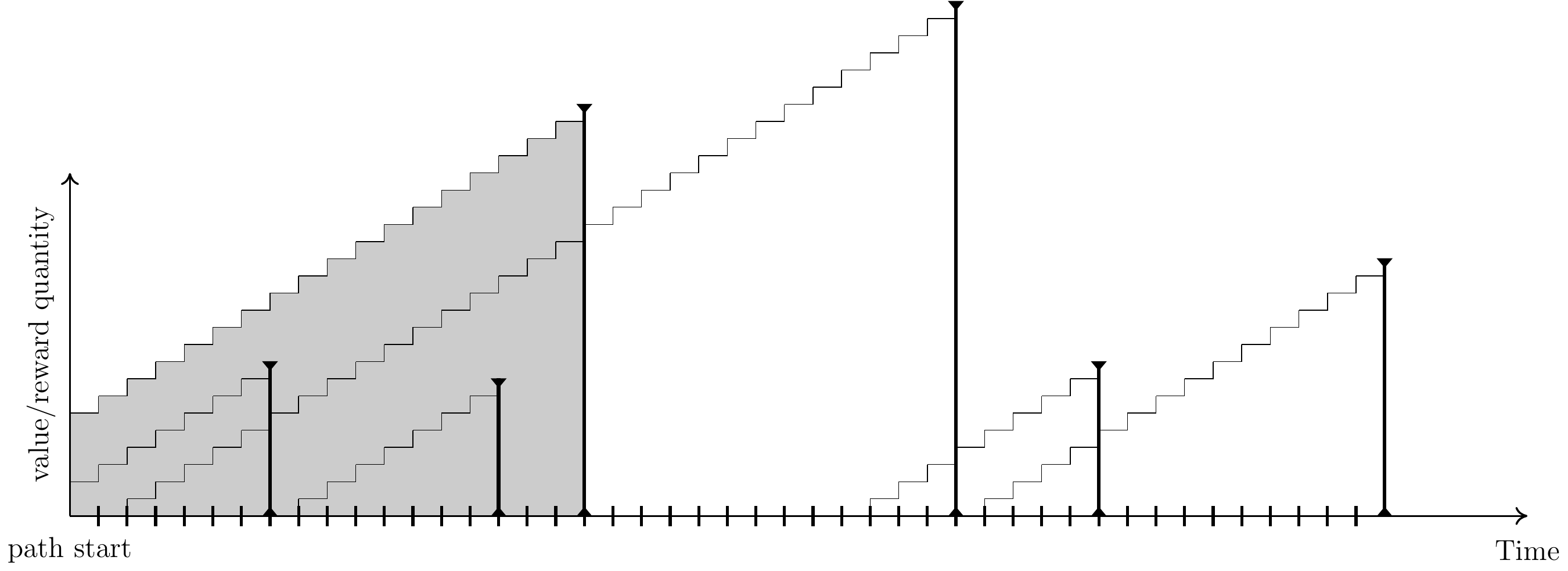}
    \end{center}    
    \caption{Illustration of action-path value. Each tick mark on the horizontal axis represents a state-action pair of the action-path. To keep the figure simple, only some of the state-action pairs have a strictly positive reward, leading to a value-ramp. We have shaded the value-ramp of the state-action pair that determines the path value. The other state-action pairs project less reward expectation towards the beginning of the path.}
    \label{fig:opt-val}
\end{figure}

If the action-path $p$ contains a cycle of states then there is always an action-path $p'$ without such cycles and with $\pval{p'}\geq\pval{p}$. To see this, we can do the following steps to transform $p$ into a cycle-free action-path without decreasing the value:
\begin{enumerate}
    \item We select some $i\in\set{1,\ldots,n}$ with $\pval p=\clamp{\R(\stX i,\actX i)-i\step}$.
    
    \item\label{enu:prune-suffix} We remove all pairs $(\stX j,\actX j)$ with $j > i$.
    
    \item In the remaining path, we systematically replace all cycles $(\st,\act),\ldots,(\st,\act')$ (with repeated state $\st$) by the single step $(\st,\act')$.
    Note that pair $(\stX i,\actX i)$ is preserved because this pair comes last, as caused by step~\ref{enu:prune-suffix}. As a result, the reward quantity $\R(\stX i,\actX i)$ can only come closer to the beginning of the path.
\end{enumerate}
Let $\explore\st$ be the set of all cycle-free action-paths starting at state $\st$.
We define the optimal value of $\st$, denoted $\optval\st$, as
\[
    \optval\st = \max\set{\pval p\mid p\in\explore\st},
\]
i.e., the optimal value is the largest value across the (cycle-free) action-paths. 
The case $\optval\st=0$ occurs when all reward is too remote for $\st$.

We say that a value function $\V$ is optimal if it satisfies: $\forall\st\in\states$, 
\[
\V\get\st = \optval\st.
\] 

We are now ready to state the optimization result:
\begin{theorem}[Optimization]\label{theo:explore}
    For each \DC\ task, 
    in each exploring run, 
    eventually every value function is both optimal and consistent.
\end{theorem}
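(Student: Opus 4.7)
The plan is to first isolate a recurrent tail of the run using Lemma~\ref{lem:finite-cnf}. Since the run is infinite but visits only finitely many configurations, the set $\mathcal{C}_\infty$ of configurations occurring infinitely often is finite and nonempty, and there is a time $T$ past which the run stays inside $\mathcal{C}_\infty$. Determinism of the \DC\ task makes each transition $\cnftup \jump{\act,\,\st'} (\st',\V')$ uniquely determined by $\cnftup$ and $\act$, so exploring-fairness forces every outgoing transition of every $\cnftup\in\mathcal{C}_\infty$ to be taken infinitely often, which in turn puts the targets back into $\mathcal{C}_\infty$. Combined with connectedness, this makes every state appear as the first coordinate of some recurring configuration past time $T$.

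\textbf{Stabilization and consistency.} The main substantive step is to show that all configurations in $\mathcal{C}_\infty$ share a single value function $V^*$. I would argue that every transition executed in the tail is a no-op on $\V$: if $\V'(\st,\act)\neq\V(\st,\act)$, then the same situation $(\st,\V,\act)$ is revisited infinitely often and a suitable monotone potential strictly decreases at each such non-trivial update, contradicting the recurrence of $\mathcal{C}_\infty$. A natural candidate for the potential is the ceiling quantity from the proof of Lemma~\ref{lem:finite-cnf}, refined by a lexicographic tie-breaker such as the sorted multiset of values in $\V$; verifying the strict decrease requires a case analysis on the sign of the increment $d=\max(\V\get{\st'},\R(\st,\act))-\step-\V\get\st$ and on where $\clamp{}$ clips. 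Once $V^*$ is singled out, the no-op identity at a preferred $\act\in\pref\st{V^*}$ (so that $V^*(\st,\act)=V^*\get\st$), together with $\Det\st\act{\st'}$, yields the consistency equation $V^*\get\st=\clamp{\max(V^*\get{\st'},\R(\st,\act))-\step}$.

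\textbf{Optimality.} I would then identify $V^*\get\st$ with $\optval\st$ by two inequalities. For the lower bound, fix any cycle-free action-path $p=(\stX1,\actX1),\ldots,(\stX n,\actX n)$ starting at $\st$; exploring-fairness and connectedness realize the required sequence of transitions infinitely often in the tail. Marching backwards along $p$ and applying Definition~\ref{def:trans} together with the monotonicity of $\clamp{}$ yields $V^*(\st,\actX 1)\geq\clamp{\R(\stX i,\actX i)-i\step}$ for every $i$, hence $V^*\get\st\geq\pval p$, and maximizing over $p\in\explore\st$ gives $V^*\get\st\geq\optval\st$. For the upper bound, I would unroll a witnessing path from $\st$ by repeatedly picking a preferred action; consistency forces each unrolled step to peel off exactly one $\step$ until an actual reward term matching $V^*\get\st$ appears, producing an action-path $p$ with $\pval p\geq V^*\get\st$. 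Any cycles along the way can be eliminated without losing value by the reduction preceding the definition of $\optval{}$, so $\optval\st\geq V^*\get\st$.

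\textbf{Main obstacle.} I expect the stabilization step to be the hardest. The update rule~\eqref{eq:ramp} is not monotone in the componentwise order on value functions — an update at $(\st,\act)$ may increase that entry while leaving others untouched — so a standard contraction argument does not directly apply. Crafting a potential that strictly decreases on every non-trivial update, with a careful case analysis on the sign of $d$ and on where clamping triggers, is the main technical challenge.
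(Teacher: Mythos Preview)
Your plan hinges on the stabilization step, and that step is not just ``the hardest'': as written it is a genuine gap. You need a potential $\Phi$ that is (i) bounded below, (ii) non-increasing under \emph{every} update of Definition~\ref{def:trans}, and (iii) strictly decreasing under every non-trivial one. But the update rule~\eqref{eq:ramp} moves $\V(\st,\act)$ up when $d>0$ and down when $d<0$, and both occur in exploring runs. The ceiling from Lemma~\ref{lem:finite-cnf} is only non-increasing, and a lexicographic refinement by the sorted multiset of entries fails for the same reason: a single increase at a non-preferred pair raises the multiset. Without an actual construction of $\Phi$, your argument that $\mathcal{C}_\infty$ consists of a single value function does not go through, and everything downstream (consistency, both optimality inequalities) rests on that fixpoint.

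The paper avoids this obstacle by \emph{not} proving full stabilization first. It introduces an intermediate notion, \emph{validity} ($\V(\st,\act)\le\clamp{\max(\V\get{\st'},\R(\st,\act))-\step}$ for all $(\st,\act)$), and shows that the \emph{highest violating value} $\violmax\V$ is monotone non-increasing along the run and, using exploring-fairness, is eventually driven to~$0$. That is the working potential, but only for this first phase. Once $\V$ is valid, a short calculation gives that state values $\V\get\st$ are monotone non-decreasing from then on; being bounded, they stabilize. The stabilized state values are then shown to be optimal: validity gives $\V\get\st\le\optval\st$, and if some $\V\get\st<\optval\st$ then exploring-fairness forces a strict increase along an optimal action-path, contradicting stabilization. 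Finally, validity plus optimality yield consistency directly. Full stabilization of the entire function $\V$ (your starting point) is only obtained afterwards, as Corollary~\ref{corol:fixed}, and its proof uses optimality and consistency rather than the other way around. If you want to repair your route, the two-phase structure---a violation potential to reach validity, then monotonicity of state values---is what replaces the single global potential you were looking for.
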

\noindent \proofmain{theo:explore--PROOF}
The following corollary provides an additional insight about the learning process on \DC\ tasks:
\begin{corollary}\label{corol:fixed}
    For each \DC\ task, in each exploring run, eventually the value function is no longer changed, i.e., there is a fixpoint on the value function.        
\end{corollary}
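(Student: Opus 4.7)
The plan is to apply Theorem~\ref{theo:explore} to pass to a tail of the run, from some index $T$ onward, along which every $\VX i$ is both optimal and consistent, and then to show that on this tail the sequence $\VX T,\VX{T+1},\ldots$ is pointwise non-increasing. Since values lie in $\nat$ and $\ssize{\states\times\actions}$ is finite, a pointwise non-increasing sequence of such functions must eventually be constant, which is precisely a fixpoint.

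The key step is the inequality: whenever $\V$ is optimal, for every $(\st,\act)\in\states\times\actions$ with $\Det\st\act{\st'}$, the update quantity
\[
d=\max(\V\get{\st'},\R(\st,\act))-\step-\V\get\st
\]
satisfies $d\leq 0$. I would prove this by taking a cycle-free action-path $q$ from $\st'$ witnessing $\optval{\st'}$ and prepending $(\st,\act)$ to obtain an action-path $p$ from $\st$. A direct calculation using $\max(\clamp x,\clamp y)=\clamp{\max(x,y)}$, together with the fact that each reward appearing in $q$ is one step later in $p$, yields $\pval p=\clamp{\max(\R(\st,\act),\optval{\st'})-\step}$. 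The path $p$ may revisit $\st$, but the cycle-elimination procedure described just before the definition of $\explore\st$ produces a cycle-free action-path from $\st$ with at least this value, so $\optval\st\geq\clamp{\max(\R(\st,\act),\V\get{\st'})-\step}$. A brief case split on the sign of $\max(\V\get{\st'},\R(\st,\act))-\step$ then gives $d\leq 0$.

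Given $d\leq 0$, the update rule yields $\V'(\st,\act)=\clamp{\V(\st,\act)+d}\leq\V(\st,\act)$, while $\V'$ agrees with $\V$ on every other coordinate; hence $\VX{i+1}\leq\VX i$ pointwise for every $i\geq T$. Because each coordinate is valued in $\nat$ and there are only finitely many coordinates, every coordinate stabilizes after finitely many updates, and the maximum of these stabilization times witnesses the desired fixpoint.

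The main obstacle I anticipate is the clamp bookkeeping in the key step: expressing $\pval p$ in terms of $\pval q=\optval{\st'}$, and then extracting $d\leq 0$ from $\optval\st\geq\clamp{\max(\R(\st,\act),\V\get{\st'})-\step}$, both require attention to the regimes where the inner expression is non-negative versus negative. Everything else is routine monotonicity on a finite lattice of natural-valued functions.
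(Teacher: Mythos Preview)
Your argument is correct. The key inequality $\optval\st\geq\clamp{\max(\optval{\st'},\R(\st,\act))-\step}$ that you derive is exactly Lemma~\ref{lem:opt} of the paper, and your case split on the sign of $\max(\V\get{\st'},\R(\st,\act))-\step$ cleanly yields $d\leq 0$ for \emph{every} state-action pair once $\V$ is optimal. From there the pointwise monotone-decreasing argument on a finite product of copies of $\nat$ is routine.

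The paper's proof reaches the same conclusion by a slightly more elaborate route: it splits each transition according to whether the executed action is preferred or not. For preferred actions it invokes consistency (via Lemma~\ref{lem:consistent}) to show the value function is literally unchanged; for non-preferred actions it runs the same $d\leq 0$ computation you do (Property~\ref{prop:final}). This extra case split buys the paper the intermediate structural fact that the preferred-action sets themselves stabilize along the tail, but for the bare fixpoint statement your uniform argument is shorter and, notably, uses only optimality of $\V$ --- consistency is never needed. So your decomposition is a genuine streamlining of the paper's, at the cost of not isolating the stability of $\pref\st\V$ as a byproduct.
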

\noindent \proofmain{corol:fixed--PROOF} 
\begin{example}[Example simulation]\label{ex:dc-sim}
    To illustrate Theorem~\ref{theo:explore}, we have simulated the \ramp\ algorithm on a 2D grid world that is both deterministic and connected (\DC).
    Each cell $(x,y)\in\nat\times\nat$ inside the boundaries of the map is a distinct state. 
    There is a fixed start cell. At each cell, there are five deterministic actions available to the agent: left, right, up, down, and finish. The agent can not move through wall cells, serving as obstacles. Some cells are marked as goal cells. By performing the finish action in a goal cell $g$, the agent receives a fixed reward quantity associated with goal cell $g$, and the agent is subsequently sent back to the fixed start cell. In a non-goal cell, the finish action neither gives reward and neither moves the agent to another cell.
    The agent learns the values of all cell-action pairs. Figure~\ref{fig:dc-sim} shows for three different maps how value is  propagated from the goal cells across the map. Eventually, the cell values visibly stabilize; one could imagine that this is the point after which the value function (1) is optimal and consistent (Theorem~\ref{theo:explore}) and (2) no longer changes (Corollary~\ref{corol:fixed}).\qed
\end{example}

\begin{figure}
\newcommand{\spaceline}{ & & }
\newcommand{\wOne}{0.35\textwidth}
\begin{center}
\begin{tabular}{|c|c|c|}
    \hline
    (A) setup & (B) midway & (C) final\\    
    \hline

    \includegraphics[width=\wOne]{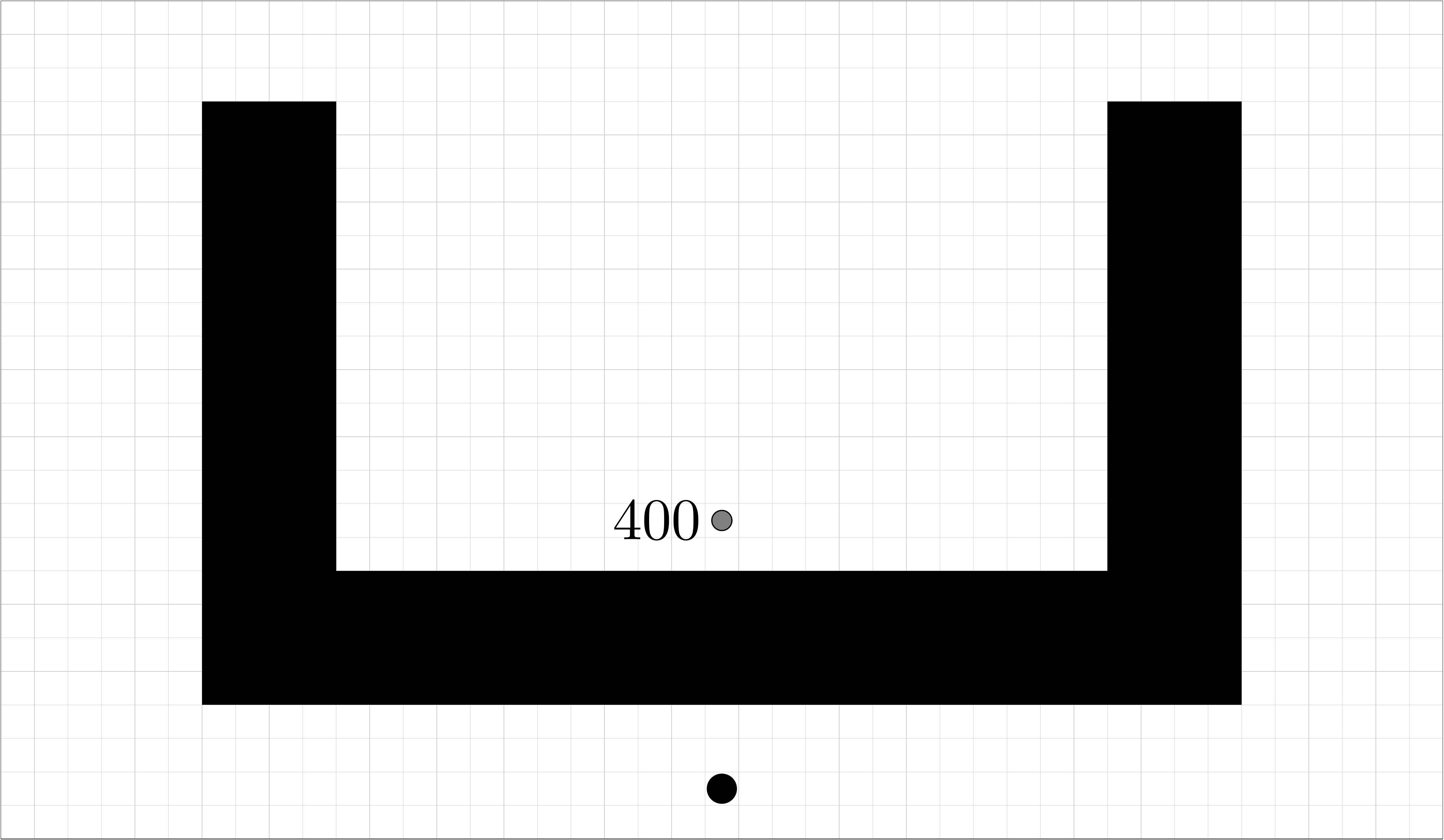}&
    \includegraphics[width=\wOne]{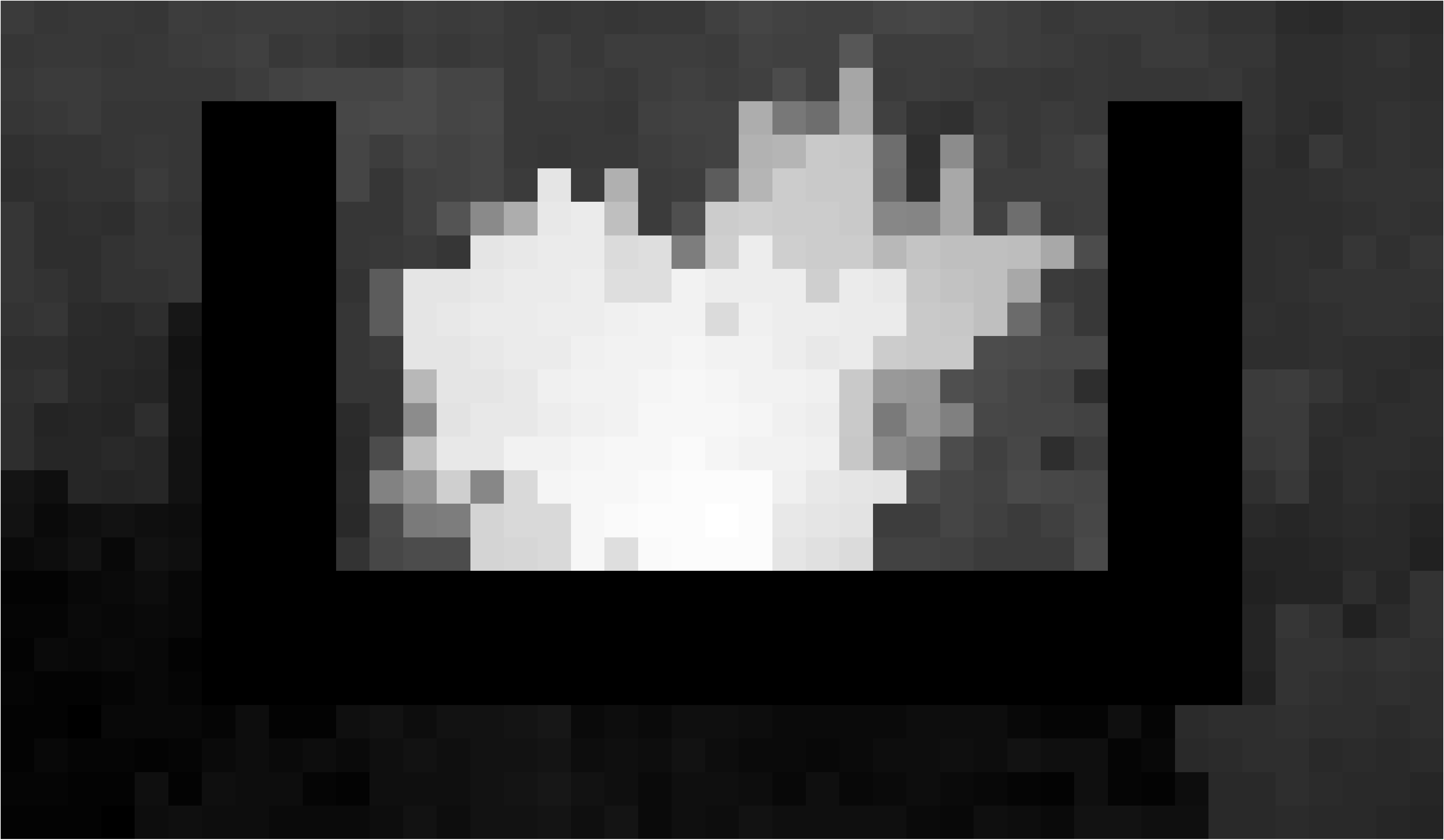} &
    \includegraphics[width=\wOne]{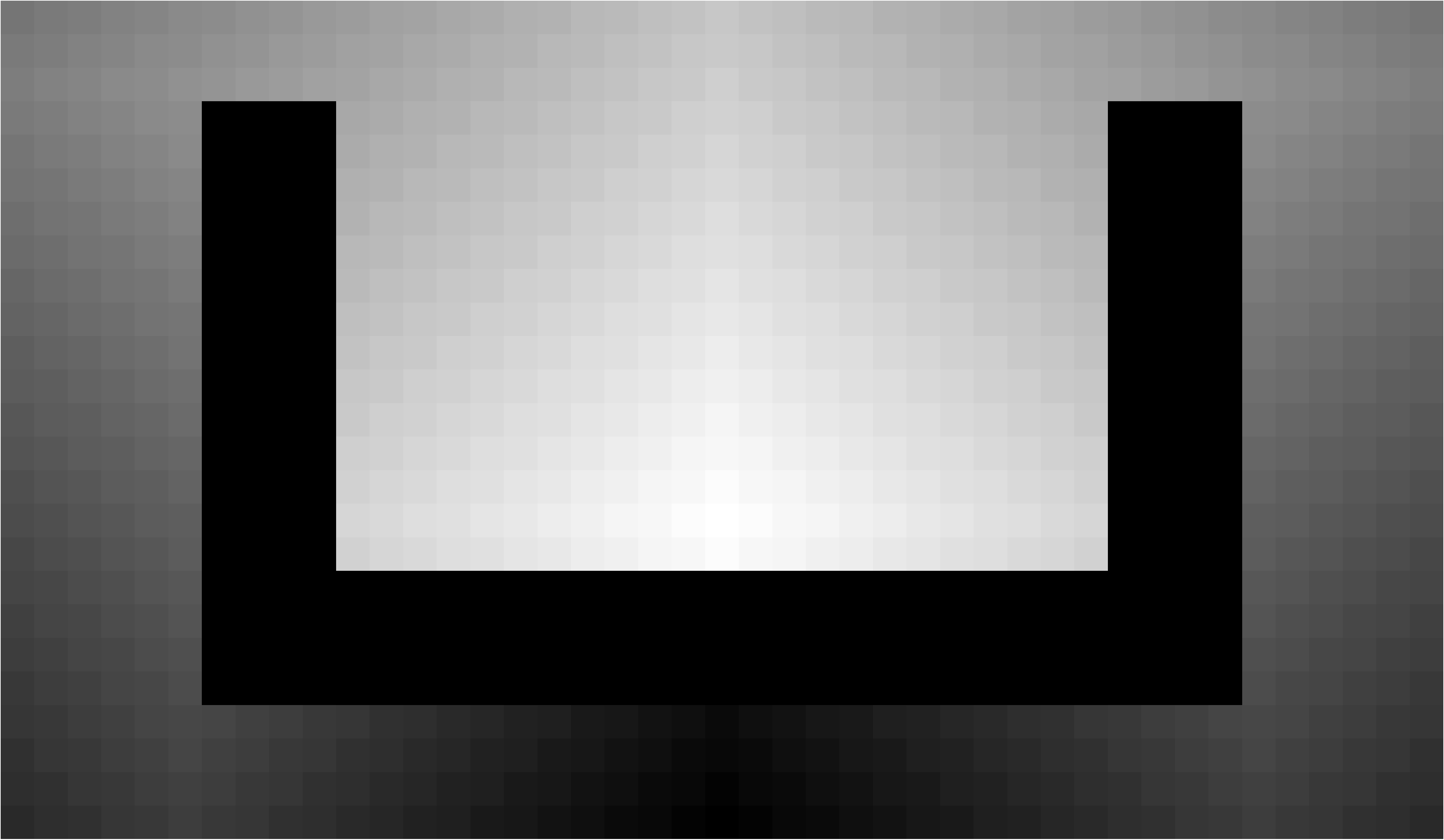} \\
    
    $\step=2$. Initial values: $[0,200]$ & & \\
    
    \hline    

    \includegraphics[width=\wOne]{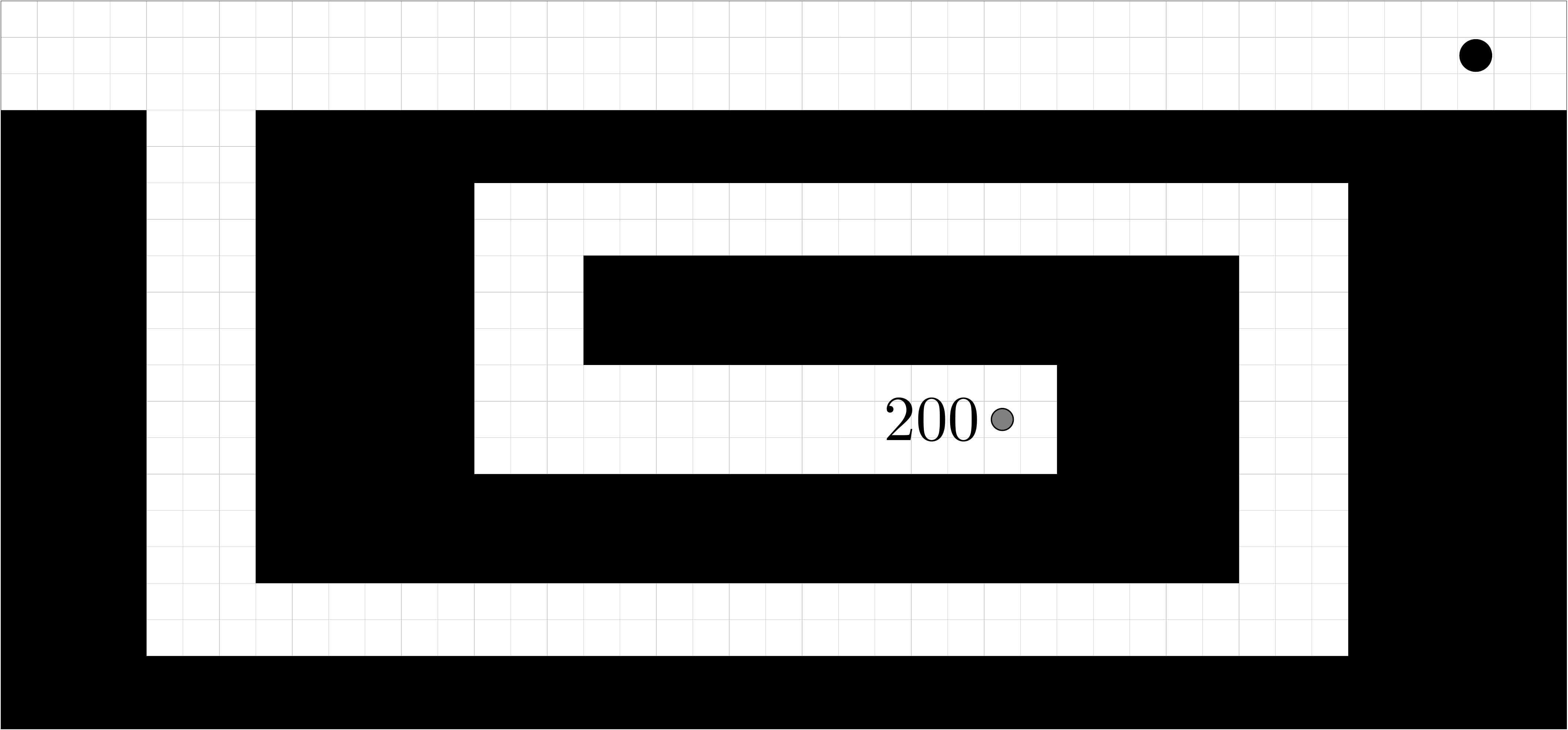} &    
    \includegraphics[width=\wOne]{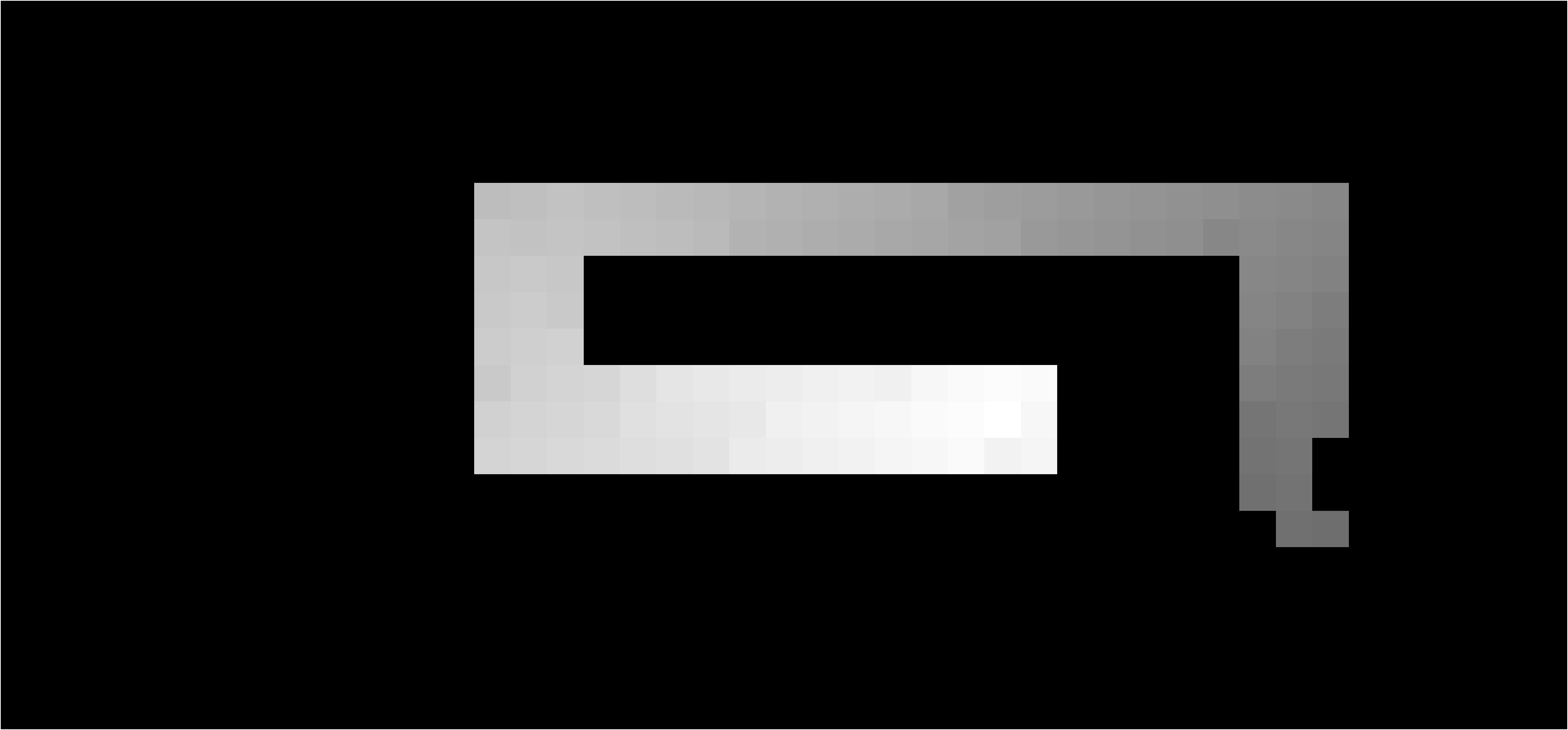} &
    \includegraphics[width=\wOne]{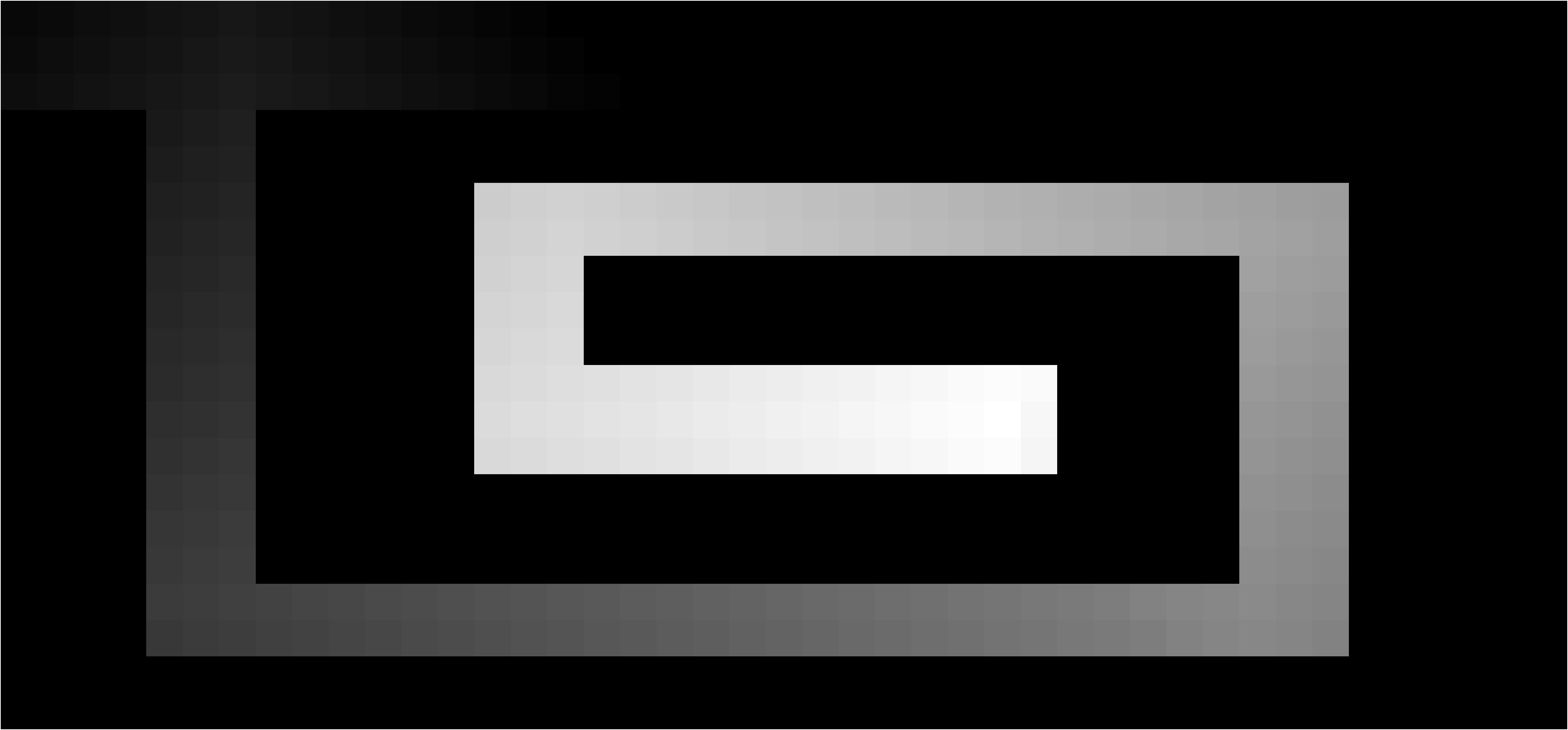} \\
    
    $\step=2$. Initial values: zero & & \\
    
    \hline
    
    \includegraphics[width=\wOne]{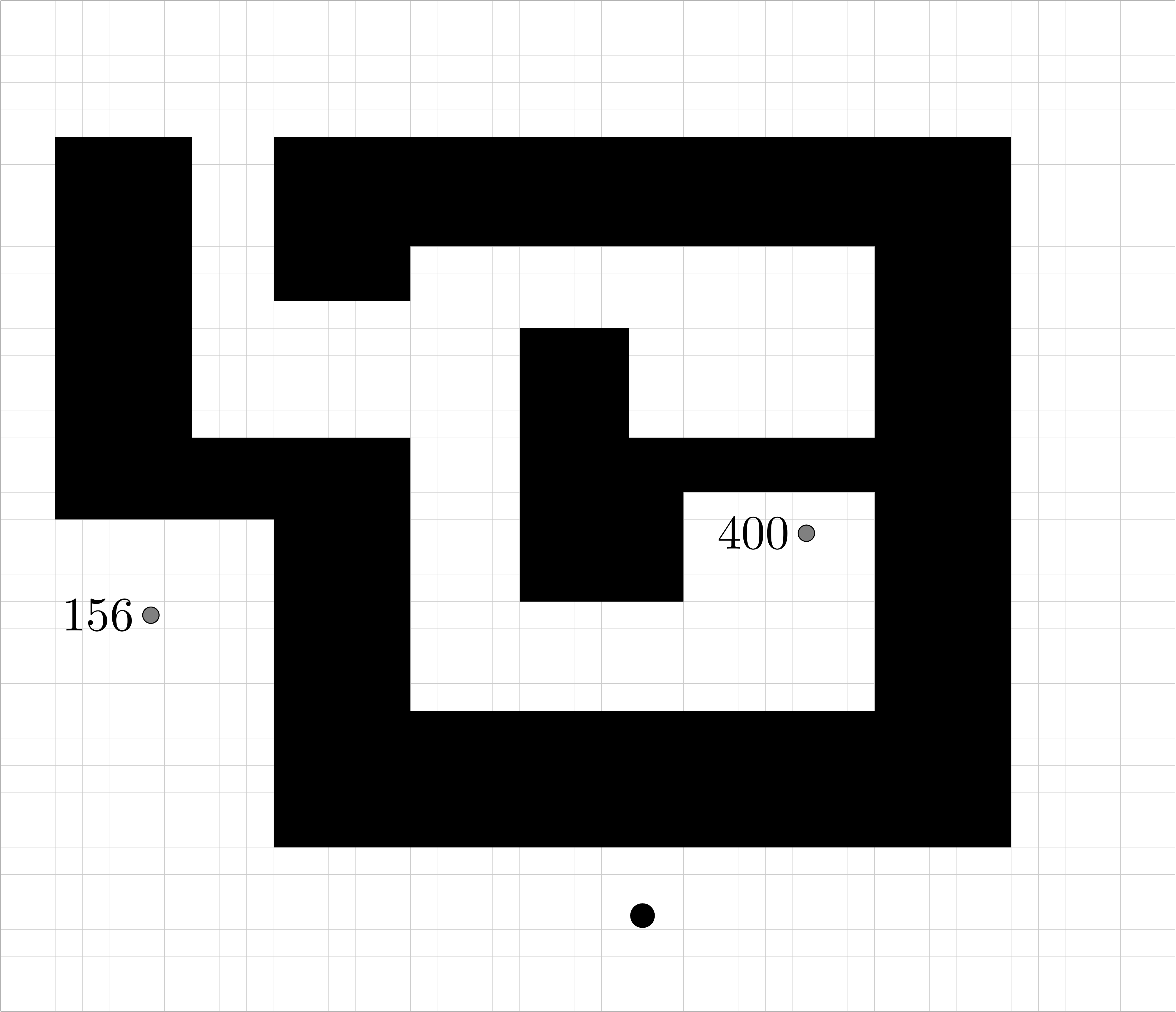} &  
    \includegraphics[width=\wOne]{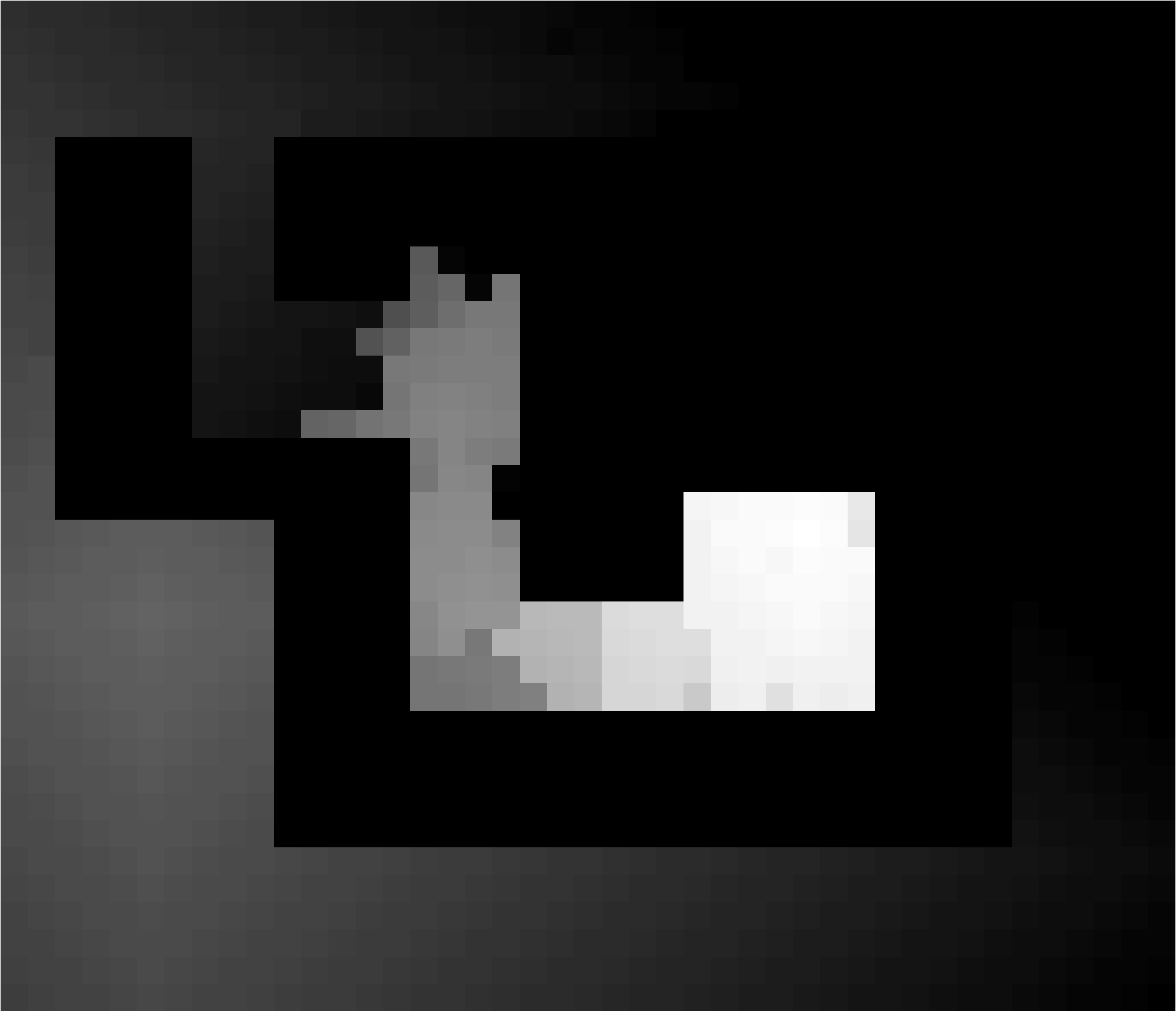} &
    \includegraphics[width=\wOne]{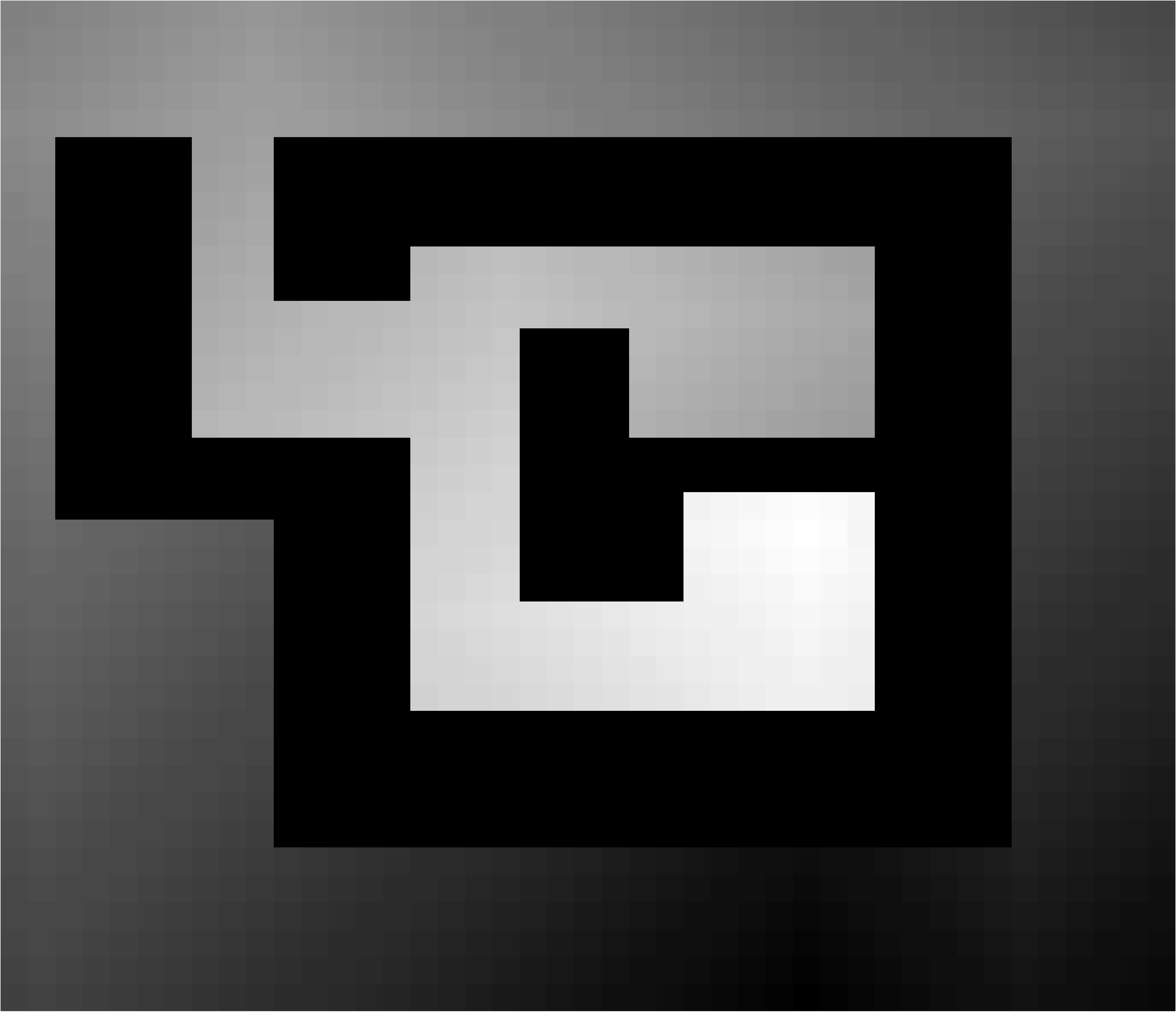} \\
    
    $\step=3$. Initial values: $[0,400]$ & & \\
    
    \hline
\end{tabular}
\end{center}
\caption[Simulation result of \ramp\ on 2D test maps]{Simulation result of \ramp\ on three 2D grid maps, that are deterministic and connected (see description in Example~\ref{ex:dc-sim}). 
To have faster convergence, we considered $\epsilon=1$.

(A) Column A shows the setup of each map. The first and second map use random initial values (for each state-action pair) in the mentioned interval. The starting location is indicated by a black dot, and the goal cells are marked with their numerical reward quantity.
We also vary $\step$ to test more circumstances.

(B) For a value function $\V$ midway the learning process, column B shows the value $\V\get{c}$ of each cell $c$, computed as the maximum over the action values for that cell. The highest values are shown as the brightest. 

(C) Column C shows the cell value when no visual changes occur anymore. Note that in the second row (with the spiral), some top-right cells converge to zero (optimal) value.}
\label{fig:dc-sim}
\end{figure}

\begin{remark}[Degree of exploration] When relating exploring runs to Algorithm~\ref{alg:full}, we would like to point out that Theorem~\ref{theo:explore} works for any $\epsilon>0$, even for very small (but nonzero) $\epsilon$ values that would make the agent seem almost entirely greedy. Therefore, the theorem might be useful for better understanding settings where a high degree of greediness (and therefore exploitation of knowledge) is preferred.\qed
\end{remark}

\begin{remark}[Liberal initialization]
    Theorem~\ref{theo:explore} applies to exploring runs that start with arbitrarily initialized value functions. This highlights a strength of the \ramp\ algorithm. In particular, the theorem seems to refute immediate simplifications of \ramp\ that would simply remember for each state-action pair the highest value seen so far.
    Such a simplification would in general require that initial values are all zero, which is not needed by Theorem~\ref{theo:explore}.
    \qed
\end{remark}

\begin{remark}[Off-policy learning]
    Theorem~\ref{theo:explore} resembles the viewpoint of Q-learning~\Qrefs\ in that the agent is updating its value estimation without necessarily using its learned knowledge to explore. Essentially, all that we require in the proof is that the system keeps running, and that each state-action pair is visited sufficiently often. This has been called off-policy learning by \citet{sutton-barto_1998}: the agent is trying to find an optimal policy (mapping states to the best actions), independently of the other policy used to explore the state space.
    \qed
\end{remark}

\begin{remark}[Not all equivalent actions]
    In a \DC\ task, in an exploring run, we can expect that the agent only rarely learns two optimal actions for the same state. Once the agent has found one optimal action $\act$ for a state $\st$, it will become more difficult (or impossible) to increase the value for another pair $(\st,\act')$. 
    Indeed, if state $\st$ has reached its optimal value, through the value of $(\st,\act)$, the value of $\st$ has become too high to have positive surprise when trying the pair $(\st,\act')$; positive surprise would correspond to $d> 0$ in Algorithm~\ref{alg:update}.
    \qed
\end{remark}

On nondeterministic tasks, the following example illustrates why exploring runs do not necessarily converge numerically (as in Theorem~\ref{theo:explore}). 
\begin{example}[Nondeterminism causes fluctuations]    \label{ex:explore-fluctuate}
    We consider the nondeterministic task in Figure~\ref{fig:explore-fluctuate}. 
    Note that the state-action pair $(1,a)$ can choose among two successors: $2$ and $3$.
    For states $2$ and $3$, the actions behave deterministically.
    For simplicity, we assume that $\step=1$ and that initial values are zero.
    In an exploring run, starting at state $1$,  the pairs $(3,a)$ and $(3,b)$ both get the value $4-\step=3$; subsequently, the pairs $(2,a)$ and $(2,b)$ get the value $3-\step=2$.
    Since $(1,b)$ always arrives at state $2$, the value of $(1,b)$ will also stabilize at $1$. However, $(1,a)$ will continue to fluctuate in value: if $(1,a)$ arrives at state $2$ then the value will be $1$, and if $(1,a)$ arrives at state $3$ then the value will be $2$.
    \qed
    \begin{figure}
    \begin{center}
        \includegraphics[width=0.7\textwidth]{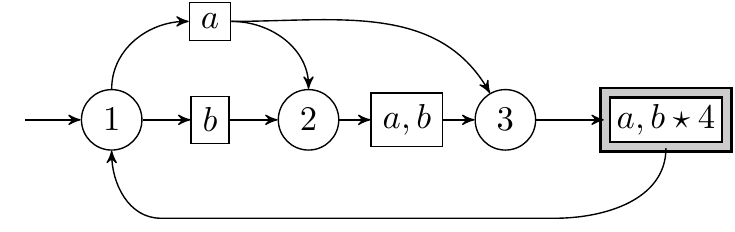}
    \end{center}
        \caption{A task, where states are represented by circles, and action applications are represented by rectangles. Start states have an inbound arrow without origin. The nonzero rewards are indicated by a shaded box. In this case, $(3,a)$ and $(3,b)$ are assigned a reward quantity of $4$.}
    \label{fig:explore-fluctuate}
    \end{figure}
\end{example}

Later, in Section~\ref{sec:greedy}, we will approach nondeterministic tasks with greedy runs instead of exploring runs; and, we will restrict attention to so-called navigation problems, that are introduced in Section~\ref{sub:shortest}.

\subsection{Optimal path following}\label{sub:sprint}
The previous Section~\ref{sub:opt} was about learning optimal values.
Here, we study the effect of value optimization on the actual behavior of the agent. 
 
\begin{definition}\label{def:sprint}
For a given task, we say that a run fragment 
\[
    \cnftupX 1
        \jump{\actX 1,\,\stX{2}}
        \ldots
        \jump{\actX{n-1},\,\stX{n}}
    \cnftupX n
        \jump{\actX n,\,\stX{n+1}}
    \cnftupX{n+1},
\] 
where $n\geq 1$, is a value-sprint if 
\begin{enumerate}
    \item \label{enu:sprint-jump} $\VX{i}\get{\stX i} \leq \VX{i}\get{\stX{i+1}}-\step$ for each $i\in\set{1,\ldots,n-1}$; and,
    \item \label{enu:sprint-end} $\VX{n}\get{\stX n}>\VX{n}\get{\stX{n+1}}-\step$.    
\end{enumerate}
\end{definition}
In a value-sprint, each transition witnesses increasingly larger values, separated by at least $\step$, except the last transition.
A value-sprint could occur anywhere in the run (not necessarily at the beginning).
We allow $n=1$, in which case condition (\ref{enu:sprint-jump}) is vacuously true.
Note that we can not split a value-sprint in smaller value-sprints: the first part would not be a value-sprint because condition (\ref{enu:sprint-end}) is not satisfied.

The following lemma relates runs and value-sprints:
\begin{lemma}\label{lem:sprint-sequence}For each task, every run is always an infinite sequence of value-sprints.
\end{lemma}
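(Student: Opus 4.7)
The plan is to show that every transition falls into exactly one of two types, and that the run cannot end in an infinite tail of the ``extending'' type. Call a transition $\cnftupX i \jump{\actX i,\,\stX{i+1}} \cnftupX{i+1}$ \emph{extending} if $\VX i\get{\stX i} \leq \VX i\get{\stX{i+1}} - \step$, and \emph{closing} otherwise. These are mutually exclusive and exhaustive. A value-sprint is then exactly a maximal block of consecutive extending transitions followed by a single closing transition, which gives a canonical way to parse the run into value-sprints provided every suffix contains at least one closing transition.

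First I would observe that an extending transition forces $\stX i \neq \stX{i+1}$: the inequality $\VX i\get{\stX i} \leq \VX i\get{\stX{i+1}} - \step$ together with $\step\geq 1$ rules out $\stX i = \stX{i+1}$. Next, since only the coordinate $(\stX i,\actX i)$ is touched by the update in Definition~\ref{def:trans}, when $\stX{i+1}\neq\stX i$ one has $\VX{i+1}(\stX{i+1},\act) = \VX i(\stX{i+1},\act)$ for every $\act\in\actions$, and hence $\VX{i+1}\get{\stX{i+1}} = \VX i\get{\stX{i+1}}$. Combining these two observations, every extending transition satisfies
\[
    \VX{i+1}\get{\stX{i+1}} \;=\; \VX i\get{\stX{i+1}} \;\geq\; \VX i\get{\stX i} + \step.
\]

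Now I would conclude by contradiction. Suppose some run contains an infinite tail of extending transitions starting at index $i_0$. Iterating the displayed inequality along that tail yields
\[
    \VX{i}\get{\stX{i}} \;\geq\; \VX{i_0}\get{\stX{i_0}} + (i-i_0)\,\step
\]
for all $i\geq i_0$, so the quantities $\VX{i}\get{\stX i}$ grow without bound. But by Lemma~\ref{lem:finite-cnf}, only finitely many configurations occur in the run, so the set of values $\VX i\get{\stX i}$ is bounded, contradiction. Hence every suffix contains a closing transition, and the greedy parsing procedure (repeatedly cut at the first closing transition) partitions the run into a well-defined infinite sequence of value-sprints.

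The only mildly delicate point is the bookkeeping in the case split $\stX i = \stX{i+1}$ versus $\stX i \neq \stX{i+1}$ when comparing $\VX{i+1}\get{\stX{i+1}}$ to $\VX i\get{\stX{i+1}}$; the rest is the clean monotonicity-plus-boundedness argument. The condition $\step\geq 1$ is precisely what makes the case $\stX i = \stX{i+1}$ impossible inside an extending transition, which is the reason the whole argument goes through.
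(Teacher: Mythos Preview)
Your proof is correct and follows essentially the same contradiction strategy as the paper: assume an infinite tail of ``extending'' transitions, show the sequence $\VX i\get{\stX i}$ grows without bound, and contradict Lemma~\ref{lem:finite-cnf}. The one noteworthy variation is in how you establish the growth step: the paper analyzes the update rule to show $d\geq 0$ (hence no state value decreases, hence $\VX j\get{\stX{j+1}}\leq\VX{j+1}\get{\stX{j+1}}$), whereas you bypass the update formula entirely by observing that an extending transition forces $\stX i\neq\stX{i+1}$, so the update---which only touches the pair $(\stX i,\actX i)$---leaves $\VX{i+1}\get{\stX{i+1}}=\VX i\get{\stX{i+1}}$ unchanged. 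Your route is slightly more elementary in that it never inspects $\rl$ or Algorithm~\ref{alg:update}; the paper's route, on the other hand, yields the stronger intermediate fact that \emph{all} state values are nondecreasing along the tail, which is not needed here but is a natural observation in its own right.
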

\begin{proof}Let $\task=\tasktup$ be the task.
    Suppose towards a contradiction that there is a run $\run$ that is not an infinite sequence of value-sprints. Then $\run$ is a finite sequence of value-sprints followed by an infinite tail
    \[
    \cnftupX i
        \jump{\actX i,\,\stX{i+1}}
    \cnftupX{i+1}
        \jump{\actX{i+1},\,\stX{i+2}}
    \ldots,
    \]
    where $\VX{j}\get{\stX j} \leq \VX{j}\get{\stX{j+1}}-\step$ for each $j\geq i$.
    
    Let $j\geq i$. We note that $\VX j\get{\st}\leq\VX{j+1}\get{\st}$ for each $\st\in\states$: in Algorithm~\ref{alg:update}, we can use the assumption $\VX{j}\get{\stX j} \leq \VX{j}\get{\stX{j+1}}-\step$ to obtain
    \begin{align*}
        d &= \max(\VX j\get{\stX{j+1}},\R(\stX j,\actX j)) - \step - \VX j\get{\stX j}\\
          &\geq \VX j\get{\stX{j+1}} - \step - \VX j\get{\stX j}\\
          &\geq 0.
    \end{align*}
    We observe that $\VX j\get{\stX j}<\VX{j+1}\get{\stX{j+1}}$: everything combined, we have $\VX j\get{\stX j}<\VX j\get{\stX j}+\step\leq\VX j\get{\stX{j+1}}\leq\VX{j+1}\get{\stX{j+1}}$.
    By transitivity, for any indices $j$ and $k$ with $i\leq j<k$, we have $\VX j\get{\stX j}<\VX k\get{\stX k}$. But then we would encounter infinitely many (state) values, and thus infinitely many configurations, contradicting Lemma~\ref{lem:finite-cnf}.
\end{proof}

The above value-sprint describes the following action-path, where we omit the last state $\stX{n+1}$:
\[
    p = (\stX 1,\actX 1),\ldots,(\stX n,\actX n).
\]
We say that $p$ is optimal (for $\stX 1$) if $\pval p=\optval{\stX 1}$.
We are now ready to express the effect of value optimization on the behavior of the agent:
\begin{theorem}[Follow optimal paths]
    \label{theo:sprint}
    For each \DC\ task, for each greedy run that starts with an optimal and consistent value function, each value-sprint describes an optimal action-path. 
\end{theorem}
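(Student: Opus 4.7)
The plan is to first show that a greedy run starting from an optimal and consistent value function never modifies the value function, and then to analyze what the sprint conditions force on the rewards and optimal values along the path.

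\textbf{Step 1: value function is invariant.} Let $\cnftup \jump{\act,\st'} (\st',\V')$ be any transition of the run, and let $\Det\st\act{\st'}$ (the task is deterministic). Greediness gives $\act\in\pref\st\V$, so $\V(\st,\act)=\V\get\st$. Consistency at $\st$ via $\act$ gives $\V\get\st = \clamp{\max(\V\get{\st'},\R(\st,\act))-\step}$. Plugging this into the update rule, a short computation shows $\V'(\st,\act)=\clamp{\max(\V\get{\st'},\R(\st,\act))-\step}=\V\get\st=\V(\st,\act)$, so $\V'=\V$. By induction the value function in the run is constantly $\VX1$, which is both optimal and consistent; in particular $\V\get\st=\optval\st$ for every $\st$.

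\textbf{Step 2: analyze a value-sprint.} Fix a value-sprint with action-path $p=(\stX1,\actX1),\ldots,(\stX n,\actX n)$ and terminal state $\stX{n+1}$. Since each $\actX i\in\pref{\stX i}{\V}$ and the task is deterministic, consistency yields $\optval{\stX i}=\clamp{\max(\optval{\stX{i+1}},\R(\stX i,\actX i))-\step}$ for each $i$. For $i<n$, sprint condition~(\ref{enu:sprint-jump}) gives $\optval{\stX{i+1}}\geq\optval{\stX i}+\step\geq\step>0$, so the clamp is inactive and we obtain $\optval{\stX i}+\step=\max(\optval{\stX{i+1}},\R(\stX i,\actX i))$; combined with $\optval{\stX{i+1}}\geq\optval{\stX i}+\step$, this forces the max to equal $\optval{\stX{i+1}}$, giving the exact recurrence $\optval{\stX{i+1}}=\optval{\stX i}+\step$ and the bound $\R(\stX i,\actX i)\leq\optval{\stX{i+1}}$. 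By induction, $\optval{\stX n}=\optval{\stX 1}+(n-1)\step$.

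\textbf{Step 3: extract the reward at the tail.} Apply consistency at $\stX n$ together with sprint condition~(\ref{enu:sprint-end}), which says $\optval{\stX{n+1}}<\optval{\stX n}+\step$. If $\max(\optval{\stX{n+1}},\R(\stX n,\actX n))\geq\step$, the clamp is inactive and $\max(\optval{\stX{n+1}},\R(\stX n,\actX n))=\optval{\stX n}+\step$; since the $\optval{\stX{n+1}}$ term is too small to realize this maximum, we must have $\R(\stX n,\actX n)=\optval{\stX n}+\step=\optval{\stX 1}+n\step$, whence $\clamp{\R(\stX n,\actX n)-n\step}=\optval{\stX 1}$ and therefore $\pval p\geq\optval{\stX 1}$. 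In the degenerate case where the clamp is active, one checks that $\optval{\stX 1}=0$ (which, combined with $\step\geq1$, forces $n=1$) and again $\pval p=0=\optval{\stX 1}$; this is the only subtlety worth checking carefully, and it is the main obstacle since one must rule out a ``short'' sprint having smaller value than $\optval{\stX 1}$ when the clamp bites.

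\textbf{Step 4: upper bound and conclusion.} For the other direction, the cycle-removal argument recalled before the definition of $\optval\cdot$ shows that any action-path starting at $\stX 1$ can be turned into a cycle-free one with no smaller value; hence $\pval p\leq\optval{\stX 1}$. Combined with Step~3, $\pval p=\optval{\stX 1}$, so $p$ is optimal.
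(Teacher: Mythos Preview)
Your proof is correct, and it takes a somewhat different route from the paper's. Both arguments begin by observing that the value function is invariant along a greedy run from a consistent value function (your Step~1 is essentially the paper's Lemma~\ref{lem:consistent}, in fact done a bit more cleanly). After that the two diverge: the paper performs a \emph{backward} induction on the suffixes $p_i=(\stX i,\actX i),\ldots,(\stX n,\actX n)$, proving $\V\get{\stX i}=\pval{p_i}$ for $i=n,\ldots,1$ via a path-value decomposition lemma (Lemma~\ref{lem:pval}), and then reads off $\pval p=\pval{p_1}=\V\get{\stX 1}=\optval{\stX 1}$ directly. You instead run a \emph{forward} recurrence, using sprint condition~(\ref{enu:sprint-jump}) together with consistency to force $\optval{\stX{i+1}}=\optval{\stX i}+\step$ for $i<n$, then use condition~(\ref{enu:sprint-end}) at the tail to pin down $\R(\stX n,\actX n)=\optval{\stX 1}+n\step$, and finally close with the general upper bound $\pval p\leq\optval{\stX 1}$ from the cycle-removal argument. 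Your approach avoids introducing the auxiliary suffix-decomposition lemma and gives more explicit numerical information (the exact ramp shape and the identity of the terminal reward), at the cost of splitting the equality into two inequalities and needing to dispatch the clamp-degenerate case separately. The paper's approach is more uniform and yields the equality $\V\get{\stX i}=\pval{p_i}$ at every intermediate step. Both are clean; neither is clearly superior.
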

\noindent \proofmain{theo:sprint--PROOF}

\begin{remark}[Increasingly better]    
    At moments when the agent is not exploring, and is greedily applying preferred actions, the agent is following its best guess about optimal paths.
    Although we do not know the precise moment when the agent has complete knowledge about optimal paths, we can imagine that the agent is increasingly getting better at following them.
    Theorem~\ref{theo:explore} tells us that the value function will eventually contain the knowledge about optimal paths. Then, by Theorem~\ref{theo:sprint}, any subsequent greedy fragments of the run follow optimal paths.
    Note that parameter $\epsilon$ determines the amount of time that the agent exploits its knowledge; high values for $\epsilon$ could make the agent still seem random, even if the agent has knowledge of optimal paths.
    \qed
\end{remark}

\subsubsection{Shortest paths}
\label{sub:shortest}

As an application and further explanation of Theorem~\ref{theo:explore} and Theorem~\ref{theo:sprint}, we discuss a relationship between path value and shortest paths.
See \cite{algorithms_2009} for an introduction to the shortest path problem and related algorithms.
    The standard shortest-path algorithms process graph data in bulk fashion, e.g., they can iterate over vertices and edges. A reinforcement learning system, on the other hand, builds its belief by (repeatedly) following trajectories through the transition function of the task.

We first consider the following definition.
\begin{definition}\label{def:nav}
    We say that a task $\task=\tasktup$ is a navigation problem if there is exactly one nonzero reward quantity $\M$, and with $\M > \ssize\states \step$. More precisely, for all $(\st,\act)\in\states\times\actions$ we have either $\R(\st,\act)=\M$ or $\R(\st,\act)=0$, and there is at least one $(\st,\act)$ with $\R(\st,\act)=\M$. 
\end{definition}
In a navigation problem, the intention behind the sufficiently large reward quantity is to allow the agent to learn a (cycle-free) path between any two states.

\paragraph*{\DC\ navigation problems}
A \DC\ navigation problem is a navigation problem that is also deterministic and connected.
Let $\task=\tasktup$ be a \DC\ navigation problem. Let $p=(\stX 1,\actX 1),\ldots,(\stX n,\actX n)$ be an action-path. 
We say that $p$ is rewarding if there is at least one $i\in\set{1,\ldots,n}$ with $\R(\stX i,\actX i)=\M$.
If $p$ is rewarding then we define the length of $p$, denoted $\plen p$, as the smallest $i\in\set{1,\ldots,n}$ with $\R(\stX i,\actX i)=\M$. 
Note that if a rewarding action-path contains cycles then we can transform it into a rewarding action-path without cycles, using the procedure at the beginning of Section~\ref{sub:opt}. 

Thanks to connectedness, there is a cycle-free action-path from each state $\st$ to a state-action pair $(\st',\act')$ with $\R(\st',\act')=\M$. Recalling the definition of path value from Equation~\eqref{eq:pval}, cycle-free rewarding action-paths have a strictly positive value because $\M>\ssize\states\step$: 
    on a cycle-free path, each pair $(\stX i,\actX i)$ contributes at least $\clamp{\R(\stX i,\actX i)-\ssize\states\step}$ to the overall path value; if one of the pairs is rewarding then the overall path value is strictly positive.
Hence, each state $\st$ has $\optval\st > 0$.
We therefore consider \DC\ navigation problems to be solvable from a path finding viewpoint.

\paragraph*{Shortest path following}
On \DC\ navigation problems, Theorem~\ref{theo:explore} tells us that every exploring run will eventually find an optimal and consistent value function, containing for each state the knowledge of the optimal paths.
Theorem~\ref{theo:sprint} has the following corollary:
\begin{corollary}[Follow shortest paths]
    \label{corol:shortest}
    For all \DC\ navigation problems, for each greedy run that starts with an optimal and consistent value function,     
    each value-sprint follows a shortest path to reward.
\end{corollary}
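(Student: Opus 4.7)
The plan is to invoke Theorem~\ref{theo:sprint} to reduce the corollary to an elementary calculation about the reward structure of a \DC\ navigation problem. That theorem already guarantees $\pval p = \optval{\stX 1}$ for the value-sprint's action-path $p = (\stX 1,\actX 1),\ldots,(\stX n,\actX n)$, so all that remains is to show that optimality in a navigation problem forces $p$ to reach reward at the earliest step possible from $\stX 1$.

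The key observation is how $\optval$ simplifies in a \DC\ navigation problem. Because the reward function takes only the values $0$ and $\M$, the formula $\pval{q} = \max\set{\clamp{\R(\stX{i}',\actX{i}') - i\step}\mid i\in\set{1,\ldots,k}}$ applied to any cycle-free action-path $q = (\stX{1}',\actX{1}'),\ldots,(\stX{k}',\actX{k}')$ evaluates to $0$ if $q$ has no rewarding pair and otherwise to $\clamp{\M - i^*\step}$, where $i^*$ is the smallest rewarding index in $q$ (positions with $\R = 0$ contribute $0$ to the maximum, while over rewarding positions $\clamp{\M - i\step}$ is maximized at the smallest such $i$). Since cycle-free paths have $k\leq\ssize\states$ and Definition~\ref{def:nav} imposes $\M > \ssize\states\step$, this clamp is trivial in the rewarding case and $\pval q = \M - i^*\step > 0$. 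Writing $\ell_\st$ for the length of a shortest cycle-free rewarding action-path from $\st$ (which exists for every $\st$ by connectedness, as already noted before Definition~\ref{def:nav}), I conclude $\optval\st = \M - \ell_\st\step > 0$.

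Combining the two pieces, $\pval p = \M - \ell_{\stX 1}\step > 0$, so $p$ must contain at least one rewarding pair; let $i^*$ denote its smallest rewarding index. The same formula evaluated on $p$ gives $\pval p = \clamp{\M - i^*\step}$, and positivity of $\pval p$ makes the clamp trivial, so $\pval p = \M - i^*\step$. Equating with $\M - \ell_{\stX 1}\step$ forces $i^* = \ell_{\stX 1}$, which is exactly to say that the sprint reaches reward at the earliest step possible from $\stX 1$, and hence follows a shortest rewarding action-path. There is no serious obstacle beyond this bookkeeping: Theorem~\ref{theo:sprint} supplies the learning-theoretic content, and the rest hinges only on the inequality $\M > \ssize\states\step$ built into the definition of a navigation problem.
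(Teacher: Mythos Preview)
Your proof is correct and follows essentially the same approach as the paper: invoke Theorem~\ref{theo:sprint} to obtain $\pval p = \optval{\stX 1}$, then exploit the single reward magnitude $\M > \ssize\states\step$ to identify optimal action-paths with shortest ones. The paper phrases the second step as a comparison (``$\plen{p_1}<\plen{p_2}$ implies $\pval{p_1}>\pval{p_2}$''), whereas you compute $\optval{\stX 1}=\M-\ell_{\stX 1}\step$ and $\pval p=\M-i^*\step$ directly and equate; your handling of the clamp on $p$ via positivity (rather than cycle-freeness, which $p$ is not assumed to have) is in fact slightly more careful than the paper's sketch.
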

\begin{proof}    
    Consider a greedy run starting with an optimal and consistent value function. By Lemma~\ref{lem:sprint-sequence}, the greedy run is an infinite sequence of value-sprints. Moreover, by Theorem~\ref{theo:sprint}, each value-sprint follows an optimal path. We are left to argue that those paths are actually the shortest (to reward).
    Since all nonzero reward occurrences have amplitude $\M$, for any two cycle-free rewarding action-paths $p_1$ and $p_2$, if $\plen{p_1}<\plen{p_2}$ then $\pval{p_1}>\pval{p_2}$. This is illustrated in Figure~\ref{fig:shortest-path}.
    Therefore, whenever we follow an optimal action-path $p$ from a state $\st$, we know that $p$ has the shortest length among all those paths that lead from $\st$ to reward.
\end{proof}

\begin{figure}
    \begin{center}
    \includegraphics[width=0.9\textwidth]{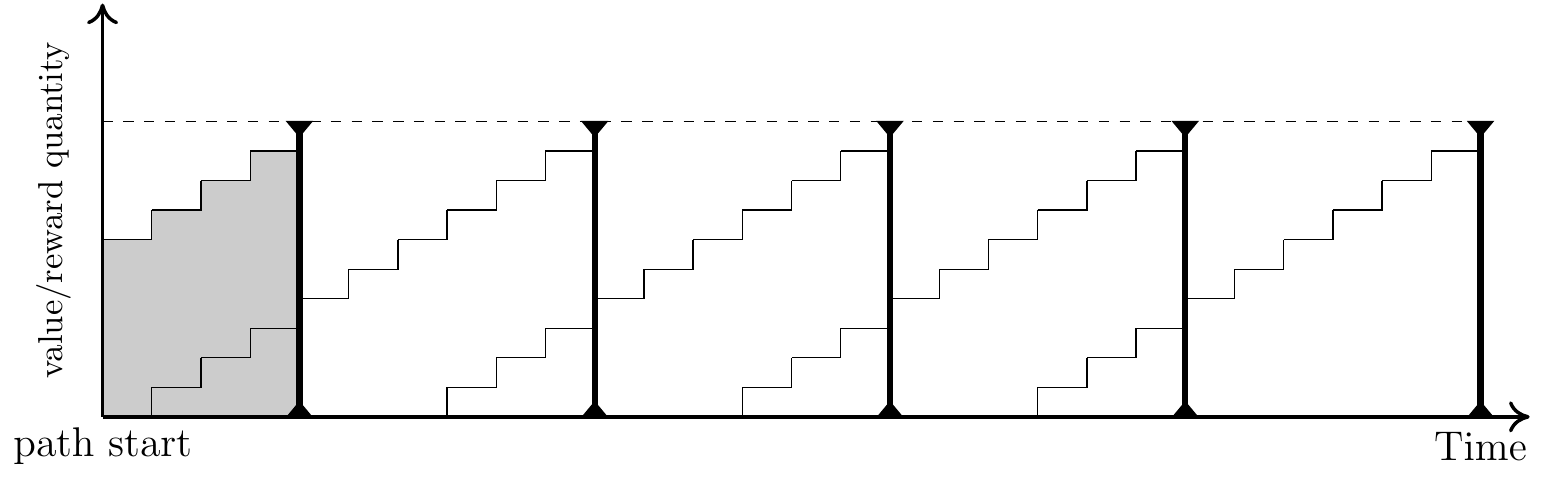}
    \end{center}
    \caption{When all rewards have the same magnitude, the action-path value is determined only by the distance (or time) to reward. In the figure, we have shaded the ramp of the reward occurrence that determines the path value; it must be the first reward occurrence.}
    \label{fig:shortest-path}
\end{figure}

After the above introduction to navigation problems, we may proceed to the topic of greedy navigation in Section~\ref{sec:greedy}.

\subsection{Proof of Theorem~\ref{theo:explore}}
\label{theo:explore--PROOF}

Let us fix some \DC\ task $\task=\tasktup$. 

\subsubsection{Approach}
We first define an auxiliary notion.
Let $\V$ be a value function. We say that $\V$ is valid when for each $(\st,\act)\in\states\times\actions$, we have
\[
    \V(\st,\act) \leq \clamp{\max(\V\get{\st'},\R(\st,\act)) - \step}.
\]
Intuitively, this means that the values are not overestimating the true reward.

We will show in Section~\ref{sub:explore-rest} that every exploring run $\run$ has an infinite suffix $\run'$ in which each value function is both valid and optimal. By Property~\ref{prop:explore-opt-valid-implies-consistent} (below), all value functions in $\run'$ are also consistent, as desired.

\begin{property}\label{prop:explore-opt-valid-implies-consistent}
    Let $\V$ be a value function. If $\V$ is valid and optimal then $\V$ is consistent.
\end{property}
\begin{proof}
    Let $\st\in\states$ and $\act\in\pref\st\V$. Denote $\Det\st\act{\st'}$.
    We show $\V\get\st \geq \clamp{\max(\V\get{\st'},\R(\st,\act))-\step}$. Then the validity assumption $\V(\st,\act)\leq \clamp{\max(\V\get{\st'},\R(\st,\act))-\step}$, combined with $\V\get\st=\V(\st,\act)$ by $\act\in\pref\st\V$, implies the desired consistency
    \[
        \V\get\st = \clamp{\max(\V\get{\st'},\R(\st,\act))-\step}.
    \]    
    By Lemma~\ref{lem:opt} (below),
    \begin{align*}
        \optval\st &\geq \max(\clamp{\optval{\st'}-\step},\clamp{\R(\st,\act)-\step})\\
                &=\clamp{\max(\optval{\st'},\R(\st,\act))-\step}.
    \end{align*}
    Subsequently, by the optimality assumption on $\V$, which gives $\V\get{\st''}=\optval{\st''}$ for each $\st''\in\states$, we have $\V\get\st\geq\clamp{\max(\V\get{\st'},\R(\st,\act))-\step}$.
\end{proof}

\begin{lemma}\label{lem:opt}
    Let $\task=\tasktup$ be a deterministic task. 
    Let $(\st,\act)\in\states\times\actions$, and denote $\Det\st\act{\st'}$. We always have
    \[
        \optval\st \geq \max(\clamp{\optval{\st'}-\step},\clamp{\R(\st,\act)-\step}).
    \]
\end{lemma}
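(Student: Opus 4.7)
The plan is to exhibit a single action-path starting at $\st$ whose value equals the right-hand side, then appeal to the definition of $\optval{\st}$ as a maximum over cycle-free action-paths, using the fact (stated just before the definition of $\optval{\cdot}$) that any cycle-containing action-path can be pruned to a cycle-free one of no smaller value.

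First I would pick a cycle-free action-path $q = (\st'_1,\act'_1),\ldots,(\st'_m,\act'_m)$ with $\st'_1 = \st'$ and $\pval{q}=\optval{\st'}$, which exists because $\explore{\st'}$ is finite and nonempty. Determinism forces $\tr(\st,\act) = \set{\st'}$, so prepending $(\st,\act)$ yields a valid action-path $p^* = (\st,\act),(\st'_1,\act'_1),\ldots,(\st'_m,\act'_m)$ starting at $\st$. Computing $\pval{p^*}$ via Equation~\eqref{eq:pval}, the first pair contributes $\clamp{\R(\st,\act)-\step}$, while the original $i$-th pair of $q$ now sits at position $i+1$ and contributes $\clamp{\R(\st'_i,\act'_i)-(i+1)\step}$.

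The key routine manipulation uses the elementary identity $\clamp{y-\step}=\clamp{\clamp{y}-\step}$ together with the clamp--max commutation $\max(\clamp{a},\clamp{b})=\clamp{\max(a,b)}$ noted in the preliminaries, to rewrite
\[
    \max_{i\in\set{1,\ldots,m}}\clamp{\R(\st'_i,\act'_i)-(i+1)\step} \;=\; \clamp{\pval{q}-\step} \;=\; \clamp{\optval{\st'}-\step}.
\]
Thus $\pval{p^*} = \max\bigl(\clamp{\R(\st,\act)-\step},\,\clamp{\optval{\st'}-\step}\bigr)$, exactly the right-hand side of the claim. If $\st$ does not appear in $q$ then $p^*$ is already cycle-free and $\optval{\st}\geq\pval{p^*}$ immediately; otherwise, one invocation of the cycle-removal procedure produces a cycle-free $p^{**}$ with $\pval{p^{**}}\geq\pval{p^*}$, and the same conclusion follows.

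I do not anticipate a genuine obstacle. The only step where one could slip is the index-shift calculation after prepending, which combines a clamp, a uniform shift of every position by one, and a max over all indices; the elementary identity $\clamp{y-\step}=\clamp{\clamp{y}-\step}$ is what makes it collapse cleanly into the stated form.
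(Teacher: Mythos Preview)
Your argument is correct and follows the same core idea as the paper's proof: prepend $(\st,\act)$ to an optimal action-path for $\st'$ and appeal to the cycle-removal observation. The only cosmetic difference is that the paper bounds the two terms of the $\max$ separately---using the singleton path $(\st,\act)$ for $\clamp{\R(\st,\act)-\step}$ and the prepended path only for $\clamp{\optval{\st'}-\step}$---whereas you compute the exact value of the single prepended path, in effect inlining Lemma~\ref{lem:pval}; you are also slightly more careful than the paper in explicitly invoking cycle-removal.
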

\begin{proof}
    We have $\optval\st\geq\clamp{\R(\st,\act)-\step}$ because $(\st,\act)$ is an action-path (of length one) for $\st$. Also, $\optval\st\geq\clamp{\optval{\st'}-\step}$ because any optimal action-path $p'$ for $\st'$ can be extended to an action-path for $\st$ by adding $(\st,\act)$ to the front; adding $(\st,\act)$ to the front pushes the state-action pairs of $p'$ one step further into the future, leading to an overall value decrease with $\step$.%
\end{proof}

\subsubsection{Obtain validity and optimality}\label{sub:explore-rest}
Henceforth, we fix an exploring run $\run$:
\[
    \cnftupX 1
        \jump{\actX 1,\,\stX{2}}
    \cnftupX 2
        \jump{\actX 2,\,\stX{3}}
    \ldots
\]
We show the existence of an infinite suffix $\run'$ in which all value functions are both valid and optimal.

By Property~\ref{prop:explore-obtain-valid} (below), we know that in $\run$ we eventually encounter a configuration $\cnftupX j$ where $\VX j$ is valid and all subsequent value functions are also valid. 
\begin{property}\label{prop:explore-obtain-valid}
    In run $\run$, eventually all encountered value functions are valid.
    \proofapp{prop:explore-obtain-valid--PROOF}
\end{property}

Subsequently, Property~\ref{prop:explore-preserve-valid} (below) tells us that after configuration $\cnftupX j$, state values do not decrease.

\begin{property}\label{prop:explore-preserve-valid}
    Consider a transition $\cnftupX i\jump{\actX i,\,\stX{i+1}}\cnftupX{i+1}$. If $\VX i$ is valid then for each $\st\in\states$ we have $\VX{i}\get{\st}\leq\VX{i+1}\get{\st}$.
\end{property}
\begin{proof}
    Let $\st\in\states$. If $\st\neq\stX i$ then $\VX{i+1}\get\st=\VX i\get\st$.
    Henceforth, suppose $\st=\stX i$.
    If $\actX i\notin\pref{\stX i}{\VX i}$ then there is some $\act'\in\pref{\stX i}{\VX i}$ with 
    \[
        \VX i\get{\stX i}=\VX i(\stX i,\act')=\VX{i+1}(\stX i,\act')\leq\VX{i+1}\get{\stX i}.
    \]
    
    Suppose $\actX i\in\pref{\stX i}{\VX i}$, giving $\VX i\get{\stX i} = \VX i(\stX i,\actX i)$. By Algorithm~\ref{alg:update},
    \begin{align*}
        \VX{i+1}(\stX i,\actX i)
                & = \clamp{\VX{i}(\stX i,\actX i) + \max(\VX i\get{\stX{i+1}},\R(\stX i,\actX i)) - \step- \VX i\get{\stX i}}\\
                &=\clamp{\VX{i}\get{\stX i} + \max(\VX i\get{\stX{i+1}},\R(\stX i,\actX i)) - \step- \VX i\get{\stX i}}\\
                &=\clamp{\max(\VX i\get{\stX{i+1}},\R(\stX i,\actX i)) - \step}.
    \end{align*}
    By validity, $\clamp{\max(\VX i\get{\stX{i+1}},\R(\stX i,\actX i)) - \step}\geq \VX i(\stX i,\actX i)$.
    Overall, 
    \[
        \VX i\get{\stX i}=\VX i(\stX i,\actX i)\leq\VX{i+1}(\stX i,\actX i)\leq\VX{i+1}\get{\stX i}.
    \]
\end{proof}

We summarize what we have so far: 
\begin{itemize}
    \item We eventually reach a configuration $\cnftupX j$ where $\VX j$ is valid.
    \item After $\cnftupX j$, value functions remain valid and state values do not decrease.
\end{itemize}
After $\cnftupX j$, each state must eventually stop changing its value. Otherwise, since the only change to a state value would be a strict increment, we would see infinitely many state values, and thus infinitely many configurations (contradicting Lemma~\ref{lem:finite-cnf}).
Therefore, somewhere after $\cnftupX j$, there is an infinite suffix $\run'$ in which state values no longer change.
Let $\cnftupX k$ denote the first configuration of $\run'$.
In the rest of the proof, we show that $\VX k$ is optimal. Hence, all value functions in $\run'$ turn out to be optimal. 
Overall, all value functions in $\run'$ are both valid and optimal, as desired.

Abbreviate $\V=\VX k$.
Towards a contradiction, suppose $\V$ is not optimal. Since $\V$ is valid, by Property~\ref{prop:explore-valid-leq-opt} (below) we know that $\V\get\st\leq\optval\st$ for each $\st\in\states$. So, if $\V$ is not optimal, there is at least one $\st\in\states$ with $\V\get\st<\optval\st$.

\begin{property}\label{prop:explore-valid-leq-opt}
    Let $\V$ be a value function. If $\V$ is valid then for each $\st\in\states$ we have
    $
        \V\get\st \leq \optval\st.
    $
    \proofapp{prop:explore-valid-leq-opt--PROOF}
\end{property}

Consider the set
\[
    \sub\V = \set{\st\in\states\mid\V\get\st<\optval\st}.
\]
We select one state $\st\in\sub\V$ with the highest optimal value, i.e., $\st$ satisfies
\[
    \optval\st = \max\set{\optval{\st'}\mid\st'\in\sub\V}.
\]
We show that after $\cnftupX k$, which is the first configuration of suffix $\run'$, the value of $\st$ strictly increases; this would be a contradiction by choice of $\run'$.

By definition of optimal value, there is an action-path $p$ starting at $\st$, with $\optval\st = \pval p$. 
Let $(\st,\act)$ be the first pair of $p$, and denote $\Det\st\act{\st'}$.
By Property~\ref{prop:inf-visit} (below) we execute $(\st,\act)$ infinitely often in run $\run$.
\begin{property}\label{prop:inf-visit}
    In each exploring run, for each $(\st,\act)\in\states\times\actions$ there are infinitely many transitions in which we execute the pair $(\st,\act)$.
\end{property}
\begin{proof}
    Since there are a finite number of configurations (by Lemma~\ref{lem:finite-cnf}), we can consider a configuration $(\alt{\stX 1},\alt{\VX 1})$ that occurs infinitely often in the run.
    By connectedness of the task, there is a path in the state space
    \[
        \alt{\stX 1}\edge{\alt{\actX 1}}\alt{\stX 2}\ldots\alt{\stX n}\edge{\alt{\actX{n}}}\alt{\stX{n+1}},
    \]
    where $n\geq 1$ and $\alt{\stX{n+1}}=\st$. By the built-in fairness assumption of exploring runs (see Section~\ref{sub:runs}), we infinitely often follow $(\alt{\actX 1},\alt{\stX 2})$ from configuration $(\alt{\stX 1},\alt{\VX 1})$. This results in a configuration $(\alt{\stX 2},\alt{\VX 2})$ that also occurs infinitely often. The reasoning can be repeated to arrive at a configuration $(\alt{\stX{n+1}},\alt{\VX{n+1}})$ with $\alt{\stX{n+1}}=\st$ that occurs infinitely often. The reasoning can now be applied one more time. Denoting $\Det\st\act{\st'}$, from configuration $(\alt{\stX{n+1}},\alt{\VX{n+1}})$ we infinitely often follow $(\act,\st')$.
\end{proof}
    
Because the part of $\run$ before $\run'$ is finite, the pair $(\st,\act)$ is executed infinitely often in $\run'$. So, there are infinitely many transitions in $\run'$ of the following form:
\[
    \transX h,
\]
where $(\stX h,\actX h)=(\st,\act)$. We show below that $\VX h(\st,\act)<\VX{h+1}(\st,\act)$. So, as long as the value of $\st$ stays strictly below $\optval\st$ we can strictly increase the value of $(\st,\act)$. This always leads to a moment where the value of $\st$ in its entirety is strictly increased. This is the sought contradiction.

We are left to show $\VX h(\st,\act)<\VX{h+1}(\st,\act)$. Note that $\stX{h+1}=\st'$ by determinism. Also, by assumption on the unchanging values in $\run'$, we have (1) $\VX h\get\st=\V\get\st<\optval\st$ and (2) $\VX h\get{\st'}=\V\get{\st'}$. 
In Algorithm~\ref{alg:update}, the value change during the above transition is
\begin{align}
    d   &= \max(\VX h\get{\stX{h+1}},\R(\st,\act))-\step-\VX h\get\st \nonumber\\
        &= \max(\V\get{\st'},\R(\st,\act))-\step-\V\get\st.\label{eq:d}
\end{align}    
It suffices to show that $d> 0$.
We recall from earlier the action path $p$ starting at $\st$, with $\optval\st = \pval p$.
By Property~\ref{prop:explore-path-cases} (below) we have two cases: either $\optval\st = \clamp{\R(\st,\act)-\step}$ or $\optval\st=\optval{\st'}-\step$.

\begin{property}\label{prop:explore-path-cases}
    Let $\st\in\states$ and let $p$ be an action-path for $\st$ with $\pval p=\optval\st$. Let $(\st,\act)$ be the first pair of $p$, and denote $\Det\st\act{\st'}$.
    We have either 
    \begin{itemize}
        \item $\optval\st=\clamp{\R(\st,\act)-\step}$; or,
        \item $\optval\st=\optval{\st'}-\step$.
    \end{itemize}
    \proofapp{prop:explore-path-cases--PROOF}
\end{property}
\noindent
We consider each case in turn.
    \paragraph{First case}
    Suppose $\optval\st = \clamp{\R(\st,\act)-\step}$. Since $0\leq\V\get\st<\optval\st$, we have $\optval\st>0$, so we write more simply $\optval\st = \R(\st,\act)-\step$.
    Using Equation~\eqref{eq:d}, 
    \begin{align*}
        d   &= \max(\V\get{\st'},\R(\st,\act))-\step-\V\get\st \\
            &\geq \R(\st,\act)-\step-\V\get\st\\
            &=\optval\st - \V\get\st.
    \end{align*}   
    Since $\V\get\st<\optval\st$ by assumption, we obtain $d> 0$.
    
    \paragraph{Second case}
    Suppose $\optval\st=\optval{\st'}-\step$. Since $\step\geq 1$, we have $\optval\st<\optval{\st'}$. Necessarily $\st'\notin\sub\V$, because otherwise $\optval\st<\max\set{\optval{\st''}\mid\st''\in\sub\V}$, which is false by choice of $\st$. Therefore $\V\get{\st'}=\optval{\st'}$. We now complete the reasoning, continuing from Equation~\eqref{eq:d}:
    \begin{align*}
        d   &= \max(\V\get{\st'},\R(\st,\act))-\step-\V\get\st \\
            &\geq \V\get{\st'}-\step-\V\get\st\\
            &=\optval{\st'}-\step - \V\get\st\\
            &=\optval\st - \V\get\st.
    \end{align*}   
    Like in the previous case, since $\V\get\st<\optval\st$ by assumption, we obtain $d> 0$.

\subsection{Proof of Corollary~\ref{corol:fixed}}
\label{corol:fixed--PROOF}

\newcommand{\preff}[1]{\fname{pref}(#1)}


Let $\task=\tasktup$ be a \DC\ task. Let $\run$ be an exploring run, and let $\run'$ be the infinite suffix where all value functions are both optimal and consistent, as given by Theorem~\ref{theo:explore}.
We show that in $\run'$ the value function eventually becomes fixed. 

By Property~\ref{prop:final} (below), we know that for each state the set of preferred actions is fixed throughout $\run'$. For each $\st\in\states$, let $\preff\st\subseteq\actions$ denote the final set of actions preferred by $\st$ in $\run'$.
Also by Property~\ref{prop:final}, for each $(\st,\act)\in\states\times\actions$, if $\act\notin\preff\st$, we know that the value of the non-preferred pair $(\st,\act)$ can never be increased in $\run'$.
Therefore, the value of non-preferred state-action pairs becomes constant.

Hence, there is a suffix $\run''$ of $\run'$ in which states always prefer the same actions and in which the value of non-preferred state-action pairs is constant. We show that all value functions in $\run''$ are the same. Thereto, let us consider two configurations $\cnftupX i$ and $\cnftupX j$ in $\run''$. We show for each $(\st,\act)\in\states\times\actions$ that $\VX i(\st,\act)=\VX j(\st,\act)$. We distinguish between the following cases:
\begin{itemize}
    \item Suppose $\act\notin\preff{\st}$. Then $\VX i(\st,\act)=\VX j(\st,\act)$ by choice of $\run''$.
    
    \item Suppose $\act\in\preff{\st}$.
    Then $\act\in\pref{\st}{\VX i}$ and $\act\in\pref{\st}{\VX j}$. Subsequently, $\VX i(\st,\act)=\VX i\get\st$ and $\VX j(\st,\act)=\VX j\get\st$. Moreover, by optimality of $\VX i$ and $\VX j$, 
    \[
        \VX i\get\st = \optval\st =\VX j\get\st.
    \]
    Overall, $\VX i(\st,\act)=\VX j(\st,\act)$.
\end{itemize}
 
\begin{property}\label{prop:final}
Consider a transition in $\run'$,
\[
    \transX i.
\]
We have
\begin{enumerate}
    \item $\pref{\stX i}{\VX{i+1}}=\pref{\stX i}{\VX i}$;
    \item If $\actX i\notin\pref{\stX i}{\VX i}$ then $\VX{i+1}(\stX i,\actX i)\leq\VX i(\stX i,\actX i)$.
\end{enumerate}
Note: only the preferred actions of $\stX i$ could change; hence, for each $\st\neq\stX i$ we have $\pref{\st}{\VX{i+1}}=\pref{\st}{\VX i}$.
\end{property}
\begin{proof} We distinguish between two cases, depending on whether $\actX i$ is preferred or not.

\paragraph*{Preferred action}
If $\actX i\in\pref{\stX i}{\VX i}$ then consistency of $\VX i$ allows us to apply Lemma~\ref{lem:consistent} (below) to know $\VX{i+1}=\VX{i}$, i.e., executing preferred actions does not modify the value function. Hence, $\pref{\stX i}{\VX{i+1}}=\pref{\stX i}{\VX i}$. 

\paragraph*{Non-preferred action}
Suppose $\actX i\notin\pref{\stX i}{\VX i}$. By Algorithm~\ref{alg:update}, we have $\VX{i+1}(\stX i,\actX i)=\clamp{\VX i(\stX i,\actX i) + d}$ where 
\[
    d=\max(\VX{i}\get{\stX{i+1}},\R(\stX i,\actX i))-\step -\VX{i}\get{\stX i}.
\]
We show below that $d\leq 0$, which implies $\VX{i+1}(\stX i,\actX i)\leq\VX i(\stX i,\actX i)$. Moreover, $\pref{\stX i}{\VX{i+1}}=\pref{\stX i}{\VX i}$: since $\actX i\notin\pref{\stX i}{\VX i}$, the only way to change the set of preferred actions would be to make $\actX i$ into a preferred action by a strict value increase (which does not happen).

We are left to show $d\leq 0$. To start, we use that $x\leq\clamp{x}$ for each integer $x$; hence,
\[
    d \leq \clamp{\max(\VX{i}\get{\stX{i+1}},\R(\stX i,\actX i))-\step} -\VX{i}\get{\stX i}.
\]
Next, by optimality of $\VX i$,
\[
    d \leq \clamp{\max(\optval{\stX{i+1}},\R(\stX i,\actX i))-\step} -\optval{\stX i}.
\]
Subsequently, the clamped part on the right-hand side can be simplified with Lemma~\ref{lem:opt}, to obtain
\[
    d \leq \optval{\stX i} -\optval{\stX i} = 0.
\]
\end{proof}

\begin{lemma}\label{lem:consistent}
    For each deterministic task, for each transition    
    \[
        (\st,\V)
            \jump{\act,\st'}
        (\st',\V'),
    \]
    if $\V$ is consistent and $\act\in\pref\st\V$ then $\V'=\V$.
\end{lemma}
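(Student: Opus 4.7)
The plan is to reduce the equality $\V'=\V$ to the single state-action pair that could possibly change, namely $(\st,\act)$, and then verify that the update formula leaves its value untouched because consistency already pre-records the fixed-point condition.

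First, I would invoke Definition~\ref{def:trans} to observe that $\V'(\st'',\act'') = \V(\st'',\act'')$ for every $(\st'',\act'')\neq(\st,\act)$, so only the value at the executed pair $(\st,\act)$ needs to be examined. Next, I would compute $\V'(\st,\act)$ using Algorithm~\ref{alg:update}: writing $d = \max(\V\get{\st'},\R(\st,\act))-\step-\V\get\st$, we have $\V'(\st,\act) = \clamp{\V(\st,\act)+d}$.

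The assumption $\act\in\pref\st\V$ gives $\V(\st,\act)=\V\get\st$, so inside the clamp the two occurrences of $\V\get\st$ cancel and we obtain
\[
    \V'(\st,\act) = \clamp{\max(\V\get{\st'},\R(\st,\act))-\step}.
\]
Since the task is deterministic, the $\st'$ appearing in the transition is precisely the unique successor of $(\st,\act)$ referenced by the consistency condition. Applying consistency at $(\st,\act)$ therefore yields $\clamp{\max(\V\get{\st'},\R(\st,\act))-\step}=\V\get\st$, and using $\V\get\st=\V(\st,\act)$ once more gives $\V'(\st,\act)=\V(\st,\act)$. Combined with the first step, this proves $\V'=\V$.

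There is no real obstacle here: determinism is used only to identify the unique $\st'$ that consistency talks about, and the clamping collapses trivially because the quantity being clamped is already equal to $\V\get\st\geq 0$. The entire argument is a direct unfolding of Definition~\ref{def:trans}, Algorithm~\ref{alg:update}, and the consistency equation.
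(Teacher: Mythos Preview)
Your proof is correct and follows essentially the same route as the paper. The only difference is cosmetic: the paper splits into the two cases $\max(\V\get{\st'},\R(\st,\act))-\step\geq 0$ and $<0$ to handle the clamp explicitly, whereas you avoid the split by observing that after the cancellation $\V'(\st,\act)=\clamp{\max(\V\get{\st'},\R(\st,\act))-\step}$ is already exactly the right-hand side of the consistency equation, so it equals $\V\get\st=\V(\st,\act)$ directly.
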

\begin{proof}
    By consistency of $\V$, we have
    \[
        \V\get\st=\clamp{\max(\V\get{\st'},\R(\st,\act)) - \step}.
    \]
    Next, we look at Algorithm~\ref{alg:update}. There are two cases:
    \begin{itemize}
    \item Suppose $\max(\V\get{\st'},\R(\st,\act)) - \step\geq 0$. Then $\V\get\st=\max(\V\get{\st'},\R(\st,\act)) - \step$. Hence, $d=0$ and surely $\V'=\V$.
    
    \item Suppose $\max(\V\get{\st'},\R(\st,\act)) - \step< 0$. Then $\V\get\st=0$, making $\V(\st,\act)=0$. Also, $d<0$, causing 
    \[
        \V'(\st,\act)=\clamp{\V(\st,\act)+d}=\clamp{d}=0=\V(\st,\act).
    \]        
    Overall, $\V'=\V$.
    \end{itemize}
\end{proof}

\subsection{Proof of Theorem~\ref{theo:sprint}}
\label{theo:sprint--PROOF}

    Let $\task=\tasktup$ be the \DC\ task. Consider a greedy run that starts with a value function $\V$ that is both optimal and consistent.
    We recall by Lemma~\ref{lem:sprint-sequence} that the run is an infinite sequence of value-sprints.    
    Moreover, by Lemma~\ref{lem:consistent}, since the run is greedy, every configuration in the run uses the value function $\V$.
    
    Consider an arbitrary value-sprint in the run:    
    \[
    (\stX 1, \V)
        \jump{\actX 1,\,\stX{2}}
        \ldots
        \jump{\actX{n-1},\,\stX{n}}
    (\stX n, \V)
        \jump{\actX n,\,\stX{n+1}}
    (\stX{n+1}, \V),
    \]    
    where $n\geq 1$.
    Here, $(\stX 1,\V)$ is not necessarily the first configuration of the run; it could be anywhere in the run.
    The corresponding action-path is
    \[
        p = (\stX 1,\actX 1),\ldots,(\stX n,\actX n).
    \]
    We show $\pval p=\optval{\stX 1}$.
    For each $i\in\set{1,\ldots,n}$, we define the suffix 
    \[
        p_i=(\stX i,\actX i),\ldots,(\stX n,\actX n).
    \]
    Note that $p_1=p$.
    Below, we show by induction on $i=n,\ldots,1$ that $\V\get{\stX i}=\pval{p_i}$.
    This eventually gives $\V\get{\stX 1}=\pval{p_1}=\pval p$. Subsequently, since $\V\get{\stX 1}=\optval{\stX 1}$ by optimality of $\V$, we obtain $\pval p=\optval{\stX 1}$, as desired.
           
    \paragraph*{Base case}
    By definition, $\pval{p_n}=\clamp{\R(\stX n,\actX n)-\step}$.
    If $\V\get{\stX n}=0$ then $\clamp{\max(\V\get{\stX{n+1}}, \R(\stX n,\actX n)) - \step} = 0$ by consistency of $\V$, enforcing $\R(\stX n,\actX n) \leq \step$. In that case, $\pval{p_n}=0=\V\get{\stX n}$.        
    
    Henceforth, we assume $\V\get{\stX n}> 0$. The consistency property for $\stX n$ may now be written as
    \begin{equation}
        \V\get{\stX n}=\max(\V\get{\stX{n+1}}, \R(\stX n,\actX n)) - \step.\label{eq:basecase-sprint-end}
    \end{equation}
    Necessarily $\V\get{\stX{n+1}}<\R(\stX n,\actX n)$; otherwise, Equation~\eqref{eq:basecase-sprint-end} becomes $\V\get{\stX n}=\V\get{\stX{n+1}} - \step$, which violates condition~\ref{enu:sprint-end} in the definition of value-sprint (Definition~\ref{def:sprint}), namely, $\V\get{\stX{n}}>\V\get{\stX{n+1}}-\step$.
    Hence, Equation~\eqref{eq:basecase-sprint-end} becomes
    \[
        \V\get{\stX n}= \R(\stX n,\actX n) - \step.
    \]
    Since $\V\get{\stX n}>0$, we may write $\V\get{\stX n}= \clamp{\R(\stX n,\actX n) - \step}$. Thus $\V\get{\stX n}=\pval{p_n}$.
    
    \paragraph*{Inductive step}
    If $n=1$ then no inductive step is needed. Henceforth, assume $n\geq 2$.
    Let $i\in\set{2,\ldots,n}$. We assume as induction hypothesis that $\V\get{\stX i}=\pval{p_i}$. 
    By Lemma~\ref{lem:pval} (below), we have
    \[
        \pval{p_{i-1}}=
            \max(\clamp{\R(\stX{i-1},\actX{i-1})-\step}, \clamp{\pval{p_{i}}-\step}).
    \]
    By subsequently applying the induction hypothesis $\V\get{\stX{i}}=\pval{p_i}$, we get
    \begin{align*}
        \pval{p_{i-1}} 
            &= \max(\clamp{\R(\stX{i-1},\actX{i-1})-\step},\clamp{\V\get{\stX i}-\step}
            ) \\
            & = \clamp{\max(\V\get{\stX i},\R(\stX{i-1},\actX{i-1})) - \step}.
    \end{align*}
    Since $\actX{i-1}\in\pref{\stX{i-1}}\V$ by greediness of the value-sprint, the last line equals $\V\get{\stX{i-1}}$ by consistency.
    Hence, $\V\get{\stX{i-1}}=\pval{p_{i-1}}$.

\begin{lemma}\label{lem:pval}
    Consider a deterministic task with reward function $\R$. Let $p=(\stX 1,\actX 1),(\stX 2,\actX 2),\ldots,(\stX n,\actX n)$ be an action-path with $n\geq 2$.
    Let $p'=(\stX 2,\actX 2),\ldots,(\stX n,\actX n)$ be the suffix of $p$ after removing the first pair $(\stX 1,\actX 1)$. We have
    \[
        \pval{p} = \max(\clamp{\R(\stX 1,\actX 1)-\step}, \clamp{\pval{p'}-\step}).
    \]
    \proofapp{lem:pval--PROOF}
\end{lemma}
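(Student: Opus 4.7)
The plan is to unfold both sides using the definition of $\pval$ (Equation~\eqref{eq:pval}) and reconcile them via a small extension of the clamp identity stated in the preliminaries.

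First, I would write the suffix $p'$ explicitly with its own indexing starting from $1$, so that
\[
    \pval{p'} = \max\{\clamp{\R(\stX{j},\actX{j}) - (j-1)\step} \mid j \in \{2,\ldots,n\}\},
\]
after re-labelling index $i$ in the suffix as $i = j - 1$. The only subtle point is that time positions shift by one relative to the original path, which accounts for the extra $\step$ that will appear later.

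Next, I would establish (or just invoke, since it generalises the identity already noted in the preliminaries) the auxiliary fact that for any integers $x_1,\ldots,x_m$ and any $k\geq 0$,
\[
    \clamp{\max_{j}\clamp{x_j} - k} = \max_{j}\clamp{x_j - k}.
\]
This is a quick case split: if every $x_j \leq 0$ then both sides equal $0$; otherwise the outer maximum on each side is attained at the same index $j^*$ and both sides equal $\clamp{x_{j^*}-k}$. Applying this with $x_j = \R(\stX j,\actX j) - (j-1)\step$ and $k = \step$ lets me rewrite
\[
    \clamp{\pval{p'} - \step} = \max\{\clamp{\R(\stX{j},\actX{j}) - j\step} \mid j \in \{2,\ldots,n\}\}.
\]

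Finally, taking the outer maximum with $\clamp{\R(\stX 1,\actX 1)-\step}$ merges in the $j=1$ term, yielding
\[
    \max(\clamp{\R(\stX 1,\actX 1)-\step},\clamp{\pval{p'}-\step}) = \max\{\clamp{\R(\stX{j},\actX{j})-j\step} \mid j\in\{1,\ldots,n\}\} = \pval{p},
\]
which is the claimed equality. The main obstacle is purely bookkeeping: keeping the two indexings (original positions in $p$ versus re-set positions in $p'$) straight so that the $-\step$ shift lines up correctly; beyond that the proof is just the clamp identity applied once.
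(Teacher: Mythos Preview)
Your proposal is correct and follows essentially the same route as the paper: both arguments unfold the definition of $\pval{\cdot}$, split off the $i=1$ term, and use the clamp/max commutation identities to absorb the extra $-\step$ shift coming from the index offset in $p'$. The only cosmetic difference is direction---the paper rewrites $\pval{p}$ into the right-hand side while you rewrite the right-hand side into $\pval{p}$---and your packaged identity $\clamp{\max_j\clamp{x_j}-k}=\max_j\clamp{x_j-k}$ is exactly the combination of $\clamp{\max(x,y)}=\max(\clamp{x},\clamp{y})$ and $\clamp{\clamp{X}-k}=\clamp{X-k}$ that the paper applies step by step.
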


\section{Greedy navigation under nondeterminism}
\label{sec:greedy}

As suggested by Example~\ref{ex:explore-fluctuate}, optimality is not well-defined for nondeterministic tasks, due to persistent value fluctuations.
Therefore, as a measure of agent quality in nondeterministic tasks, we propose to avoid rewardless cycles in the state space.
Avoiding cycles is a constraint on the time budget to reach reward.
    This could be useful, for example in animals, when reward is associated with survival.
Exploring runs, however, might repeatedly lead the agent into rewardless cycles.
In this section, we show the usefulness of a purely greedy approach to avoid rewardless cycles in nondeterministic navigation problems. 

We recall the definition of navigation problems from Definition~\ref{def:nav} in Section~\ref{sub:shortest}.

\subsection{Greedy navigation}
We recall from Section~\ref{sub:runs} that in a greedy run the agent is constantly following preferred actions, without exploring other possibilities. This corresponds to setting $\epsilon=0$ in Algorithm~\ref{alg:full}. We emphasize that a random action is chosen from the preferred actions. This reflects that the agent deems all preferred actions as equally desirable. The agent can only behave purely randomly on a state when it prefers no actions on that state.

The following Example~\ref{ex:cycle} motivates the use of greedy runs.
\begin{example}\label{ex:cycle}    
    Consider the task in Figure~\ref{fig:cycle}.
    The reward function assigns a nonzero reward only to the pair $(2,b)$. 
    Exploring runs do not try to avoid cycles, and therefore the agent could witness very long cycles without reward if at state $2$ the action $a$ is selected several times in succession. This suggests to use greedy runs as a possible way to eventually avoid cycles.
    \qed 
\end{example}
    
    \begin{figure}
    \begin{center}
        \includegraphics[width=0.7\textwidth]{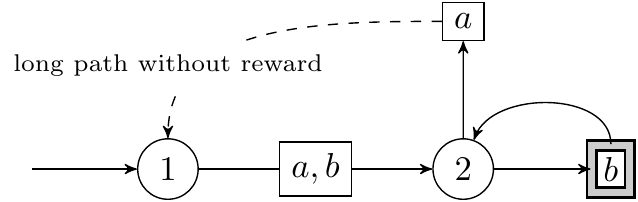}
    \end{center}
        \caption{A task with start state $1$, and at least one other state $2$. The available actions are $a$ and $b$. The only nonzero reward is assigned to $(2,b)$, indicated by a shaded box.}
    \label{fig:cycle}
    \end{figure}

An important assumption in navigation problems, as defined in Section~\ref{sub:shortest}, is that the reward is large enough to bridge large distances in the state space. The following Example~\ref{ex:small-reward} illustrates why the greedy approach sometimes fails to avoid cycles when reward is too small. We will therefore restrict attention to navigation problems, where the issue of small reward does not occur.

\begin{example}\label{ex:small-reward}
    We consider the task in Figure~\ref{fig:small-reward}.
    Suppose for simplicity that $\step=1$.
    If a greedy run starts with a zero value function, and we would perform $(3,a)$ before $(3,b)$ then the value of $(3,a)$ will become $1$.
    Subsequently, by greediness, action $a$ will be executed whenever the agent visits state $3$. Value $1$ is however too small to be propagated towards $(2,b)$, and the agent will remain stuck with a value of zero for both $(2,a)$ and $(2,b)$. This way, actions $a$ and $b$ are both preferred in state $2$, possibly causing the greedy run to witness long cycles without reward if action $a$ would be chosen at state $2$ several times in succession.
    
    Some greedy runs, however, will perform at least twice $(3,b)$ before $(3,a)$. This causes $(3,b)$ to be assigned a value of $2$, which in turn causes $(2,b)$ to be assigned value $1$. In that scenario, the agent will henceforth never witness cycles without reward.
    \qed
\end{example}    

    \begin{figure}
    \begin{center}
    \includegraphics[width=0.7\textwidth]{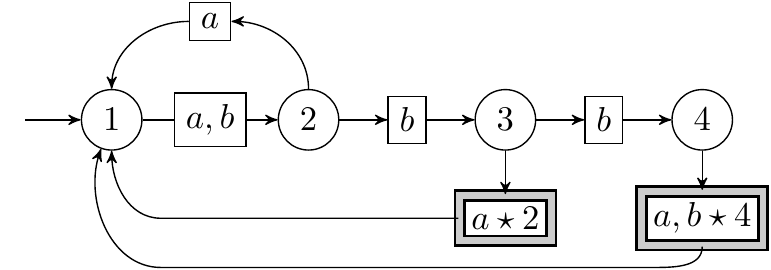}
    \end{center}
        \caption{Using the same graphical notation as in Figure~\ref{fig:cycle}, this task has the following nonzero rewards: $(3,a)\mapsto 2$, $(4,a)\mapsto 4$, and $(4,b)\mapsto 4$. The rewarding state-action pairs are indicated by a shaded box, and the reward quantity is indicated by a star.}
    \label{fig:small-reward}
    \end{figure}

\subsection{Reducibility}
\label{sub:reduce}
Let $\task=\tasktup$ be a navigation problem, with nonzero reward $\M$.
To make assumptions about nondeterminism, we formalize a notion called  reducibility, generalizing solvability mentioned for \DC\ navigation problems in Section~\ref{sub:shortest}.
We first define
\[
    \rewards\task =\set{(\st,\act)\in\states\times\actions\mid\R(\st,\act)=\M},
\]
and,
\[
    \goals\task=\set{\st\in\states\mid\exists\act\in\actions\text{ with }(\st,\act)\in\rewards\task}.
\]
Intuitively, $\goals\task$ contains the states where immediate reward can be obtained.
Now, we define the set of reducible states of $\task$ as follows,
\[
    \reduce\task = \bigcup_{i=1}^\infty \reducelayer i,
\]
where 
\begin{itemize}
    \item $\reducelayer 1=\goals\task$; and,
    \item for each $i\geq 2$, 
    \[
    \reducelayer i = \reducelayer{i-1} \cup \set{\st\in\states\mid\exists\act\in\actions\text{ with }\tr(\st,\act)\subseteq \reducelayer{i-1}}.
    \]
\end{itemize}
Intuitively, $\reduce\task$ represents a stack of layers, as illustrated in Figure~\ref{fig:reduce}. Set $\reducelayer 1$ is the base layer, containing the goal states. Set $\reducelayer 2$ adds those states that have an action leading into $\reducelayer 1$, closer to reward. We keep stacking layers until we can add no more states.
Despite the nondeterminism in each state-action application, each state in $\reduce\task$ can still approach reward. 
Since $\states$ is finite, there is always a smallest index $n\geq 1$ such that $\reducelayer n=\reducelayer{n+1}$, i.e., $\reducelayer n$ is the fixpoint of the sequence.

\begin{figure}
    \begin{center}
    \includegraphics[height=0.2\textheight]{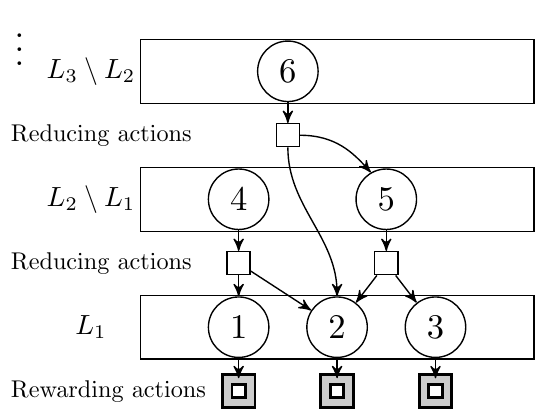}
    \caption{Illustration of the layer structure in reducibility.}
    \label{fig:reduce}
    \end{center}
\end{figure}

We abbreviate $\nonreduce\task=\states\setminus\reduce\task$.
Note that for each $\st\in\nonreduce\task$, for each $\act\in\actions$, we must have $\tr(\st,\act)\cap\nonreduce\task\neq\emptyset$; otherwise $\st\in\reduce\task$. This means that once the agent enters a non-reducible state, the nondeterminism can keep the agent inside the non-reducible states for arbitrary amounts of time. There is no reward in non-reducible states: in a state $\st$, if there would be a rewarding action then $\st\in\goals\task\subseteq\reduce\task$.

\begin{definition}\label{def:reduce}
We say that a task $\task=\tasktup$ is reducible if the following conditions are satisfied:
\begin{enumerate}
    \item\label{enu:reduce-start} $\startstates\subseteq\reduce\task$; and,
    
    \item\label{enu:reduce-escape} for each $\st\in\nonreduce\task$ and each $\st'\in\startstates$ there is a path 
    \[
        \stX1\edge{\actX 1}\ldots\edge{\actX{n-1}}\stX n,
    \]
    where $(\stX 1,\stX n)=(\st,\st')$ and $\set{\stX 2,\ldots,\stX{n-1}}\subseteq\nonreduce\task$.%
        \footnote{We allow $n=2$, in which case the path consists of a single jump.}
\end{enumerate}
\end{definition}
The first condition says that all start states should have a strategy to reward. Whenever the agent would stumble onto a non-reducible state, the second condition provides an escape route back to any start state, entirely tunneled through non-reducible states.

To motivate the assumption of reducibility, the following example gives a non-reducible navigation problem that could forever cause cycles without reward, even with greedy runs.
\begin{example}
    We consider the task shown in Figure~\ref{fig:non-reducible}. For simplicity, we assume $\step=1$ and that initial values are zero. 
    State $2$ is the only non-reducible state.
    When inside state $2$, nondeterminism can keep the agent inside state $2$ for arbitrary amounts of time, leading to cycles without reward. 
    \ramp\ can however not always learn to avoid entering state $2$.
    In a greedy run, if the agent would perform $(1,a)$ three times before $(1,b)$ then the following happens: 
        $(3,a)$ or $(3,b)$ gets value $3$; next,
        $(2,a)$ or $(2,b)$ gets value $2$; and,
        $(1,a)$ gets value $1$.%
        \footnote{Recall that greedy runs, as defined in Section~\ref{sub:runs}, have a built-in fairness condition that would prevent the agent from being stuck inside state $2$ forever.}
    But then the agent would keep running into state $2$; in that case, the greedy run could witness many cycles without reward.
    
    Note that if there would have been an additional escape option from state $2$ back to state $1$, say $1\in\tr(2,a)$, then condition~\ref{enu:reduce-escape} of reducibility would be satisfied. 
    In that case, if we are stuck in state $2$ long enough, both actions $a$ and $b$ are repeatedly tried, whose value is diminished to zero; that is the right moment to jump back to state $1$. The subsequent visit from $1$ to $2$, through action $a$, will make the value of $(1,a)$ also zero. To try action $b$ at $1$, we should however return from $2$ to $1$ before witnessing the rewarding exit to state $3$. In a greedy run, the built-in fairness condition ensures that this right sequence of events can not be postponed indefinitely.
    \qed    
\end{example}

\begin{figure}
    \begin{center}
    \includegraphics[height=0.15\textheight]{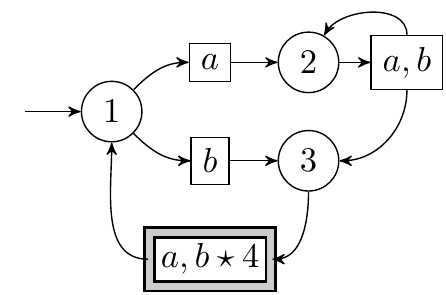}
    \end{center}
        \caption{Using the same graphical notation as in Figure~\ref{fig:cycle}, this task has the following nonzero rewards: $(3,a)\mapsto 4$, $(3,b)\mapsto 4$.}
    \label{fig:non-reducible}
    \end{figure}

\begin{remark}[Relationship to relocations]
    For a reducible state, the agent can trust that certain actions will bring the agent (gradually) closer to reward.     
    On reducible states, we rule out that an  external observer could intervene at arbitrary moments to send the agent to specific places in the state space. 
    This ensures that the agent can in principle follow nice ramp shapes that peak at reward.
    By contrast, if we would suddenly relocate the agent without reward to a low-value state, or without reward to a high-value state, then the ramp-shape of the values might be locally damaged. This is suggested in Figure~\ref{fig:ramp-damage}.
    \qed
\end{remark}

\begin{figure}
\newcommand{\wOne}{0.4\textwidth}
\begin{center}
\begin{tabular}{cc}
    \includegraphics[width=\wOne]{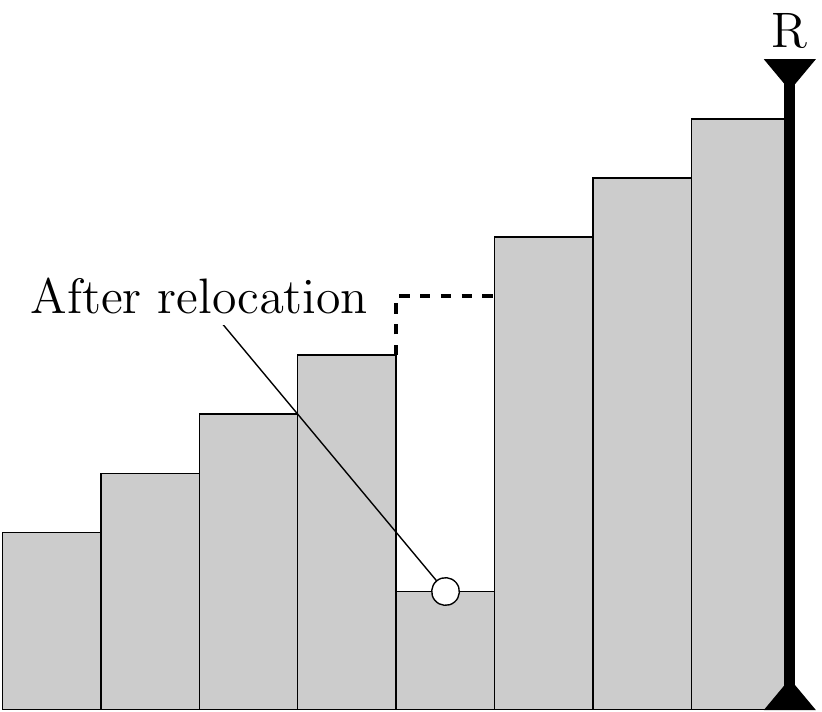}&
    \includegraphics[width=\wOne]{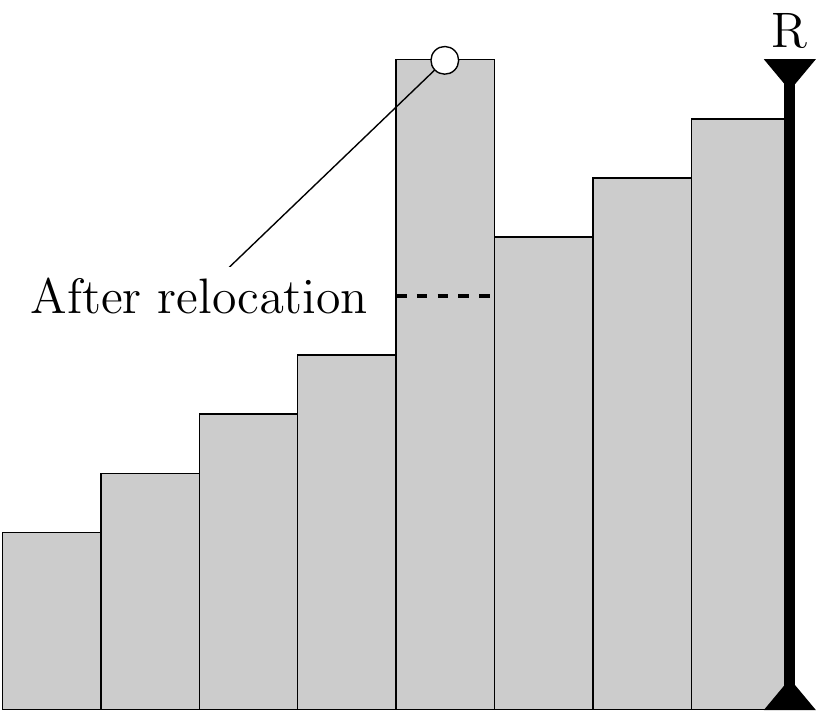}
    \\
    (A) & (B) 
\end{tabular}
\end{center}
\caption{When the agent is relocated at wrong moments, an otherwise good ramp shape might be disrupted. The value could be artificially decreased (A) or increased (B).}
\label{fig:ramp-damage}
\end{figure}

\subsection{Restartability}

We also consider an additional technical assumption on tasks:
\begin{definition}\label{def:restartable}
    We say that a navigation problem $\task=\tasktup$ is restartable if for each $(\st,\act)\in\rewards\task$ we have $\tr(\st,\act)=\startstates$. 
\end{definition}
\noindent This assumption allows the environment to put the agent at another start state after reaching a goal. Possibly such start states are very near to the recently obtained reward, making the movement of the agent sometimes appear seamless in the state space.
    This observation indicates that restartable navigation problems encompass some practical navigation cases on a map, where sometimes we want to simulate that the agent is simply staying at a certain location after obtaining a reward.
    
In combination with reducibility, the assumption of restartability ensures that we remain inside reducible states once we obtain reward. This way, the agent can in principle continually navigate towards reward without getting trapped in non-reducible states.
The following example shows that reducibility by itself does not ensure that the agent can learn to avoid rewardless cycles, but that the combination with restartability is useful.
\begin{example}
    We consider the navigation problem shown in Figure~\ref{fig:non-restartable}(A). There are two states $1$ and $2$, and one possible action $a$.
    The task is not restartable.
    After obtaining reward through the pair $(1,a)$, the agent could be trapped inside the non-reducible state $2$ for arbitrary amounts of time, leading to rewardless cycles.
    
    In Figure~\ref{fig:non-restartable}(B), we consider a modification of subfigure (A) to a reducible and restartable task. Note that state $2$ has now become a start state. We have also removed state $2$ as a successor state of the pair $(2,a)$, ensuring that $2$ is reducible (which is a property demanded for start states by reducibility).
    In this simple example it is immediately clear that every cycle contains reward. More generally, in Theorem~\ref{theo:greedy} (below), we will see the useful effect of combining reducibility and restartability on learning in greedy runs.
    \qed
\end{example}

\begin{figure}
\newcommand{\wOne}{0.4\textwidth}
\begin{center}
\begin{tabular}{cc}
    \includegraphics[width=\wOne]{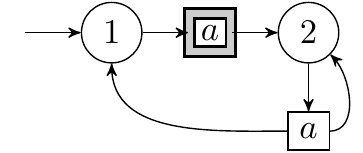}&
    \includegraphics[width=\wOne]{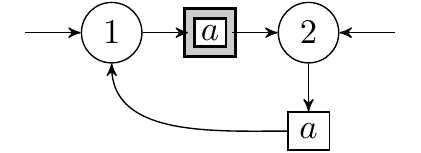}
    \\
    (A) & (B) 
\end{tabular}
\end{center}
\caption{(A) reducible but non-restartable navigation problem. (B) reducible and restartable navigation problem. Rewarding actions are indicated by a shaded box.}
\label{fig:non-restartable}
\end{figure}

\subsection{Navigation result}
As an abbreviation, we say that a navigation problem is \RR\ if the problem is both reducible and restartable.
On \RR\ navigation problems, \ramp\ successfully learns to avoid rewardless cycles in every greedy run:
\begin{theorem}\label{theo:greedy}
    On each \RR\ navigation problem with reward quantity $\M$, when initial values are below $\M$, in each greedy run, eventually all state cycles contain reward.
\end{theorem}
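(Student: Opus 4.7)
The plan has three phases: (1) show reward is obtained infinitely often in the greedy run, (2) show the value function evolves into a layered ramp respecting reducibility, and (3) deduce that walks between consecutive rewards are cycle-free, so every cycle contains reward.

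For (1), I argue by contradiction: assume no reward is obtained after some time $t_0$. Let $V_{\max}(\V) = \max_{(\st,\act)}\V(\st,\act)$. Rewardless preferred updates satisfy $\V'(\st,\act) = \clamp{\V\get{\st'}-\step} \leq V_{\max}(\V)-\step$, so no update raises a pair to $V_{\max}(\V)$, and the sequence $V_{\max}(\V_t)$ is monotone non-increasing; by Lemma~\ref{lem:finite-cnf} it stabilizes to some $v^\star\geq 0$. The count of pairs at value $v^\star$ in $\V_t$ is likewise non-increasing and strictly decreases whenever such a pair is updated. Any pair $(\st,\act)$ with $\V(\st,\act)=v^\star$ forces $\V\get\st=v^\star$ and makes $\act$ preferred, so fairness on the recurrent configurations ensures such updates occur infinitely often, forcing $v^\star=0$ on the visited region. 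With all visited values equal to $0$, every action at every visited state is preferred, so the visited region is closed under the transition function; by reducibility (Definition~\ref{def:reduce}) it must contain a goal state, whose rewarding action is preferred and eventually chosen by fairness, contradicting the no-reward assumption.

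For (2)--(3), restartability places each post-reward transition at a start state in $\reduce\task$, so the walk re-enters the reducible region repeatedly. Combined with infinitely many rewards from~(1), this drives an induction on the layer index $\ell$: after enough time, every reducible $\st$ with $\layer\st=\ell$ satisfies $\V\get\st \geq \M-\ell\step>0$ (using $\M>\ssize\states\step\geq\ell\step$), because $\st$ has the layer-reducing action whose nondeterministic successors all lie in $\reducelayer{\ell-1}$ with values $\geq\M-(\ell-1)\step$ once the previous layer has stabilized. Simultaneously, values of actions that can land in non-reducible states decay toward zero by the potential argument of~(1) restricted to the non-reducible sub-dynamics, which never see reward. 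Past some time $T$, therefore, preferred actions at every reducible state lead, on all nondeterministic outcomes, only into lower-layer reducible states, and goal states prefer their rewarding action. Each sub-walk between consecutive rewards then begins at a start state in $\reduce\task$, strictly descends through reducible layers, and ends at a goal with reward. Strict layer decrease forces the visited states of such a sub-walk to be distinct, so it is cycle-free, and every cycle in the run past $T$ must span at least one reward transition.

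The main obstacle is the separation argument in~(2): a preferred action at a reducible state can, due to nondeterminism, have a successor in the non-reducible region, and one must show that observing such a successor eventually demotes the action below preferred status. This requires intertwining the upward propagation of reward through reducible layers (driven by repeated reward and restartability) with the potential-based decay of values of actions reaching non-reducible states, ensuring the two dynamics consolidate rather than oscillating. Non-reducible states can still be entered nondeterministically, and their decaying values must not disturb the emerging ramp on reducible states.
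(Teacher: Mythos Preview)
Your phase~(2) induction---that eventually every reducible $\st$ with $\layer\st=\ell$ satisfies $\V\get\st\geq\M-\ell\step$---cannot hold as stated, because a greedy run need not visit every reducible state: once the agent has found a working path from each start state, its preferences confine it to a fragment of $\reduce\task$, and unvisited reducible states keep their initial (possibly zero) values forever. More seriously, the ``main obstacle'' you flag at the end is genuinely unresolved and is not a matter of bookkeeping. Your claim that values on $\nonreduce\task$ decay by the potential argument of~(1) ``restricted to the non-reducible sub-dynamics'' is unjustified: a non-reducible state may have reducible successors (the definition only requires that each action have \emph{some} non-reducible successor), and those reducible successors can carry high values that repeatedly refresh the non-reducible state's value through the update rule. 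Hence an action at a reducible state that can land in $\nonreduce\task$ need not be monotonically demoted below the layer-reducing action; the two dynamics you hope will consolidate can instead oscillate indefinitely under an arbitrary greedy schedule.

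The paper's proof takes a structurally different route. It does not try to control all of $\reduce\task$; instead it defines $\strategy\V\subseteq\reduce\task$---those states whose current values and preferred actions already form a correct layered ramp to reward---and reduces the theorem to showing that eventually $\startstates\subseteq\strategy\V$ and the current state lies in $\strategy\V$ (restartability then traps the agent inside $\strategy\V$, where every rewardless step strictly increases $\V$-value, forcing acyclicity). To grow $\strategy\V$, the paper does not reason about arbitrary greedy dynamics: it exhibits a deterministic selector $\nextname$ that, at each configuration, picks one preferred action and one successor so that $\strategy\V$ is never shrunk and, whenever $\startstates\not\subseteq\strategy\V$, is eventually strictly extended. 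The fairness clause of greedy runs guarantees that from any recurrent configuration the $\nextname$-chosen transition is taken infinitely often, so the $\nextname$-trajectory embeds (non-contiguously) in the run. The escape-route clause of reducibility (Definition~\ref{def:reduce}, item~\ref{enu:reduce-escape}) is used inside $\nextname$ to route the agent out of $\nonreduce\task$ along a fixed acyclic path once values there have been beaten to zero, which sidesteps the refresh problem entirely. Your phase~(1) corresponds to the paper's lemma that the $\nextname$-trajectory sees reward infinitely often, but the paper proves it on a single deterministic configuration cycle rather than on the whole run, which also avoids having to reason about unvisited pairs in the $V_{\max}$ potential.
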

\noindent \proofmain{theo:greedy--PROOF}

\begin{remark}[Assumptions]\label{remark:greedy-assumptions}
    Removing the assumption on initial values in Theorem~\ref{theo:greedy} could be an item for future work. Without the assumption, the agent requires additional time to unlearn high violating values (see Figure~\ref{fig:ramp-violation}), before it could learn paths towards the true reward. 
    The assumption fortunately does not seem too severe, because a practical simulation might initialize the value function to satisfy the assumption.
    
    Moreover, the notions of reducibility and restartability might perhaps be combined into a more tight concept, where we assume that after obtaining reward we do not necessarily end up at a start state but just at a reducible state.
    Formally, letting $\task=\tasktup$ be a navigation problem, for each $(\st,\act)\in\rewards\task$, we could assume $\tr(\st,\act)\subseteq\reduce\task$.
    However, not giving a special role to start states might make it more difficult to assume a structure on the non-reducible states; the current assumption in Definition~\ref{def:reduce} is anchored on start states.
    \qed
\end{remark}

\begin{remark}[Applicability]
    Theorem~\ref{theo:greedy} works in particular for deterministic \RR\ navigation problems, where, necessarily, there could be only one start state.
    Theorem~\ref{theo:greedy} also applies to \RR\ navigation problems that are completely deterministic on non-rewarding state-action pairs, but where the rewarding state-action pairs are non-deterministic (in selecting the next start state).
    \qed
\end{remark}

\begin{example}[\RR\ grid navigation]\label{ex:swamp}
    We extend the simulation of Example~\ref{ex:dc-sim} with nondeterministic effects. We again consider a navigable 2D grid, with actions left, right, up, down, and finish. 
    This time we allow multiple start cells. 
    We keep using the goal cells from earlier: 
    when performing the finish action at goal cells, reward is obtained and the agent is transported back to a randomly selected start cell; this corresponds to the restartability assumption.
    We add two additional types of cell: swamp cells and jump cells.
    In a swamp cell, for each action, we nondeterministically (1) send the agent back to a (random) start state, or (2) we apply a random offset from the set 
    \[
    \set{(0,0),(-1,0),(1,0),(0,1),(0,-1)}.
    \]    
    The ability to restart the task from the swap cells is needed for reducibility (condition~\ref{enu:reduce-escape} in Definition~\ref{def:reduce}).%
        \footnote{Satisfaction of condition~\ref{enu:reduce-start} in Definition~\ref{def:reduce} depends on the specific 2D map.}
    Due to the offset $(0,0)$, the agent could become stuck for arbitrary amounts of time when it enters a swamp cell.
        The other offsets let swamp cells unpredictably move the agent; but that is not crucial for this example.
    Clearly, swamp cells have no action that is guaranteed to reach a goal cell, even if some start cells are goal cells. Hence, all swamp cells are non-reducible.
    
    Second, we have special jump cells: for any movement action, the jump cell takes the direction of the action and nondeterministically multiplies it by either 2 or by 4. For example, on a jump cell, if the agent performs the action `right', with direction $(1,0)$, then the effective offset could be $(2,0)$ or $(4,0)$. No movement is performed if the resulting offset would end in an obstacle.
    
    We show a concrete example situation in Figure~\ref{fig:swamp}. Figure~\ref{fig:swamp-path} shows results from one simulation, where the agent eventually avoids rewardless cycles, but where the agent forever fluctuates between different effective paths due to the jump cells.
    \qed
\end{example}

\begin{figure}
\newcommand{\wOne}{0.45\textwidth}
\begin{center}
\begin{tabular}{cc}
    \includegraphics[width=\wOne]{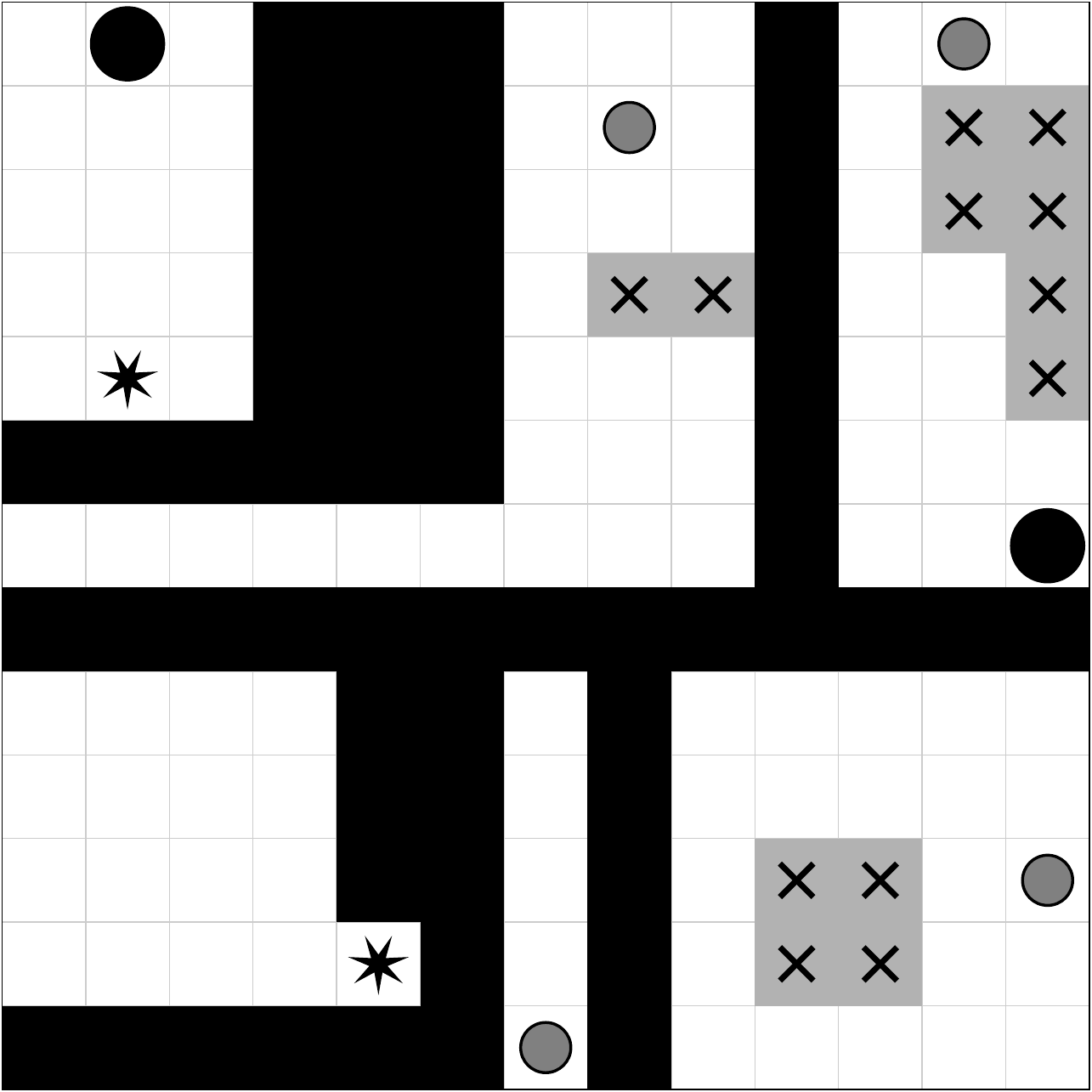}&
    \includegraphics[width=\wOne]{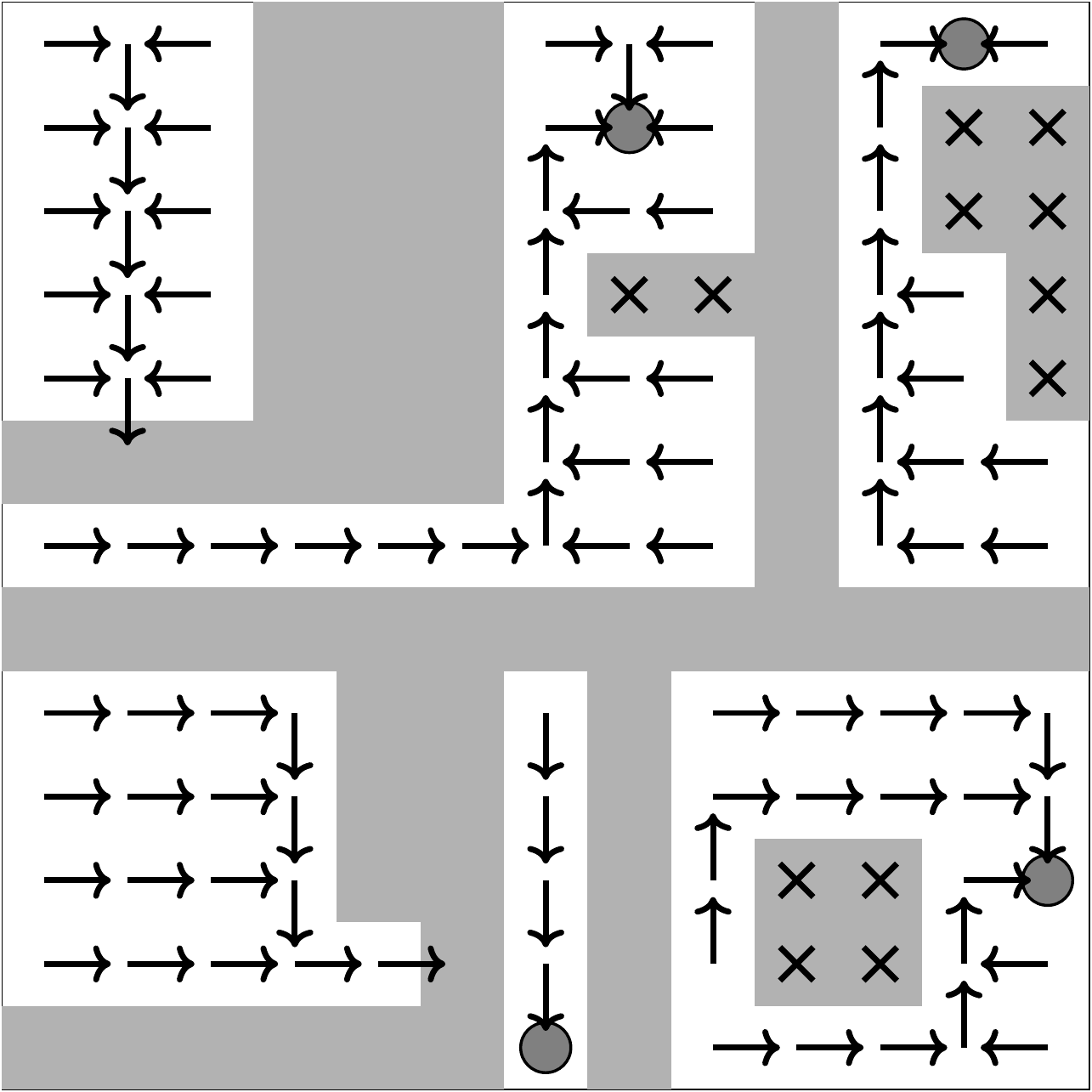}
    \\
    (A) & (B) 
\end{tabular}
\end{center}
\caption{In the context of Example~\ref{ex:swamp}, we give an \RR\ navigation problem in 2D. (A) We use the following notational convention: start cells $\to$ black circle; goal cells $\to$ gray circle; swamp cells $\to$ ``X''; jump cells $\to$ star. Reducibility is shown in subfigure~(B). Note that the jump cells prefer an action that seemingly jumps through a wall; that is possible because the direction is multiplied by either 2 or 4.}
\label{fig:swamp}
\end{figure}

\begin{figure}
    \centering
    \begin{subfigure}[b]{0.3\textwidth}
        \includegraphics[width=\textwidth]{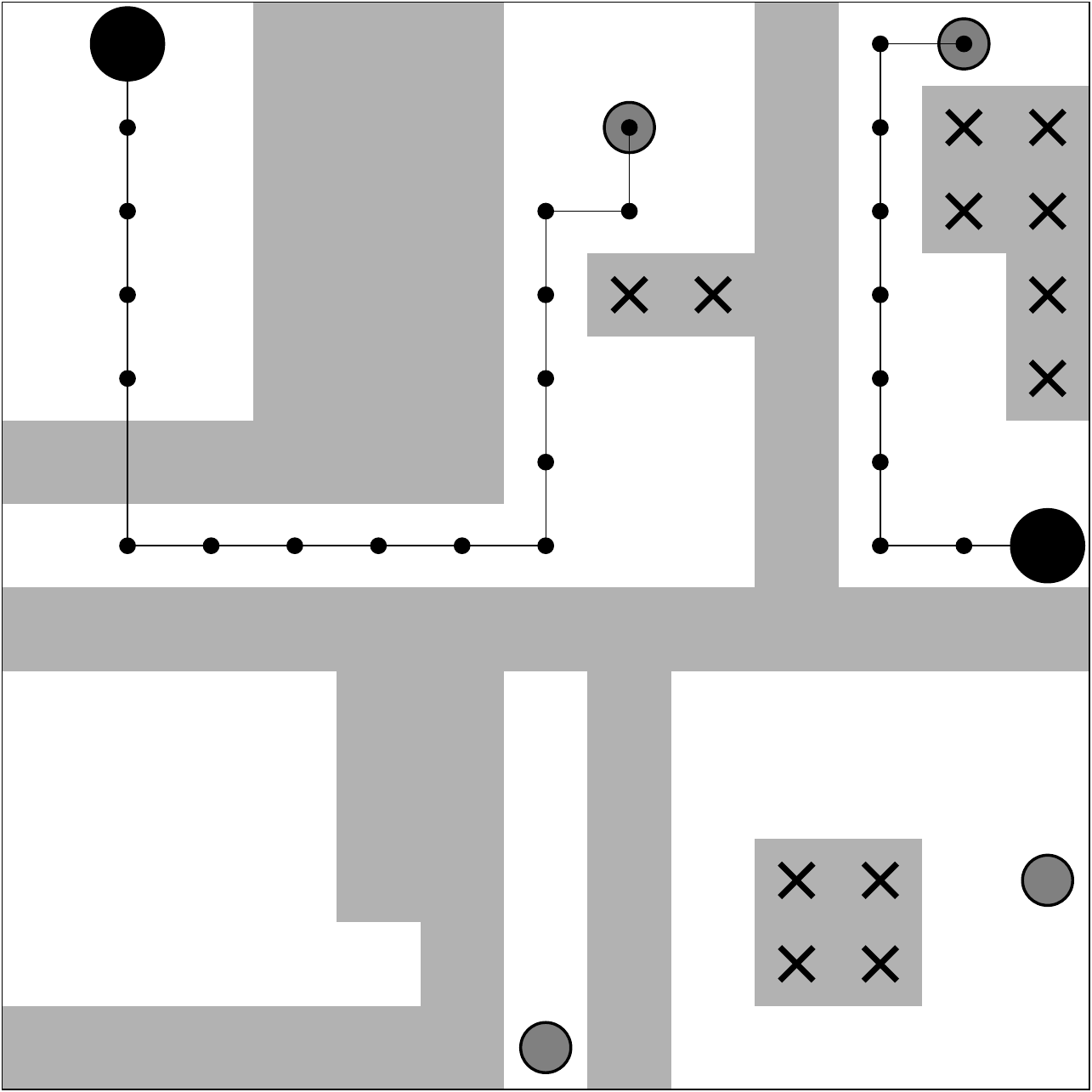}
        \caption{Path 1}        
    \end{subfigure}   
    \quad
    \begin{subfigure}[b]{0.3\textwidth}
        \includegraphics[width=\textwidth]{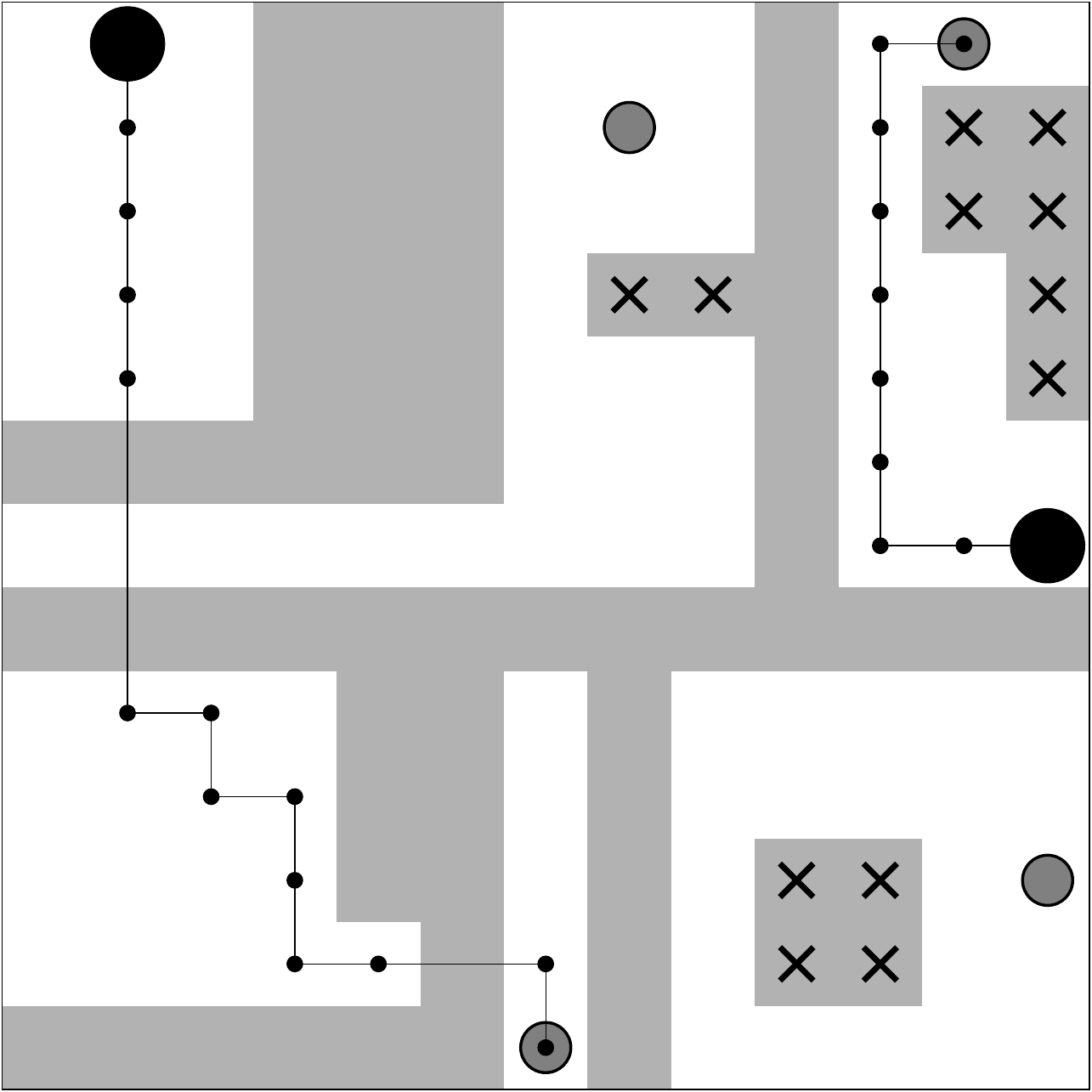}
        \caption{Path 2}        
    \end{subfigure}
    \quad
    \begin{subfigure}[b]{0.3\textwidth}
        \includegraphics[width=\textwidth]{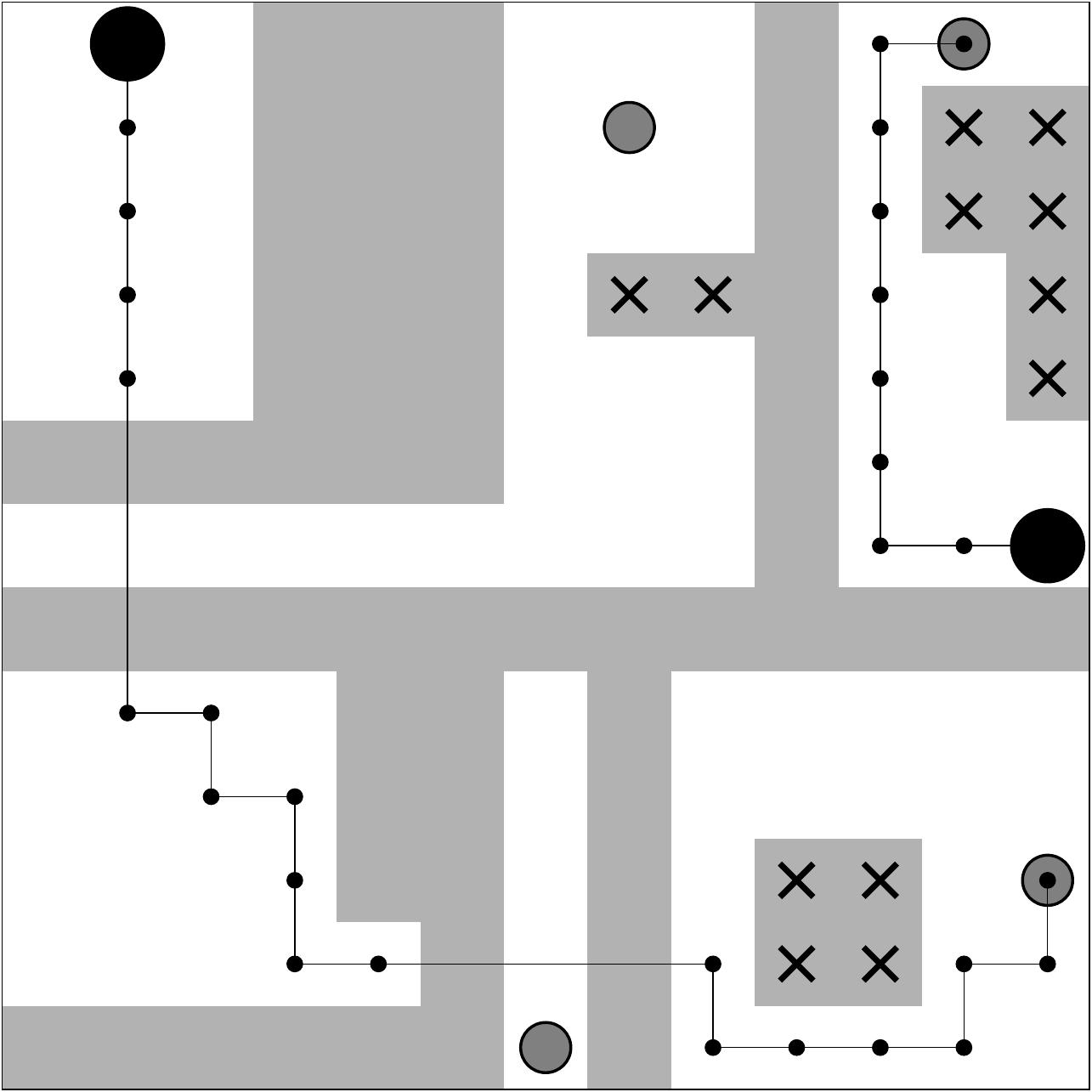}
        \caption{Path 3}       
    \end{subfigure}
    \caption{Consider the task in Figure~\ref{fig:swamp} (A). For the top-left start cell, inside one simulation of a greedy run, the path of the agent to reward strongly depends on the nondeterministic outcome of the jump cells. For the rightmost start cell there is a deterministic path to its local reward.}\label{fig:swamp-path}
\end{figure}

\subsection{Proof of Theorem~\ref{theo:greedy}}
\label{theo:greedy--PROOF}

Let $\task=\tasktup$ be an \RR\ navigation problem, and let $\M$ denote the nonzero reward quantity. 
We fix $\task$ throughout this subsection.

\subsubsection{Proof intuition}

The agent repeatedly begins in the start states and finds for each start state some path to reward. The formed paths are not necessarily the shortest. The agent first remembers some high-value actions nearest to reward; such state-action associations form an initial reward strategy, very localized near the reward. Any other states can be gradually added to that initial reward strategy, as suggested in Figure~\ref{fig:greedy-grow}.
There is a growth process of the reward strategy that propagates back to the start states, until the agent has a reward strategy from every start state. At that point, the action preferences strongly restrict what part of the state space is visited by the agent.

We recall that the considered navigation problem is restartable, implying that after obtaining reward we go back to a start state.

We also recall that the navigation problem may be nondeterministic, meaning that the successor state can vary between different applications of the same state-action pair. In general, the agent can not avoid nondeterminism, so the agent should find a reward strategy that steadily moves towards reward despite the nondeterminism; this is illustrated in Figure~\ref{fig:greedy-nondet}. Reducibility (Definition~\ref{def:reduce}) ensures the existence of such a reward strategy.

Also, the agent can be distracted by the initial value function, or by early estimated values that are misleading due to nondeterminism. But, eventually, the agent will learn where the reward is. The agent can be thought of as digging through layers of violating values to reach the reward. This requires that wrong values must be modified, either increased or decreased, to match the values emanating from the reward. In the proof, we use the built-in fairness assumption of exploring runs to confront the agent with wrong values.

\begin{figure}    
    \begin{center}
        \includegraphics[width=0.5\textwidth]{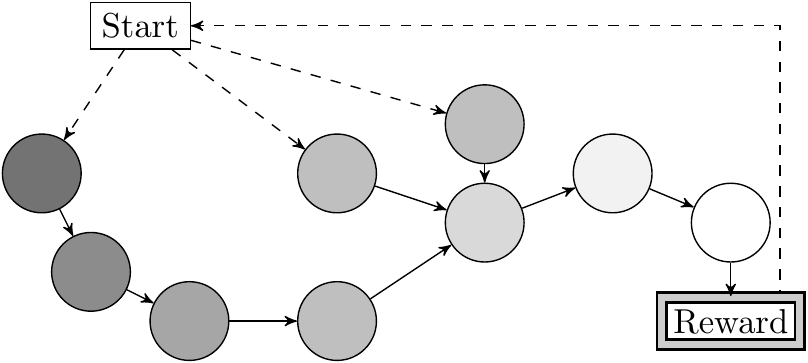}
    \end{center}
    \caption{The agent gradually builds longer paths to reward by adding new state-action pairs in front of existing paths. In a greedy run, we do not necessarily find the shortest paths to reward. In this figure, states with higher value have a brighter shade.}
    \label{fig:greedy-grow}
\end{figure}

\begin{figure}
    \begin{center}
        \includegraphics[height=0.2\textheight]{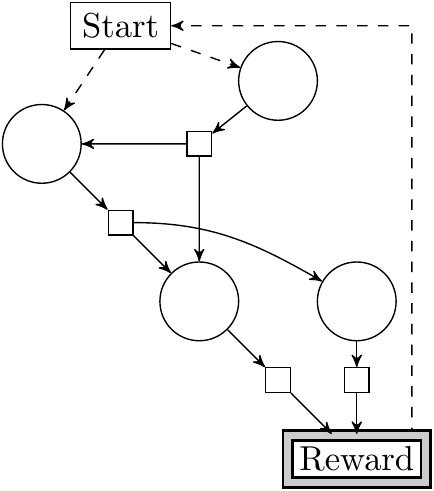}
    \end{center}
    \caption{Illustration of nondeterminism in the final reward strategy. States are represented by circles, and their chosen actions by boxes. Note that in this figure, each action gets strictly closer to reward, despite the nondeterminism. Reducible tasks always have such a reward strategy.}
    \label{fig:greedy-nondet}
\end{figure}

\subsubsection{Strategy}\label{sub:greedy-strategy}
Let $\V$ be a value function. Similar to reducibility (Definition~\ref{def:reduce}), we collect states that have a reward strategy under $\V$. Formally, 
\[
    \strategy\V = \bigcup_{i=1}^\infty \stratlayer i,
\]
where
\begin{itemize}
    \item the set $\stratlayer 1$ contains all states $\st$ satisfying $\V\get\st=M-\step$, and, $\forall\act\in\pref\st\V$ we have
    \[
      (\st,\act)\in\rewards\task.
    \]
    
    \item for each $i\geq 2$, the set $\stratlayer i$ extends $\stratlayer{i-1}$ with all states $\st$ satisfying $\V\get\st=M-i\step$, and, $\forall\act\in\pref\st\V$:
    \begin{enumerate}
        \item $(\st,\act)\notin\rewards\task$,
        \item $\tr(\st,\act)\subseteq\stratlayer{i-1}$,
        \item $\exists\st'\in\tr(\st,\act)$ with $\V\get{\st'}=M-(i-1)\step$.
    \end{enumerate}
\end{itemize}
We call the sets $\stratlayer i$ strategy layers.
Similar to reducibility, the strategy starts with $\stratlayer 1$, containing those states that prefer only rewarding actions and have correct value estimation. Then we add layers of states that prefer non-rewarding actions but whose successor states end up closer to $\stratlayer 1$. Increasing index $i$ corresponds to following value ramps downhill.

We emphasize the following conservative value estimation: for $i\geq 2$, letting $\st\in\stratlayer i$ and $\act\in\pref\st\V$, some successors in $\tr(\st,\act)$ could have value strictly larger than $\M-(i-1)\step$, yet we demand $\V\get\st=\M-i\step$. This will be important in Section~\ref{sub:proof-greedy-next}, where we would like strategies to be preserved under some transformation function.

\subsubsection{Good configurations}\label{sub:proof-greedy-good}

We say that a configuration $\cnf=\cnftup$ is good if the following conditions are satisfied:
\begin{enumerate}
    \item $\startstates\subseteq\strategy\V$; and,
    \item $\st\in\strategy\V$.
\end{enumerate}

By Property~\ref{prop:good-reward} (below), we know that once a greedy run encounters a good configuration, all cycles contain reward. 
The intuition, is that the strategy brings the agent to reward in an acyclic manner. Once all start states and the current state belong to the strategy, the agent is bound inside the strategy forever. In particular, during a reward transition, the agent is sent to another start state, which is inside the strategy.

Our goal is to show that each greedy run eventually encounters a good configuration. 

\begin{property}\label{prop:good-reward}
    In every greedy run, after reaching a good configuration, all state cycles contain reward.    
    \proofapp{prop:good-reward--PROOF}
\end{property}

\subsubsection{Transforming configurations}
\label{sub:proof-greedy-next}

We define a deterministic function $\nextname$ to transform any configuration into a good configuration.
More specifically, when given a configuration, function $\nextname$ tells us (1) which action should be taken, and (2) which successor state should be visited.
As we will see in Section~\ref{sub:proof-greedy-eventually-good}, the fairness assumption of greedy runs allows $\nextname$ to be called sufficiently often.
To specify $\nextname$ as fully deterministic, we assume a total order on the finite sets $\states$ and $\actions$.
    Usage of the order is indicated by the ``$\min$''-operator on some sets of states and actions.

\paragraph*{Restarts from non-reducible states}
Before defining $\nextname$, we define an acyclic movement strategy from non-reducible states to start states.
Fixing a start state $\st$, we specify a function $\go\st:\nonreduce\task\to\actions\times\states$ below.

We introduce some convenience notation. Let $X\subseteq\states\times\actions\times\states$ be a set.
We define $\dom X = \set{\st\in\states\mid\exists\act,\st':\,(\st,\act,\st')\in X}$. The set $X$ can be viewed as a nondeterministic function $f:\dom X\to\powset{\actions\times\states}$, defined for each $\st\in\dom X$ as $f(\st)=\set{(\act,\st')\mid(\st,\act,\st')\in X}$.

Now, we define a set $\G\subseteq\nonreduce\task\times\actions\times\states$,
\[
    \G = \bigcup_{i=1}^\infty \Glayer i,
\]
where
\begin{itemize}
    \item $\Glayer 1 = \set{(\st',\act,\st)\mid \st'\in\nonreduce\task,\act\in\actions,\st\in\tr(\st',\act)}$,
    
    \item for each $i\geq 2$,    
    \begin{align*}
        \Glayer i = \Glayer{i-1} \cup 
            \Big\{(\st',\act,\st'') \mid\, 
            & \st'\in\nonreduce\task\setminus\dom{\Glayer{i-1}},\\
            & \act\in\actions,\st''\in\tr(\st',\act)\cap\dom{\Glayer{i-1}}\Big\}
    \end{align*}    
\end{itemize}
Intuitively, $\G$ specifies how non-reducible states could choose action-successor pairs to move closer to the fixed start state $\st$. Note that this movement solution is acyclic by definition of $\Glayer i$ for each $i\geq 2$.
By Property~\ref{prop:nonreduce-restart} (below) we know $\nonreduce\task\subseteq\dom{\G}$. 

We convert $\G$ into a deterministic function $\go\st:\nonreduce\task\to\actions\times\states$. Using the assumed order on $\states$ and $\actions$, for each $\st'\in\nonreduce\task$, we define $\go\st(\st')$ as the lexicographically smallest pair in the set $\G(\st')$.

\begin{property}\label{prop:nonreduce-restart}
    For each start state $\st$, we have $\nonreduce\task\subseteq\dom{\G}$. 
    \proofapp{prop:nonreduce-restart--PROOF}
\end{property}

\paragraph*{The function $\nextname$}
Let $\cnf=\cnftup$ be an input configuration for $\nextname$.

To jump outside $\strategy\V$ during reward, we define
\[
\restart\cnf = \begin{cases}
                \min(\startstates\setminus\strategy\V) & \text{if }\startstates\not\subseteq\strategy\V \\
                \min(\startstates) &\text{otherwise}.
            \end{cases}
\]
Function $\nextname$ produces an action-successor pair based on the following nested case analysis. The main idea is to gradually decrease wrong values, to arrive at value zero; at that moment we can pull the agent into any desirable direction, in particular towards reward.
\begin{enumerate}[label*=\arabic*]
\newcommand{\mydef}[1]{$\Rightarrow$ Define #1.}

    \item \label{next-zero} Suppose $\V\get\st=0$. Then all actions are equally preferable in $\st$. We choose an action and successor to gradually bring us closer to reward. There are three mutually disjoint cases.
        \begin{enumerate}[label*=.\arabic*]
            
            \item \label{next-zero-nonreduce} Suppose $\st\in\nonreduce\task$.%
                \footnote{We have $\st\notin\startstates$ because $\startstates\subseteq\reduce\task$ by assumption (Definition~\ref{def:reduce}).}
            Let $\st_0=\restart\cnf$.
            
            \mydef{$\next\cnf=\go{\st_0}(\st)$}
            
                
                \item \label{next-zero-reduce-reward} Suppose $\st\in\goals\task$.%
                  \footnote{Recall that $\goals\task\subseteq\reduce\task$ (Section~\ref{sub:reduce}).}
                Let $\act=\min\set{\act'\in\actions\mid(\st,\act')\in\rewards\task}$ and $\st_0=\restart\cnf$.%
                    \footnote{The restartability assumption on the task tells us that $\st_0\in\tr(\st,\act)$.}
                
                \mydef{$\next\cnf=(\act,\st_0)$}
                
                \item \label{next-zero-reduce-reduce} Suppose $\st\in\reduce\task\setminus\goals\task$. We move one layer down into the reducibility structure.
                Recalling $\reduce\task=\bigcup_{i=1}^\infty\reducelayer i$ from Section~\ref{sub:reduce}, we write $\layer\st$ to denote the smallest index $i$ for which $\st\in \reducelayer i$.                 
                Let $\act=\min\set{\act'\in\actions\mid\tr(\st,\act')\subseteq \reducelayer{\layer\st-1}}$, and
                \[
                    \st' = 
                        \begin{cases}
                            \min(\tr(\st,\act)\setminus\strategy\V) & \text{if }\tr(\st,\act)\not\subseteq\strategy\V \\
                            \min(\tr(\st,\act)) & \text{otherwise}.
                        \end{cases}
                \]
                \mydef{$\next\cnf=(\act,\st')$}
        \end{enumerate}
        
    \item \label{next-nonzero} Otherwise $\V\get\st>0$. We confront the agent with wrong values, if any, that lead away from reward. Importantly, because the agent is always greedy in the context of this proof, we may only choose actions preferred by the agent, i.e., actions from $\pref\st\V$.
    
    \begin{enumerate}[label*=.\arabic*]
        
        \item \label{next-nonzero-incorrectActions} Suppose there is an action $\act\in\pref\st\V$ for which $(\st,\act)\notin\rewards\task$ and $\tr(\st,\act)\not\subseteq\strategy\V$.            
        Let $\act$ be the smallest from such actions.
        Let $\st'=\min(\tr(\st,\act)\setminus\strategy\V)$.
        
        \mydef{$\next\cnf=(\act,\st')$}
        
        \item \label{next-nonzero-correctActions} Otherwise, for all $\act\in\pref\st\V$ we have either $(\st,\act)\in\rewards\task$ or $\tr(\st,\act)\subseteq\strategy\V$. There could still be errors in the value estimation.
        Regarding notation, for any $\act\in\actions$ with $\tr(\st,\act)\subseteq\strategy\V$, we define 
        \[
            \expect\V\st\act = \min\set{\V\get{\st'}\mid\st'\in\tr(\st,\act)}.
        \]
        Intuitively, $\expect\V\st\act$ is a conservative value expectation.
        
        \begin{enumerate}[label*=.\arabic*]
        
            \item \label{next-nonzero-incorrectReward} Suppose there is an action $\act\in\pref\st\V$ with $(\st,\act)\in\rewards\task$ but $\V(\st,\act)\neq \M-\step$.       
            Let $\act$ be the smallest such action, and let $\st_0=\restart\cnf$.
            
            \mydef{$\next\cnf=(\act,\st_0)$}                        
                       
            \item \label{next-nonzero-incorrectReduce} Suppose there is an action $\act\in\pref\st\V$ with $(\st,\act)\notin\rewards\task$, and therefore $\tr(\st,\act)\subseteq\strategy\V$, with
            \[
                \V(\st,\act) \neq \expect\V\st\act - \step.
            \]
            Let $\act$ be the smallest such action, and let 
            \[
                \st' = \min\set{\st''\in\tr(\st,\act)\mid \V\get{\st''} = \expect\V\st\act}.
            \]
            
            \mydef{$\next\cnf=(\act,\st')$}
                
            \item \label{next-nonzero-correct} Otherwise the value estimation is correct. We choose an action-successor pair to proceed. Let $\act=\min(\pref\st\V)$.
            
            \begin{enumerate}[label*=.\arabic*]
                \item \label{next-nonzero-correct-reward} Suppose $(\st,\act)\in\rewards\task$. Let $\st_0=\restart\cnf$.
                
                \mydef{$\next\cnf = (\act,\st_0)$}
                
                \item \label{next-nonzero-correct-reduce} Otherwise, $(\st,\act)\notin\rewards\task$, but we still know $\tr(\st,\act)\subseteq\strategy\V$. Let 
                \[
                    \st'=\min\set{\st''\in\tr(\st,\act)\mid\V\get{\st''}=\expect\V\st\act}.
                \]
                
                \mydef{$\next\cnf = (\act,\st')$}
            \end{enumerate}
        \end{enumerate}
        
    \end{enumerate}
\end{enumerate}

\subsubsection{Eventually good configurations}
\label{sub:proof-greedy-eventually-good}

We fix a greedy run. By means of function $\nextname$, we show that the greedy run eventually encounters a good configuration. Intuitively, $\nextname$ represents the useful learning opportunities that are witnessed by the agent.    

\paragraph*{Bring start states in strategy}
First, we show the existence of a configuration $(\st,\V)$ that occurs infinitely often in the run and with $\startstates\subseteq\strategy\V$.

Because there are only a finite number of configurations (Lemma~\ref{lem:finite-cnf}), there is at least one configuration $\cnf=\cnftup$ that occurs infinitely often. 
Note that function $\nextname$, by design, proposes action-successor pairs allowed by a greedy transition. Since $\cnf$ occurs infinitely often, the fairness assumption of greedy runs (see Section~\ref{sub:runs}) tells us that we perform the following transition infinitely often:
\[
    \cnf \jump{\act,\st'}(\st',\V'),
\]
where $(\act,\st')=\next\cnf$. So, $(\st',\V')$ too occurs infinitely often, and therefore we can also apply $\nextname$ to $(\st',\V')$, and so on. 
We see that $\nextname$ can be applied arbitrarily many times;
    this process does not necessarily happen as a contiguous sequence of transitions in the run.
If $\startstates\not\subseteq\strategy{\V'}$ then Property~\ref{prop:extend-strategy} (below) tells us that we eventually discover a configuration $(\st'',\V'')$ with $\strategy{\V'}\subsetneq\strategy{\V''}$, i.e., with the strategy strictly extended, that occurs infinitely often. 
As long as the configurations encountered by $\nextname$ have a start state outside the strategy, we can repeat Property~\ref{prop:extend-strategy} to strictly extend the strategy. But the strategy can not keep growing because there are a finite number of states. By repeated application of $\nextname$, we eventually arrive at a configuration $(\st'',\V'')$ with $\startstates\subseteq\strategy{\V''}$ that occurs infinitely often in the greedy run.

\begin{property}\label{prop:extend-strategy}
    Beginning at a configuration $\cnftup$ with $\startstates\not\subseteq\strategy\V$, by repeatedly applying $\nextname$ we eventually reach a configuration $(\st',\V')$ with $\strategy\V\subsetneq\strategy{\V'}$, i.e., we have strictly extended the strategy.
    \proofapp{prop:extend-strategy--PROOF}
\end{property}

\paragraph*{Bring current state in strategy}
At this point, we have shown that there is a configuration $\cnf=\cnftup$ occurring infinitely often in the run and with $\startstates\subseteq\strategy\V$. We proceed to showing the existence of a good configuration, where additionally the current state is in the strategy.
Again, by the fairness assumption on the run, we can apply $\nextname$ an arbitrary number of times, starting at configuration $\cnf$. 
By Property~\ref{prop:next-preserve-strategy} (below) we know that the strategy is preserved.
Moreover, by Property~\ref{prop:next-inf-reward} (below) there is at least one occurrence of reward. Since the task is restartable, during the reward transition we arrive at a start state, which is inside the strategy. At that moment, we have reached a good configuration, as desired.

\begin{property}\label{prop:next-preserve-strategy}
    Function $\nextname$ always preserves the strategy. More formally, for each transition generated by $\nextname$,
    \[
        \transX i,
    \]     
    we have $\strategy{\VX i}\subseteq\strategy{\VX{i+1}}$.
    \proofapp{prop:next-preserve-strategy--PROOF}
\end{property}

\begin{property}\label{prop:next-inf-reward}
    Beginning at any configuration, by repeatedly applying $\nextname$ we encounter infinitely many reward transitions.
    \proofapp{prop:next-inf-reward--PROOF}
\end{property}

\section{Conclusion and further work}
\label{sec:conclusion}

By means of formal theorems, we have given concrete insights into the operation of \ramp\ on well-defined classes of tasks. We now discuss interesting items for further work.

\paragraph*{Practical case studies}
In this paper we have been occupied with the search for general yet nontrivial descriptions of the agent behavior generated by \ramp. A complementary study could focus on testing \ramp\ on various practical problems, to observe agent behavior on more concrete circumstances, and meanwhile to judge the practical viability of the technique.
It appears likely that \ramp\ can be used for much more problems than the 2D grid examples that we have given. For example, each state could be a sequence of sensory cues, to represent agent conceptualization in a complex environment~\cite{mnih_2015}.

In the usage of \ramp, a concrete proposal could be to set the reward quantities rather high and to take $\step = 1$, because then the ramps are longer and the agent can subsequently learn long strategies to rewarding events.
Moreover, our intuition from the proof of Theorem~\ref{theo:greedy} is that nondeterministic tasks could in general be very slow to learn, because the formation of an acyclic rewarding strategy seems to require rare learning opportunities to be (eventually) witnessed. Practical studies might therefore benefit from introducing sufficiently specific concepts inside the agent, so that tasks are rendered approximately deterministic.
As suggested by \citet{fremaux_2013},  specific concepts might correspond to place cells in the brain, see e.g.~\cite{moser_2008}.

\paragraph*{Generalized exploration property}
Perhaps Theorem~\ref{theo:explore} and Theorem~\ref{theo:sprint} can be generalized to particular kinds of nondeterministic tasks. Likely, in such a generalization, we should not seek numerical stability of the values, but rather a stability of the knowledge of the highest value paths. This can be likened to Theorem~\ref{theo:greedy}, of greedy navigation on nondeterministic tasks, where we sought a behavioral stability property instead of a numerical (value) stability property. Of course, it could be that, even on simple tasks, continued exploration leads to continued fluctuations in agent behavior, as suggested by the simple Example~\ref{ex:explore-fluctuate}.

\paragraph*{More navigation problems}
Possibly, \ramp\ can learn to avoid rewardless cycles on more navigation problems than the \RR\ ones of Section~\ref{sec:greedy}. More work is needed to understand the form of learnable navigation problems. Some further suggestions on relaxing Theorem~\ref{theo:greedy} are mentioned in Remark~\ref{remark:greedy-assumptions}.

\paragraph*{Negative reward and avoidance}
Reward in this paper is always a nonnegative quantity. Negative quantities could be introduced to study avoidance learning. Or, one could consider a dual value-ramp principle for estimating the aversiveness of state-action pairs. In the aversive value-ramp, the values increase as the agent approaches an aversive stimulus. Whereas greediness in a rewarding value-ramp selects actions to maximize value, greediness in an aversive value-ramp selects actions to minimize value.

\paragraph*{Partial observability and features}
Towards better understanding \ramp\ on more practical problems, it might be useful to formalize how the task structure is derived from various practical constraints. For example, the agent might have sensors with limited range, leading to perceived states that deviate from the true environment states. This leads to structural assumptions on the transition function $\tr$. It appears interesting to make concrete insights similar to the ones we have presented when more structural assumptions about the tasks are taken into account.

A brain consists of multiple neurons, and each neuron might represent a feature, i.e., a piece of state information. 
Each encountered task state is projected to a set of features.
It appears interesting to extend our framework to learn value for feature-action pairs instead of state-action pairs. In each state, the feature-action pair with the highest value could determine the action for the state.

\paragraph*{Relationship with reward discounting}
Many algorithms in reinforcement learning are based on reward discounting~\cite{sutton-barto_1998}. An important observation is that reward discounting is based on multiplying values with a rational number $\gamma$ between zero and one, whereas the value ramp is based on subtracting a strictly positive constant. In further work, it could be interesting to clarify the relationship between reward discounting and the value-ramp principle. The notions could be complementary, but they could be equivalent on certain classes of tasks and reward definitions.

\paragraph*{Biological plausibility}
The \ramp\ algorithm is inspired by simulations of biologically plausible learning models~\cite{fremaux_2013}, that could correspond to observations in biology~\cite{van-der-meer_2011}. 
Possibly, further work could elicit whether suitable variations of \ramp\ accurately model biological learning. In the current \ramp\ algorithm, negative updates to value utilize an arbitrary range, i.e., the $d$-value in Algorithm~\ref{alg:update} has no constraints (in particular for the negative range).
That might not be biologically realistic: if biological learning is based on dopamine, the negative value updates are likely caused by suppressing dopamine; but the dopamine baseline (in neutral circumstances) is already relative low~\cite{schultz_2013}. One might suspect that multiple iterations of dopamine suppression are needed to unlearn wrong value expectations. To obtain this effect, we could redefine the learning rule of Equation~\eqref{eq:ramp} to a rule of the following kind:
\begin{equation*}
    \rl(v,v',r) = \clamp{\max(v',r)-v}-\step.
\end{equation*}
The effect is that at most $\step$ is subtracted when value expectation is not met by successor value or by reward.
This could model a limited but noticeable erosion effect on value, in particular on neuronal connections (representing value) during periods of dopamine suppression.%
    
    One concrete hypothesis could be that on a ramp-like value experience, the steps of size $\step$ represent small dopamine releases that sustain useful concept-action connections in the brain. In absence of such a dopamine release, there could be a net erosion effect on the recently triggered neuronal connections, to unlearn wrong actions.

\bibliographystyle{apalike} 

\newpage\appendix\section*{Appendix}
\addcontentsline{toc}{section}{Appendix}
\addtocontents{toc}{\protect\setcounter{tocdepth}{0}}

\section{Proof details of Lemma~\ref{lem:finite-cnf}}
\label{lem:finite-cnf--PROOF}

\begin{lemma}\label{lem:ceil}
    For each transition
    $
        \cnftup
            \jump{\act,\st'}
        (\st',\V'),
    $
    we have $\ceil\V\geq\ceil{\V'}$.
\end{lemma}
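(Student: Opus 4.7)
The plan is to compare $\ceil\V$ and $\ceil{\V'}$ by separately examining the two quantities $m_1$ and $m_2$ that enter their definitions. The $m_1$ contribution, being the maximum of the fixed reward function $\R$, is identical for both configurations, so it suffices to bound the maximum entry of $\V'$ by $\ceil\V$.

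First I would dispense with the unchanged entries: for every pair $(\st'',\act'')\neq(\st,\act)$, Definition~\ref{def:trans} gives $\V'(\st'',\act'')=\V(\st'',\act'')$, which is at most $m_2$ for $\V$, hence at most $\ceil\V$. The only potentially larger entry is therefore $\V'(\st,\act)$, and all the work goes into bounding it.

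For that single entry I would unfold Algorithm~\ref{alg:update} to write
\[
    \V'(\st,\act)=\clamp{\V(\st,\act)+\max(\V\get{\st'},\R(\st,\act))-\step-\V\get\st}.
\]
The key observation is $\V(\st,\act)\leq\V\get\st$ (by definition of state value as the max over actions), so the term $\V(\st,\act)-\V\get\st$ is nonpositive. Dropping it can only increase the expression, giving $\V'(\st,\act)\leq\clamp{\max(\V\get{\st'},\R(\st,\act))-\step}$. Then $\V\get{\st'}\leq m_2\leq\ceil\V$ and $\R(\st,\act)\leq m_1\leq\ceil\V$, so the inner $\max$ is bounded by $\ceil\V$; since $\step\geq 1$, subtracting and clamping yields $\V'(\st,\act)\leq\ceil\V$.

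Putting the two cases together, every entry of $\V'$ is bounded by $\ceil\V$, so the $m_2$-quantity for $\V'$ is at most $\ceil\V$; combined with the fact that $m_1$ is unchanged, this gives $\ceil{\V'}\leq\ceil\V$. There is no real obstacle here — the lemma is a straightforward computation — the only subtlety worth stating clearly is the monotonicity fact $x\leq y\Rightarrow\clamp{x}\leq\clamp{y}$ from Section~2.1, which licenses dropping the nonpositive $\V(\st,\act)-\V\get\st$ term inside the clamp.
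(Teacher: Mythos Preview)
Your proof is correct and follows essentially the same approach as the paper's: both note that only the entry $(\st,\act)$ changes, then bound $\V'(\st,\act)$ by replacing $\V(\st,\act)$ with $\V\get\st$ inside the clamp (using $\V(\st,\act)\leq\V\get\st$ and monotonicity of $\clamp{\cdot}$), and finally bound $\max(\V\get{\st'},\R(\st,\act))$ by $\ceil\V$. The paper additionally splits into the two cases $\V'(\st,\act)\leq\V(\st,\act)$ and $\V'(\st,\act)>\V(\st,\act)$, but your single bound already covers both, so your write-up is slightly more streamlined.
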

\begin{proof}
    The reward quantities never change, and therefore $\R(\st'',\act'')\leq\ceil\V$ for each $(\st'',\act'')\in\states\times\actions$.
    Regarding values, only the value of $(\st,\act)$ can change during the transition. Therefore $\V'(\st'',\act'')=\V(\st'',\act'')\leq\ceil\V$ for each $(\st'',\act'')\in\states\times\actions\setminus\set{(\st,\act)}$.
    
    There are two cases for $(\st,\act)$:
    \begin{itemize}
        
        \item Suppose $\V'(\st,\act)\leq\V(\st,\act)$. Since always $\V(\st,\act)\leq\ceil\V$, we have $\V'(\st,\act)\leq\ceil\V$.
        
        \item Suppose $\V'(\st,\act)>\V(\st,\act)$. By Algorithm~\ref{alg:update}, we have 
        \begin{align*}
            \V'(\st,\act) &= \clamp{\V(\st,\act) + \big(\max(\V\get{\st'},\R(\st,\act))-\step-\V\get\st\big)}\\
                        &\leq \clamp{\V\get\st +\big(\max(\V\get{\st'},\R(\st,\act))-\step-\V\get\st\big)}\\
                        &=\clamp{\max(\V\get{\st'},\R(\st,\act))-\step}\\
                        &\leq\clamp{\max(\ceil\V,\ceil\V)-\step}\\
                        &\leq\clamp{\ceil\V-\step}\\
                        &\leq\clamp{\ceil\V}\\
                        &=\ceil\V.
        \end{align*}
        In the last step we use that always $\ceil\V\geq 0$. Overall, $\V'(\st,\act)\leq\ceil\V$.        
    \end{itemize}    
\end{proof}

\section{Proof details of Theorem~\ref{theo:explore}}

\subsection{\propproof{prop:explore-obtain-valid}}
\label{prop:explore-obtain-valid--PROOF}

Let $\V$ be a value function. Letting $(\st,\act)\in\states\times\actions$, and denoting $\Det\st\act{\st'}$, we say that $(\st,\act)$ is a violation in $\V$ if
\[
    \V(\st,\act) > \clamp{\max(\V\get{\st'},\R(\st,\act)) - \step}.
\]
We define the highest violation value in $\V$, denoted $\violmax\V$, as follows:
\[
    \violmax\V = 
        \begin{cases}
        \max\set{\V(\st,\act)\mid (\st,\act)\in\viol\V} & \text{if }\viol\V\neq\emptyset\\
        0 & \text{otherwise}.
        \end{cases}
\]
Always $\violmax\V\geq 0$. Also note that $\viol\V=\emptyset\iff\violmax\V=0$:
\begin{itemize}
    \item If $\viol\V=\emptyset$ then $\violmax\V=0$ by definition. 
    \item Suppose $\viol\V\neq\emptyset$. Each violation $(\st,\act)$ in $\V$ satisfies  $\V(\st,\act) > \clamp{\max(\V\get{\st'},\R(\st,\act)) - \step}\geq 0$, implying $\violmax\V>0$.
\end{itemize}
The following property will be useful:
\begin{property}\label{prop:violmax-top}
    For each transition $\transX i$ on $\task$ we have
    \[
        \violmax{\VX i}\geq\violmax{\VX{i+1}}.
    \]
    \proofapp{prop:violmax-top--PROOF}
\end{property}

Recall that the exploring run $\run$ is denoted as
\[
    \cnftupX 1
        \jump{\actX 1,\,\stX{2}}
    \cnftupX 2
        \jump{\actX 2,\,\stX{3}}
    \ldots
\]
We gradually remove all violations.
As long as there are violations in $\run$, the highest violation value is strictly positive. So, while there are violations, Property~\ref{prop:decrease-violmax} (below) tells us that the highest violation value can be strictly decreased. There can only be a finite number of such strict decrements because values are at least zero. Hence, eventually the highest violation value becomes zero. 
Thereafter, all value functions are valid, because for each transition
\[
    \transX i,
\]
if $\VX i$ is valid then $\violmax{\VX i}=0$ and therefore $\violmax{\VX{i+1}}=0$ by Property~\ref{prop:violmax-top}, implying $\viol{\VX{i+1}}=\emptyset$.

\begin{property}\label{prop:decrease-violmax}
    For each configuration index $i\geq 1$, if $\violmax{\VX i}>0$ then there is a configuration index $j>i$ with
    \[
        \violmax{\VX i} > \violmax{\VX j},
    \]
    i.e., the highest violation value has been strictly decreased.
    \proofapp{prop:decrease-violmax--PROOF}
\end{property}

\subsubsection{\propproof{prop:violmax-top}}
\label{prop:violmax-top--PROOF}
Consider a transition,
\[
    \transX i.
\]
We show $\violmax{\VX i}\geq\violmax{\VX{i+1}}$.
To start, by Algorithm~\ref{alg:update}, we have
\[
    \VX{i+1}(\stX i,\actX i) = \clamp{\VX i(\stX i,\actX i) + d},
\]
where $d=\max(\VX i\get{\stX{i+1}},\R(\stX i,\actX i))-\step - \VX i\get{\stX i}$.

Let $(\st,\act)\in\viol{\VX{i+1}}$, and denote $\Det\st\act{\st'}$. We show $\VX{i+1}(\st,\act)\leq\violmax{\VX i}$; overall, this implies $\violmax{\VX{i+1}}\leq\violmax{\VX i}$.
We distinguish between the following cases: $d \geq 0$ and $d < 0$.

\paragraph*{First case ($d\geq 0$)}
If $d\geq 0$ then values are not decreased during the transition, implying $\VX{i+1}\get{\st'}\geq\VX i\get{\st'}$.
If $(\st,\act)\neq(\stX i,\actX i)$ then, using $(\st,\act)\in\viol{\VX{i+1}}$, we have
\begin{align*}
    \VX i(\st,\act) &= \VX{i+1}(\st,\act)\\
                    &> \clamp{\max(\VX{i+1}\get{\st'},\R(\st,\act))-\step}\\
                    &\geq \clamp{\max(\VX i\get{\st'},\R(\st,\act))-\step},
\end{align*}
which implies $(\st,\act)\in\viol{\VX i}$, and therefore $\VX{i+1}(\st,\act)=\VX i(\st,\act)\leq\violmax{\VX i}$.

We show that the other case, $(\st,\act)=(\stX i,\actX i)$, is impossible. Indeed, if $(\st,\act)=(\stX i,\actX i)$ then, based on the above equation for $\VX{i+1}(\stX i,\actX i)$, and using $\VX i(\stX i,\actX i)\leq\VX i\get{\stX i}$ (which is always true),
\begin{align*}
    \VX{i+1}(\stX i,\actX i) &= \clamp{\VX i(\stX i,\actX i) + d}\\
                             &\leq \clamp{\VX i\get{\stX i} + d}\\
                             &= \clamp{\max(\VX i\get{\stX{i+1}},\R(\stX i,\actX i))-\step}\\
                             &\leq\clamp{\max(\VX{i+1}\get{\stX{i+1}},\R(\stX i,\actX i))-\step},                     
\end{align*}
which implies $(\stX i,\actX i)\notin\viol{\VX{i+1}}$.

\paragraph*{Second case ($d< 0$)}
Note that $d<0$ implies $\VX{i+1}(\st,\act)\leq\VX i(\st,\act)$. 

First, if $(\st,\act)\in\viol{\VX i}$ then
\[
    \VX{i+1}(\st,\act)\leq\VX i(\st,\act)\leq\violmax{\VX i}.
\]

Henceforth, suppose $(\st,\act)\notin\viol{\VX i}$, i.e., $(\st,\act)\in\viol{\VX{i+1}}$ is a violation newly created during $\transX i$.
We observe
\begin{align}
    \clamp{\max(\VX{i+1}\get{\st'},\R(\st,\act))-\step} &< \VX{i+1}(\st,\act)\nonumber\\
           &\leq \VX i(\st,\act)\nonumber\\
           &\leq \clamp{\max(\VX i\get{\st'},\R(\st,\act))-\step}.\label{eq:clamp-and-clamp}
\end{align}
Hence, $\VX{i+1}\get{\st'}<\VX i\get{\st'}$.%
    \footnote{Otherwise, if $\VX{i+1}\get{\st'}\geq\VX i\get{\st'}$ then actually $\VX{i+1}\get{\st'}=\VX i\get{\st'}$ by $d<0$; subsequently $\clamp{\max(\VX{i+1}\get{\st'},\R(\st,\act))-\step}=\clamp{\max(\VX i\get{\st'},\R(\st,\act))-\step}$, which is false.}
Therefore $\st'=\stX i$.

Subsequently, we have $\actX i\in\pref{\st'}{\VX i}$; otherwise there would be some $\act'\in\pref{\st'}{\VX i}$ with $\actX i\neq\act'$ and $\VX{i+1}(\st',\act')=\VX i(\st',\act')=\VX i\get{\st'}$, implying $\VX i\get{\st'}\leq\VX{i+1}\get{\st'}$, which is false.

Hence, $\VX i(\st',\actX i)=\VX i\get{\st'}$. Now, using the definition of $d$ above, the inequality $d<0$ implies
\[
    \VX i\get{\st'}=\VX i\get{\stX i} > \max(\VX i\get{\stX{i+1}},\R(\stX i,\actX i))-\step.
\]
Combined, 
\[
    \VX i(\st',\actX i) > \max(\VX i\get{\stX{i+1}},\R(\stX i,\actX i))-\step,
\]
which implies $(\st',\actX i)\in\viol{\VX i}$. Thus $\VX i(\st',\actX i)\leq\violmax{\VX i}$.

Lastly, we have $\clamp{\violmax{\VX i}-\step}>\clamp{\R(\st,\act)-\step}$. Otherwise, when considering $\clamp{\violmax{\VX i}-\step}\leq\clamp{\R(\st,\act)-\step}$, Equation~\eqref{eq:clamp-and-clamp} would imply the following contradiction, using $\VX i\get{\st'}=\VX i(\st',\actX i)\leq\violmax{\VX i}$ (from above):
\begin{align*}
  \clamp{\R(\st,\act)-\step} \leq \clamp{\max(\VX{i+1}\get{\st'},\R(\st,\act))-\step} 
                             &< \clamp{\max(\VX i\get{\st'},\R(\st,\act))-\step} \\
                             &=\clamp{\max(\VX i(\st',\actX i),\R(\st,\act))-\step} \\
                             &\leq\clamp{\max(\violmax{\VX i},\R(\st,\act))-\step}\\ 
                             &=\max(\clamp{\violmax{\VX i}-\step},\clamp{\R(\st,\act)-\step})\\
                             &\leq \clamp{\R(\st,\act)-\step}.
\end{align*}
So, if $\clamp{\violmax{\VX i}-\step}>\clamp{\R(\st,\act)-\step}$ then, using $(\st,\act)\notin\viol{\VX i}$, we observe
\begin{align*}
 \VX{i+1}(\st,\act) &\leq \VX i(\st,\act)\\
                    &\leq\clamp{\max(\VX i\get{\st'},\R(\st,\act))-\step} \\
                             &=\clamp{\max(\VX i(\st',\actX i),\R(\st,\act))-\step} \\
                    &\leq\clamp{\max(\violmax{\VX i},\R(\st,\act))-\step}\\ 
                    &\leq \clamp{\violmax{\VX i}-\step}.
\end{align*}
Moreover, $\clamp{\violmax{\VX i}-\step}\leq\violmax{\VX i}$.%
    \footnote{If $\violmax{\VX i}-\step<0$ then $ \clamp{\violmax{\VX i}-\step}=0\leq\violmax{\VX i}$. If $\violmax{\VX i}-\step\geq 0$ then $ \clamp{\violmax{\VX i}-\step}=\violmax{\VX i}-\step<\violmax{\VX i}$.}
Everything combined, we have $\VX{i+1}(\st,\act)\leq\violmax{\VX i}$, as desired.

\subsubsection{\propproof{prop:decrease-violmax}}\label{prop:decrease-violmax--PROOF}

Let $i$ be a configuration index, denoting the corresponding configuration as $\cnftupX i$, where $\violmax{\VX i}> 0$.
By Property~\ref{prop:violmax-top}, we know for all subsequent configuration indices $j$ with $j\geq i$ that $\violmax{\VX i}\geq\violmax{\VX j}$, i.e., the highest violation value never increases.

Towards a contradiction, suppose that $\violmax{\VX i}=\violmax{\VX j}$ for all $j\geq i$. Because there are only a finite number of configurations by Lemma~\ref{lem:finite-cnf}, there must be a configuration $(\alt{\st},\alt{\V})$ that occurs infinitely often, and with $\violmax{\alt{\V}}=\violmax{\VX i}$.
Since  $\violmax{\VX i}> 0$, we can consider a violation $(\st,\act)\in\viol{\alt\V}$ with $\alt\V(\st,\act)=\violmax{\alt\V}$. 
By Property~\ref{prop:inf-visit}, there are infinitely many transitions where we execute the pair $(\st,\act)$.

To continue with the proof, since  $(\alt{\st},\alt{\V})$ occurs infinitely often, and $(\st,\act)$ is infinitely often executed, after configuration $\cnftupX i$ we can consider a finite run-fragment $F$ of the following form:
\[
     (\alt{\st},\alt{\V}) \rightarrow \ldots \text{ in between execute $(\st,\act)$ at least once} \ldots \rightarrow (\alt{\st},\alt{\V}).
\]

In the fragment $F$, there must be a \emph{last} transition in which we execute $(\st,\act)$, denoted as
\[
  \transX j,
\]
where $(\stX j,\actX j)=(\st,\act)$.
Since this transition is the last transition of $(\st,\act)$ in fragment $F$, 
we must have $\VX{j+1}(\st,\act)=\alt\V(\st,\act)=\violmax{\alt\V}$. 

There are two cases: either $(\st,\act)\in\viol{\VX{j+1}}$ or $(\st,\act)\notin\viol{\VX{j+1}}$.
In each case, we derive a contradiction. Denote $\Det\st\act{\st'}$, and abbreviate $W=\violmax{\alt\V}$.
By assumption at the beginning of this proof, $W>0$.

\paragraph*{First case}
Suppose $(\st,\act)\in\viol{\VX{j+1}}$.
By Algorithm~\ref{alg:update}, we have
\[
    \VX{j+1}(\st,\act) = \clamp{\VX j(\st,\act) + d},
\]
where $d=\max(\VX{j}\get{\st'},\R(\st,\act)) - \step - \VX j\get{\st}$.
It must be $d< 0$. Otherwise, considering $d\geq 0$, since always $\VX j(\st,\act)\leq\VX j\get{\st}$, and additionally $\VX j\get{\st'}\leq\VX{j+1}\get{\st'}$ when $d\geq 0$, we would have
\begin{align*}
    \VX{j+1}(\st,\act)  &\leq \clamp{\VX j\get\st + d}\\
                        &= \clamp{\max(\VX{j}\get{\st'},\R(\st,\act))- \step}\\
                        &\leq \clamp{\max(\VX{j+1}\get{\st'},\R(\st,\act))- \step},                       
\end{align*}
and therefore $(\st,\act)\notin\viol{\VX{j+1}}$, which is false by assumption.

Note that $\VX j(\st,\act) > \VX{j+1}(\st,\act)$, since $d < 0$ and $\VX{j+1}(\st,\act)=W>0$.
Therefore $\VX j(\st,\act) > W$. We will show below that $(\st,\act)\in\viol{\VX j}$, giving 
\[
    \violmax{\VX j}\geq\VX j(\st,\act)> W = \violmax{\alt\V}=\violmax{\VX i},
\]
in particular, $\violmax{\VX j}>\violmax{\VX i}$, which contradicts Property~\ref{prop:violmax-top} (since $i\leq j$).

To show $(\st,\act)\in\viol{\VX j}$, we consider the following cases.
\begin{itemize}
    \item Suppose $\act\in\pref\st{\VX j}$.        
    Hence, $\VX j\get\st=\VX j(\st,\act)$, which we substitute into the equation of $\VX{j+1}(\st,\act)$ given by Algorithm~\ref{alg:update}:
    \begin{align*}
        \VX{j+1}(\st,\act)  &= \clamp{\VX j(\st,\act) + \max(\VX j\get{\st'}, \R(\st,\act)) - \step - \VX j(\st,\act)}\\
                            & = \clamp{\max(\VX j\get{\st'}, \R(\st,\act)) - \step}.
    \end{align*}    
    Combined with $\VX j(\st,\act)>\VX{j+1}(\st,\act)$ (see above), we obtain
    \[
        \VX j(\st,\act) > \clamp{\max(\VX j\get{\st'}, \R(\st,\act)) - \step},
    \]
    and therefore $(\st,\act)\in\viol{\VX j}$.

    \item Suppose $\act\notin\pref\st{\VX j}$ and $\st'=\st$.
    If $\act\notin\pref\st{\VX j}$ then there is some $\act'\in\pref\st{\VX j}$ with $\act\neq\act'$. Note that $\VX{j+1}(\st,\act')=\VX j(\st,\act')=\VX j\get\st$, implying $\VX j\get\st\leq\VX{j+1}\get\st$. Moreover, $d<0$ implies $\VX{j+1}\get{\st}\leq\VX j\get\st$. Overall, $\VX{j+1}\get\st = \VX j\get\st$.
    
    Next, since $(\st,\act)\in\viol{\VX{j+1}}$, we have, substituting $\VX{j+1}\get{\st'}=\VX{j+1}\get{\st}=\VX j\get\st$,
    \begin{align*}
        \VX{j+1}(s,a)   &> \clamp{\max(\VX{j+1}\get{\st'},\R(\st,\act))-\step} \\
                        &= \clamp{\max(\VX{j}\get{\st},\R(\st,\act))-\step}. 
    \end{align*}
    Combined with $\VX{j}(\st,\act)>\VX{j+1}(\st,\act)$ (see above), we obtain
    \[
        \VX{j}(s,a) > \clamp{\max(\VX{j}\get{\st},\R(\st,\act))-\step},
    \]
    and, recalling the assumption $\st'=\st$, therefore $(\st,\act)\in\viol{\VX j}$.
    
    \item Suppose $\act\notin\pref\st{\VX j}$ and $\st'\neq\st$.
    The latter implies $\VX{j+1}\get{\st'}=\VX j\get{\st'}$.
    Since $(\st,\act)\in\viol{\VX{j+1}}$, we have    
    \begin{align*}
        \VX{j+1}(s,a) &> \clamp{\max(\VX{j+1}\get{\st'},\R(\st,\act))-\step} \\
                      &=\clamp{\max(\VX j\get{\st'},\R(\st,\act))-\step}.
    \end{align*}
    Combined with $\VX{j}(\st,\act)>\VX{j+1}(\st,\act)$ (see above), we obtain 
    \[
        \VX j(\st,\act) > \clamp{\max(\VX j\get{\st'},\R(\st,\act))-\step},
    \]
    and therefore $(\st,\act)\in\viol{\VX j}$.
\end{itemize}

\paragraph*{Second case}
Suppose $(\st,\act)\notin\viol{\VX{j+1}}$. 
Hence,
\[
    \VX{j+1}(\st,\act) \leq \clamp{\max(\VX{j+1}\get{\st'},\R(\st,\act))-\step}.
\]

In order for $(\st,\act)\in\viol{\alt\V}$, which we assumed to be true, it is necessary that the value of $\st'$ is strictly decreased before the end of fragment $F$. Otherwise, for all configuration indices $k\geq j+1$ in fragment $F$, we would have $\VX{j+1}\get{\st'}\leq\VX k\get{\st'}$; and, combined with the assumption that transition $\transX j$ is the last transition of $F$ in which $(\st,\act)$ is updated, we obtain
\begin{align*}
    \VX k(\st,\act)
        &=\VX{j+1}(\st,\act)\\
        &\leq \clamp{\max(\VX{j+1}\get{\st'},\R(\st,\act))-\step}\\
        &\leq\clamp{\max(\VX k\get{\st'},\R(\st,\act))-\step},
\end{align*}
implying $(\st,\act)\notin\viol{\alt\V}$, which is false.

So, still inside fragment $F$, we can consider the \emph{first} transition after configuration $j+1$ where the value of $\st'$ is strictly decreased:
\[
    \transX{k},
\]
where $k\geq j+1$ and $\VX{k+1}\get{\st'}<\VX k\get{\st'}$. This implies $\stX k=\st'$.%
    \footnote{If $\stX k\neq\st'$ then always $\VX{k+1}\get{\st'}=\VX k\get{\st'}$.}
Denote $\Det{\st'}{\actX k}{\st''}$. Now, by Algorithm~\ref{alg:update}, 
\[
    \VX{k+1}(\st',\actX k) = \clamp{\VX k(\st',\actX k) +\max(\VX k\get{\st''},\R(\st',\actX k)) - \step - \VX k\get{\st'}}.
\]

Also, we have $\actX k\in\pref{\st'}{\VX k}$; otherwise there would be some action $\act'\in\pref{\st'}{\VX k}$ with $\act'\neq\actX k$ and $\VX k\get{\st'}=\VX{k}(\st',\act')=\VX{k+1}(\st',\act')$, implying $\VX k\get{\st'}\leq\VX{k+1}\get{\st'}$, which is false.
Since $\actX k\in\pref{\st'}{\VX k}$, we have $\VX k\get{\st'}=\VX k(\st',\actX k)$. This can be used to simplify the above equation for $\VX{k+1}(\st',\actX k)$, as follows:
\[
    \VX{k+1}(\st',\actX k) = \clamp{\max(\VX k\get{\st''},\R(\st',\actX k)) - \step}.
\]

Next, since $\VX k\get{\st'}> \VX{k+1}\get{\st'}$, we observe 
\[
    \VX k(\st',\actX k)=\VX k\get{\st'} > \VX{k+1}\get{\st'}\geq\VX{k+1}(\st',\actX k).
\]
In combination with the simplified equation for $\VX{k+1}(\st',\actX k)$, we obtain
\[
    \VX k(\st',\actX k) > \clamp{\max(\VX k\get{\st''},\R(\st',\actX k)) - \step}.
\]
Therefore, $(\st',\actX k)\in\viol{\VX k}$.

Now, Property~\ref{prop:lowerbound-w} (below) gives us $W < \VX{j+1}\get{\st'}$. Since $\transX k$ is the first transition after configuration $j+1$ with a value decrement on state $\st'$, we have $\VX{j+1}\get{\st'}\leq\VX k\get{\st'}$. Combined, $W < \VX k\get{\st'}$. Since $\VX k\get{\st'}=\VX k(\st',\actX k)$ (see above), we obtain $W < \VX k(\st',\actX k)$.

Overall, we obtain $\violmax{\VX k}> W=\violmax{\alt\V}=\violmax{\VX i}$, which contradicts Property~\ref{prop:violmax-top} (since $i\leq k$). This is the desired contradiction.

\begin{property}\label{prop:lowerbound-w}
    We have $W < \VX{j+1}\get{\st'}$.
\end{property}
\begin{proof}
    Below we show that $\VX{j+1}(\st,\act)\leq\VX{j+1}\get{\st'}-\step$. Therefore,
    \[
        W = \VX{j+1}(\st,\act) < \VX{j+1}(\st,\act)+\step\leq\VX{j+1}\get{\st'},
    \]
    giving $W<\VX{j+1}\get{\st'}$, as desired. 

    We are left to show $\VX{j+1}(\st,\act)\leq\VX{j+1}\get{\st'}-\step$.
    First, it must be $\VX{j+1}(\st,\act)>\clamp{\R(\st,\act)-\step}$. Otherwise, considering $\VX{j+1}(\st,\act)\leq\clamp{\R(\st,\act)-\step}$, since the transition $\transX{j}$ is the last transition in the run-fragment $F$ where $(\st,\act)$ is executed, for all configuration indices $k\geq j+1$ in $F$, we would have 
    \begin{align*}
        \VX k(\st,\act) 
            &= \VX{j+1}(\st,\act)\\
            &\leq \clamp{\R(\st,\act)-\step}\\
            &\leq \max(\clamp{\VX k\get{\st'}-\step},\clamp{\R(\st,\act)-\step})\\
            &=\clamp{\max(\VX k\get{\st'},\R(\st,\act))-\step}. 
    \end{align*}
    In particular, $\alt\V(\st,\act)\leq\clamp{\max(\alt\V\get{\st'},\R(\st,\act)) -\step}$, implying $(\st,\act)\notin\viol{\alt\V}$, which is false.

    Subsequently, $\VX{j+1}(\st,\act)>\clamp{\R(\st,\act)-\step}$ implies $\VX{j+1}(\st,\act)\leq\clamp{\VX{j+1}\get{\st'}-\step}$. Otherwise,
    \begin{align*}
        \VX{j+1}(\st,\act) &> \max(\clamp{\VX{j+1}\get{\st'}-\step},\clamp{\R(\st,\act)-\step}) \\
                           &=\clamp{\max(\VX{j+1}\get{\st'},\R(\st,\act))-\step},
    \end{align*}
    implying $(\st,\act)\in\viol{\VX{j+1}}$, which we assumed to be false.
    
    Lastly, since $\VX{j+1}(\st,\act)=W>0$, we know $\clamp{\VX{j+1}\get{\st'}-\step}>0$. Therefore, $\VX{j+1}\get{\st'}-\step>0$. Hence, we may write
    $\VX{j+1}(\st,\act)\leq\VX{j+1}\get{\st'}-\step$.
\end{proof}

\subsection{Proof of Property~\ref{prop:explore-valid-leq-opt}}
\label{prop:explore-valid-leq-opt--PROOF}

Let $\V$ be a valid value function. Towards a contradiction, suppose there is some $\st\in\states$ with $\V\get\st>\optval\st$. By Property~\ref{prop:valid-leq-opt-helper} (below), there is an action $\act\in\pref\st\V$, denoting $\Det\st\act{\st'}$, with $\V\get{\st'}>\optval{\st'}$ and $\V\get{\st}<\V\get{\st'}$. Property~\ref{prop:valid-leq-opt-helper} can subsequently be applied to $\st'$.
By repeatedly applying Property~\ref{prop:valid-leq-opt-helper}, we can establish an infinite sequence of the following form:
\[
    \stX 1\edge{\actX 1}\stX2\edge{\actX 2}\ldots,
\]
where
    $\V\get{\stX i}<\V\get{\stX{i+1}}$ for each $i\geq 1$.
But since there are a finite number of states, there must be two indices $j$ and $k$ with $j<k$ and $\stX j=\stX k$. Then $\V\get{\stX j}<\V\get{\stX k}$ is the desired contradiction.

\begin{property}\label{prop:valid-leq-opt-helper}
    Let $\V$ be the considered valid value function. Let $\st\in\states$. If $\V\get\st>\optval\st$ then $\exists\act\in\pref\st\V$, denoting $\Det\st\act{\st'}$, with 
    \begin{itemize}
        \item $\V\get{\st'}>\optval{\st'}$; and,
        \item $\V\get{\st}<\V\get{\st'}$.
    \end{itemize}
\end{property}
\begin{proof}
    Let $\act\in\pref\st\V$ be arbitrary, and denote $\Det\st\act{\st'}$.
    
    First we show that $\V\get{\st'}>\optval{\st'}$.
    Always,%
        \footnote{Note that $\optval\st\geq\clamp{\R(\st,\act)-\step}$ because $(\st,\act)$ is an action-path for $\st$. Also, $\optval\st\geq\clamp{\optval{\st'}-\step}$ because any optimal action-path for $\st'$ can be extended to an action-path for $\st$ by adding $(\st,\act)$ to the front. Formally, letting $p'$ be an action-path for $\st'$ with $\pval{p'}=\optval{\st'}$, and letting $p$ be the extension of $p'$ by adding $(\st,\act)$ to the front; Lemma~\ref{lem:pval} implies $\optval{\st}\geq\pval{p}=\max(\clamp{\R(\st,\act)-\step},\clamp{\optval{\st'}-\step})$.}
    \begin{align}
        \optval{\st} &\geq \max(\clamp{\optval{\st'}-\step},\clamp{\R(\st,\act)-\step})\label{eq:optval-geq-1}\\
                     &= \clamp{\max(\optval{\st'},\R(\st,\act))-\step}\label{eq:optval-geq-2}.
    \end{align}
    Towards a contradiction, suppose $\V\get{\st'}\leq\optval{\st'}$. Then, using all assumptions (including validity of $\V$), and the equality $\V\get{\st}=\V(\st,\act)$ (by $\act\in\pref\st\V$), we have
    \begin{align*}
        \optval\st &< \V\get\st\\
                   &=\V(\st,\act)\\
                   &\leq \clamp{\max(\V\get{\st'},\R(\st,\act))-\step}\\
                   &\leq \clamp{\max(\optval{\st'},\R(\st,\act))-\step},
    \end{align*}    
    which contradicts Equation~\eqref{eq:optval-geq-2}. Therefore $\V\get{\st'}>\optval{\st'}$.
    
    Now we show that $\V\get\st<\V\get{\st'}$.
    Since $\V\get\st>\optval\st$ by assumption, Equation~\eqref{eq:optval-geq-1} implies $\V\get\st > \clamp{\R(\st,\act)-\step}$. Together with $\V\get\st=\V(\st,\act)$ (by $\act\in\pref\st\V$) and validity, we have
    \begin{align}
        \clamp{\R(\st,\act)-\step} 
            &< \V\get\st \nonumber\\
            &=\V(\st,\act)\nonumber\\
            &\leq \clamp{\max(\V\get{\st'},\R(\st,\act))-\step} \nonumber\\
            & = \max(\clamp{\V\get{\st'}-\step},\clamp{\R(\st,\act)-\step})\label{eq:validity-implies}.
    \end{align}
    We have $\clamp{\V\get{\st'}-\step}>\clamp{\R(\st,\act)-\step}$ because otherwise Equation~\eqref{eq:validity-implies} would imply the contradiction $\clamp{\R(\st,\act)-\step}<\clamp{\R(\st,\act)-\step}$.
    Thus $\clamp{\V\get{\st'}-\step}>0$, causing $\V\get{\st'}-\step>0$, and therefore $\clamp{\V\get{\st'}-\step}=\V\get{\st'}-\step$. Validity now implies,
    \begin{align*}
        \V\get\st &= \V(\st,\act)\\
                &\leq \max(\clamp{\V\get{\st'}-\step},\clamp{\R(\st,\act)-\step})\\
                  &=\clamp{\V\get{\st'}-\step}\\
                  &=\V\get{\st'}-\step.
    \end{align*}
    Hence, $\V\get\st\leq\V\get{\st'}-\step$ and therefore $\V\get\st<\V\get{\st'}$ (using that $\step\geq 1$): $\V\get\st<\V\get\st+\step\leq\V\get{\st'}$.
\end{proof}

\subsection{Proof of Property~\ref{prop:explore-path-cases}}
\label{prop:explore-path-cases--PROOF}

    If $\length p=1$ then necessarily $\optval\st=\pval p=\clamp{\R(\st,\act)-\step}$.
    
    Henceforth, we assume $\length p \geq 2$. Let $p'$ denote the suffix of $p$ after omitting the first pair $(\st,\act)$.
    By Lemma~\ref{lem:pval},
    \[
        \pval p=\max(\clamp{\R(\st,\act)-\step},\clamp{\pval{p'}-\step}).    
    \]
    If $\clamp{\R(\st,\act)-\step}\geq\clamp{\pval{p'}-\step}$ then again $\optval\st = \clamp{\R(\st,\act)-\step}$.
    Henceforth we assume $\clamp{\R(\st,\act)-\step}<\clamp{\pval{p'}-\step}$. This implies $\clamp{\pval{p'}-\step}> 0$, causing $\pval{p'}-\step> 0$, so we write more simply
    \[
        \pval p=\pval{p'}-\step.
    \]
    We now show concretely that $\pval{p'}=\optval{\st'}$, giving, as desired
    \[
        \optval\st=\optval{\st'}-\step.
    \]
    We separately show $\pval{p'}\leq\optval{\st'}$ and $\optval{\st'}\leq\pval{p'}$.
    
    \textbf{Direction 1.} Since $p'$ is an action-path for $\st'$ we observe
    \[
        \pval{p'}\leq\optval{\st'}.
    \]
    
    \textbf{Direction 2.}
    Next, let $p''$ be an action-path for $\st'$ with $\pval{p''}=\optval{\st'}$. We can add the pair $(\st,\act)$ to the front of $p''$, resulting in a path $p'''$. By Lemma~\ref{lem:pval},
    \begin{align*}
        \pval{p'''} 
            &= \max(\clamp{\R(\st,\act)-\step},\clamp{\pval{p''}-\step})\\
            &\geq\clamp{\pval{p''}-\step}.
    \end{align*}
    Also, by definition of $\optval\st$, we have $\pval{p'''}\leq\optval\st=\pval p=\pval{p'}-\step$.
    Everything combined, we have
    \[
        \clamp{\pval{p''}-\step} \leq \pval{p'}-\step.
    \]
    We have $\pval{p''}\leq\pval{p'}$: otherwise, considering $\pval{p''}>\pval{p'}$, since $\pval{p'}-\step>0$ (see above), we would have $\pval{p''}-\step>0$; and subsequently $\clamp{\pval{p''}-\step}=\pval{p''}-\step>\pval{p'}-\step$, which is false.

    Now, $\pval{p''}\leq\pval{p'}$, combined with $\pval{p''}=\optval{\st'}$, implies the second direction that was sought:
    \[
        \optval{\st'}\leq\pval{p'}.
    \]    

\section{Proof of Lemma~\ref{lem:pval}}
\label{lem:pval--PROOF}

    By definition of path-value (Equation~\eqref{eq:pval}),
    \[
        \pval{p} = \max\set{\clamp{\R(\stX 1,\actX 1)-\step},\clamp{\R(\stX 2,\actX 2)-2\step},\ldots,\clamp{\R(\stX n,\actX n)-n\step}}.
    \]
    We may rewrite this as follows:
    \[
        \pval{p} = \max(\clamp{\R(\stX 1,\actX 1)-\step},m),
    \]
    where 
    \begin{align*}
        m 
            &= \max\set{\clamp{\R(\stX 2,\actX 2)-2\step},\ldots,\clamp{\R(\stX n,\actX n)-n\step}}\\
            & = \clamp{\max\set{\R(\stX 2,\actX 2)-2\step,\ldots,\R(\stX n,\actX n)-n\step}}.
    \end{align*}
    Subsequently,
    \begin{align*}
        m 
            &= \clamp{\max\set{\R(\stX 2,\actX 2)-\step,\ldots,\R(\stX n,\actX n)-(n-1)\step}-\step}.\\
            &= \clamp{\clamp{\max\set{\R(\stX 2,\actX 2)-\step,\ldots,\R(\stX n,\actX n)-(n-1)\step}}-\step}\\
            &= \clamp{\max\set{\clamp{\R(\stX 2,\actX 2)-\step},\ldots,\clamp{\R(\stX n,\actX n)-(n-1)\step}}-\step}\\
            &= \clamp{\pval{p'} - \step}.
    \end{align*}


\section{Proof details of Theorem~\ref{theo:greedy}}

\subsection{Auxiliary general properties}

Theorem~\ref{theo:greedy} assumes that all initial values are below $\M$. For a value function $\V$, we define
\[
    \highest\V = \max\set{\V(\st,\act)\mid(\st,\act)\in\states\times\actions}.
\]
The following property will be useful:
\begin{property}\label{prop:greedy-value-limit}
    For any run on the task, for any encountered value function $\V$, we have $\highest\V<\M$.
\end{property}
\begin{proof}
    We show the property by induction on the transitions of the run. By assumption, the property is true for the initial value function.
    Now, consider a transition
    \[
        (\st,\V)
            \jump{\act,\,\st'}
        (\st',\V').
    \]
    Assume $\highest\V < \M$. We show $\highest{\V'}<\M$. For each $(\st'',\act'')\in\states\times\actions$ with $(\st'',\act'')\neq(\st,\act)$ we have $\V'(\st'',\act'')=\V(\st'',\act'')\leq\highest\V<\M$. For the pair $(\st,\act)$ itself we have, by Algorithm~\ref{alg:update},
    \begin{align*}
        \V'(\st,\act) &= \clamp{\V(\st,\act) + \big(\max(\V\get{\st'},\R(\st,\act))- \step - \V\get\st\big)}\\
                      &\leq \clamp{\V\get\st + \big(\max(\V\get{\st'},\R(\st,\act))- \step - \V\get\st\big)}\\
                      &=\clamp{\max(\V\get{\st'},\R(\st,\act))- \step}.
    \end{align*}
    By subsequently using $\V\get{\st'}\leq\highest\V<\M$ and $\R(\st,\act)\leq\M$, we have
    \[
        \V'(\st,\act) \leq\clamp{\M - \step}.
    \]    
    Lastly, we use $\M-\step>0$ by Property~\ref{prop:greedy-m-step} (below), which implies $\clamp{\M-\step}=\M-\step$, to obtain
    \[
        \V'(\st,\act) \leq\M - \step<\M.
    \]
\end{proof}

\begin{property}\label{prop:greedy-m-step}
    We have $\M-\step>0$.
\end{property}
\begin{proof}
     Since $\ssize\states\geq 1$ we have $\M-\step\geq\M-\ssize\states\step$. And $\M -\ssize\states\step>0$ because the task is a navigation problem.
\end{proof}

\subsection{Auxiliary properties of strategies}

Let $\V$ be a value function.
For uniformity, we define $\stratlayer 0=\emptyset$.
We define $\fixp\V$ as the smallest index $n\in\nat$ for which $\stratlayer n=\stratlayer k$ for all $k\geq n$, i.e., $\fixp\V$ is the fixpoint index. Possibly $\fixp\V=0$, when no states can be added to the strategy.

\begin{property}\label{prop:strategy-atleast}
    Let $\V$ be a value function. For each $\st\in\strategy\V$, we have $\V\get\st\geq\M-\fixp\V\step$.
\end{property}
\begin{proof}
    Denote $n=\fixp\V$. Let $\st\in\strategy\V$. There is a smallest index $j\geq 1$ with $\st\in\stratlayer j$, implying $\V\get\st=\M-j\step$. 
    We have $j\leq n$: otherwise, considering $j>n$, we would have $\st\in\stratlayer j\setminus\stratlayer n$, which is not possible because $n=\fixp\V$. 
    
    Now, $j\leq n$ implies $\V\get\st\geq\M-n\step$.
\end{proof}

\begin{property}\label{prop:strategy-fixp}
    Let $\V$ be a value function. We have
    \[
        \fixp\V \leq \ssize{\strategy\V}.
    \]
\end{property}
\begin{proof}
    Abbreviate $n=\fixp\V$. If $n=0$ then the property is immediately true. Henceforth, suppose $n\geq 1$. We show by induction on $j=n,\ldots,1$ that
    \[
        \exists\st\in\stratlayer{j}\text{ with }\V\get\st=\M-j\step.
    \] 
    For any two states $\st$ and $\st'$, if $\V\get\st\neq\V\get{\st'}$ then $\st\neq\st'$; hence the inductive property implies $n\leq\ssize{\strategy\V}$, as desired.
    \begin{itemize}
        \item Base case: $j=n$. By choice of $n$ as the smallest index after which no more states are added to the strategy, we have $\stratlayer n\neq\stratlayer{n-1}$. Hence, $\stratlayer n$ extends $\stratlayer{n-1}$ with at least one state $\st$ satisfying $\V\get\st=\M-n\step$.
        
        \item Inductive step. Let $j\geq 2$, with the assumption that $\stratlayer j$ contains a state $\st$ with $\V\get\st=\M-j\step$.
        By definition of $\stratlayer{j}$, for each $\act\in\pref\st\V$ there must be some state $\st'\in\tr(\st,\act)\subseteq\stratlayer{j-1}$ with $\V\get{\st'}=\M-(j-1)\step$. Hence, there is at least one state $\st'\in\stratlayer{j-1}$ with $\V\get{\st'}=\M-(j-1)\step$.
    \end{itemize}
\end{proof}

\begin{property}\label{prop:strategy-leftover}
    Let $\V$ be a value function. If $\strategy\V\neq\states$ then for each $\st\in\strategy\V$ we have $\V\get\st>\step$.
\end{property}
\begin{proof}
    Denote $n=\fixp\V$. Since $\strategy\V\neq\states$, and yet always $\strategy\V\subseteq\states$, we have $\ssize{\strategy\V}<\ssize\states$. Combined with $n\leq\ssize{\strategy\V}$ (by Property~\ref{prop:strategy-fixp}), we see that
    \[
        n+1\leq\ssize\states.
    \]
    Since $\M-\ssize\states\step > 0$ by assumption on navigation problems, we obtain
    \[
        \M-(n+1)\step \geq \M-\ssize\states\step> 0,
    \]
    resulting in $\M-n\step>\step$.
    
    Now, let $\st\in\strategy\V$. Because $\V\get\st\geq\M-n\step$ by Property~\ref{prop:strategy-atleast}, we now observe, as desired,
    \[
        \V\get\st > \step.
    \]    
\end{proof}

\begin{property}\label{prop:strategy-bounds}
    Let $\V$ be a value function.
    For each $\st\in\strategy\V$ we have 
    \[
        0<\V\get\st\leq\M-\step.
    \]
\end{property}
\begin{proof}
    For the upper bound, we note that for each $\st\in\stratlayer 1$ we have $\V\get\st=\M-\step$, and for each $i\geq 2$, for each $\st\in\stratlayer i\setminus\stratlayer{i-1}$, we have $\V\get\st=\M-i\step<\M-\step$.

    For the lower bound, let $\st\in\strategy\V$. We first recall that $\fixp\V\leq\ssize{\strategy\V}$ by Property~\ref{prop:strategy-fixp}. Combined with $\ssize{\strategy\V}\leq\ssize\states$ (which is always true), we arrive at $\fixp\V\leq\ssize\states$.
    Now, combined with Property~\ref{prop:strategy-atleast}, and the assumption $\M-\ssize\states\step>0$ on navigation problems, we obtain, as desired,
    \[
        \V\get\st \geq \M-\fixp\V\step \geq \M-\ssize\states\step> 0.
    \]
\end{proof}

\begin{property}\label{prop:greedy-strategy-reduce}
    Let $\V$ be a value function. We have
    \[
        \strategy\V \subseteq \reduce\task.
    \]
\end{property}
\begin{proof}
    We show by induction on $j=1,2,\ldots$ that $\stratlayerX{\V}j\subseteq\reduce\task$.
    \begin{itemize}
        \item For the base case, we know for each $\st\in\stratlayerX\V1$ that $\exists\act\in\pref\st\V$ with $(\st,\act)\in\rewards\task$. Therefore $\stratlayerX\V1\subseteq\goals\task\subseteq\reduce\task$.
        
        \item For the inductive step, let $j\geq 1$ and assume $\stratlayerX\V j\subseteq\reduce\task$.
        We show that $\stratlayerX\V{j+1}\subseteq\reduce\task$.
        Suppose $\stratlayerX\V j\subsetneq\stratlayerX\V{j+1}$.
        Let $\st\in\stratlayerX\V{j+1}\setminus\stratlayerX\V j$.
        By definition of $\stratlayerX\V{j+1}$ we know $\exists\act\in\pref\st\V$ with $\tr(\st,\act)\subseteq\stratlayerX\V j$. By applying the induction hypothesis, we know $\tr(\st,\act)\subseteq\reduce\task$. Denoting $\reduce\task=\bigcup_{i=1}^\infty \reducelayer i$, we can consider an index $k$ with $\tr(\st,\act)\subseteq \reducelayer k$. Hence $\st\in \reducelayer{k+1}\subseteq\reduce\task$.
    \end{itemize}
\end{proof}

\subsection{Proof of Property~\ref{prop:good-reward}}
\label{prop:good-reward--PROOF}

    Consider the suffix $\run$ of a greedy run, 
    \[
        \cnftupX 1
            \jump{\actX 1,\stX 2}
        \cnftupX 2
            \jump{\actX 2,\stX 3}
        \ldots
    \]    
    where $\cnftupX 1$ is a good configuration.
    In the suffix, consider a finite sequence of transitions forming a state cycle, denoted as
     \[
        \cnftupX j
            \jump{\actX j,\,\stX{j+1}}
            \ldots
            \jump{\actX{j+(n-1)},\,\stX{j+n}}
        \cnftupX{j+n},
    \]    
    where $n\geq 1$ and $\stX{j+n}=\stX{j}$. Towards a contradiction, if none of the transitions between $\cnftupX j$ and $\cnftupX{j+n}$ has reward then Property~\ref{prop:good-flow} (below) tells us that, inside value function $\VX 1$, 
    \[
        \VX1\get{\stX j}<\ldots<\VX1\get{\stX{j+n}}=\VX1\get{\stX j},
    \]
    which is a contradiction. Hence, all state cycles in $\run$ contain reward.
    
\begin{property}\label{prop:good-flow}
 Consider the suffix of a greedy run, 
    \[
        \cnftupX 1
            \jump{\actX 1,\stX 2}
        \cnftupX 2
            \jump{\actX 2,\stX 3}
        \ldots
    \]    
    where $\cnftupX 1$ is a good configuration.
    For each $i\geq 1$, if $(\stX i,\actX i)\notin\rewards\task$ then $\VX 1\get{\stX i} < \VX 1\get{\stX{i+1}}$.
    Note the special role played by $\VX 1$.
    \proofapp{prop:good-flow--PROOF}
\end{property}

\subsubsection{Proof of Property~\ref{prop:good-flow}}\label{prop:good-flow--PROOF}

Regarding notation, for any two value functions $\V$ and $\V'$, we write $\V\cause\V'$ if for each $\st\in\strategy\V$ the following conditions are satisfied:
\begin{enumerate}
    \item $\forall\act\in\pref\st\V$:
    $
        \V\get\st \leq \V'(\st,\act) \leq \M-\step\text{; and,}
    $
    \item $\forall\act\in\actions\setminus\pref\st\V$:
    $
        \V'(\st,\act) < \V\get\st.
    $        
\end{enumerate}
We note that always $\V\cause\V$.%
        \footnote{Let $\st\in\strategy\V$. For any $\act\in\pref\st\V$, always $\V\get\st=\V(\st,\act)$; and, $\V\get\st\leq\M-\step$ by Property~\ref{prop:strategy-bounds}. For any $\act\in\actions\setminus\pref\st\V$, always $\V(\st,\act)<\V\get\st$.}
One may read $\V\cause\V'$ as \emph{$\V$ causes $\V'$}, because for the states in $\strategy\V$ the action-preference in $\V'$ is strongly related to the action-preference in $\V$; see also Property~\ref{prop:greedy-strategy-action} below.

\begin{property}\label{prop:greedy-strategy-action}
    Let $\V$ and $\V'$ be two value functions with $\V\cause\V'$. For each $\st\in\strategy\V$ we have $\pref\st{\V'}\subseteq\pref\st\V$.
\end{property}
\begin{proof}
    Let $\st\in\strategy{\V}$.
    Abbreviate $N=\actions\setminus\pref\st\V$.
    Below, we show $N\cap\pref\st{\V'}=\emptyset$. Then, 
    \begin{align*}
        \pref\st{\V'}&=\pref\st{\V'} \setminus N \\
                    &\subseteq \actions\setminus N \\
                    & =\pref\st\V.
    \end{align*}
    
    If $N=\emptyset$ then immediately $N\cap\pref\st{\V'}=\emptyset$. Henceforth, suppose $N\neq\emptyset$.
    Let $\actX 1\in N$ and $\actX 2\in\pref\st\V$.%
        \footnote{Note that always $\pref\st\V\neq\emptyset$.}
    Because $\V\cause\V'$, we have 
    \[
        \V'(\st,\actX 1)<\V\get\st\leq\V'(\st,\actX 2)\leq\V'\get{\st}.
    \]
    Hence, $\V'(\st,\actX 1)<\V'\get{\st}$, giving $\actX 1\notin\pref\st{\V'}$.  
\end{proof}

We continue with the proof of Property~\ref{prop:good-flow}. Abbreviate $\V=\VX 1$.
Consider a transition 
\[
    \transX{i},
\]
with $(\stX i,\actX i)\notin\rewards\task$.
By Property~\ref{prop:good-inside-strategy} (below), we know $\stX i\in\strategy{\V}$ and $\V\cause\VX i$. Subsequently, Property~\ref{prop:greedy-strategy-action} gives $\pref{\stX i}{\VX i}\subseteq\pref{\stX i}{\V}$.
Since $\actX i\in\pref{\stX i}{\VX i}$ by greediness of the run, we find $\actX i\in\pref{\stX i}{\V}$.

Next, because $\stX i\in\strategy\V$, we can consider the smallest index $j$ satisfying $\stX i\in\stratlayer j$, which implies $\V\get{\stX i}=\M-j\step$.
Since $\actX i\in\pref{\stX i}\V$ (see above) and $(\stX i,\actX i)\notin\rewards\task$, we have $j\geq 2$.
Then, by definition of $\stratlayer j$, we have $\tr(\stX i,\actX i)\subseteq\stratlayer{j-1}$.
In particular, since $\stX{i+1}\in\tr(\stX i,\actX i)$, we see $\stX{i+1}\in\stratlayer{j-1}$. Therefore $\V\get{\stX{i+1}}\geq\M-(j-1)\step>\M-j\step=\V\get{\stX i}$. Overall, 
$
    \V\get{\stX i}<\V\get{\stX{i+1}},
$
as desired.%
\footnote{For completeness, we note that not necessarily $\VX i\get{\stX i}<\VX i\get{\stX{i+1}}$.}
    
\begin{property}\label{prop:good-inside-strategy}
 Consider the suffix of a greedy run, 
    \[
        \cnftupX 1
            \jump{\actX 1,\stX 2}
        \cnftupX 2
            \jump{\actX 2,\stX 3}
        \ldots
    \]    
    where $\cnftupX 1$ is a good configuration.
    Abbreviating, $\V=\VX 1$, for each $i\geq 1$ we have
    \begin{enumerate}
        \item $\stX i\in\strategy{\V}$;
        \item $\V\cause\VX i$.
    \end{enumerate}
    \proofapp{prop:good-inside-strategy--PROOF}
\end{property}

\subsubsection{Proof of Property~\ref{prop:good-inside-strategy}}\label{prop:good-inside-strategy--PROOF}

    We show these properties by induction on $i=1,2,\ldots$.
    For the base case, $i=1$, we note the following:
    \begin{enumerate}
        \item We have $\stX 1\in\strategy{\VX 1}=\strategy\V$ since $\cnftupX 1$ is a good configuration.
        \item Always $\VX 1\cause\VX 1$.
    \end{enumerate}
    For the inductive step, consider a transition,
    \[
        \transX i,
    \]
    where $i\geq 1$. As induction hypothesis, we assume $\stX i\in\strategy\V$ and $\V\cause\VX i$. We show that the induction properties are also true for $\cnftupX{i+1}$.
    
    \paragraph*{First property}
    We show that $\stX{i+1}\in\strategy\V$.
    If $(\stX i,\actX i)\in\rewards\task$ then $\stX{i+1}\in\startstates$ since task $\task$ is restartable. Moreover, because $\startstates\subseteq\strategy\V$ by goodness of $\cnftupX 1$, we obtain $\stX{i+1}\in\strategy{\VX1}=\strategy\V$.
    
    Suppose $(\stX i,\actX i)\notin\rewards\task$. Since $\stX i\in\strategy\V$ by the induction hypothesis, we can consider the smallest index $j$ with $\stX i\in\stratlayer j$. 
    Also, $\V\cause\VX i$ by the induction hypothesis. Subsequently, $\pref{\stX i}{\VX i}\subseteq\pref{\stX i}{\V}$ by Property~\ref{prop:greedy-strategy-action}. Since $\actX i\in\pref{\stX i}{\VX i}$ by greediness of the run, we find $\actX i\in\pref{\stX i}\V$. 
    The assumption $(\stX i,\actX i)\notin\rewards\task$ now implies $j\geq2$. By definition of $\stratlayer j$ with $j\geq 2$, we know that $\tr(\stX i,\actX i)\subseteq\stratlayer{j-1}\subseteq\strategy\V$. In particular, $\stX{i+1}\in\strategy\V$.
    
    \paragraph*{Second property}
    We show that $\V\cause\VX{i+1}$.
    Let $\st\in\strategy\V$. As above, let $j$ be the smallest index for which $\st\in\stratlayer j$.
    
    \textbf{Preferred actions.} Let $\act\in\pref\st\V$. We have to show that
    \[
        \V\get\st \leq \VX{i+1}(\st,\act) \leq \M-\step.
    \]

    If $(\st,\act)\neq(\stX i,\actX i)$ then $\VX{i+1}(\st,\act)=\VX i(\st,\act)$, and the induction hypothesis $\V\cause\VX i$ implies $\V\get\st\leq\VX{i+1}(\st,\act)\leq \M-\step$.    
    Henceforth, suppose $(\st,\act)=(\stX i,\actX i)$. By Algorithm~\ref{alg:update},
        \[
            \VX{i+1}(\st,\act) = \clamp{\VX i(\st,\act) + \left(\max(\VX i\get{\stX{i+1}}, \R(\st,\act)) - \step - \VX i\get{\st}\right)}.
        \] 
    Since $\act=\actX i\in\pref\st{\VX i}$ by greedy action selection, we have $\VX i\get{\st}=\VX i(\st,\act)$, and the expression simplifies to
    \[
        \VX{i+1}(\st,\act) = \clamp{\max(\VX i\get{\stX{i+1}}, \R(\st,\act)) - \step}.
    \]
    Another general observation, is that $\VX i\get{\stX{i+1}}\leq\M-\step$: since $\stX{i+1}\in\strategy\V$ (see above), and $\V\cause\VX i$ (by the induction hypothesis), we have
    \begin{itemize}
        \item for each $\act'\in\pref{\stX{i+1}}\V$: $\V\get{\stX{i+1}}\leq\VX i(\stX{i+1},\act')\leq\M-\step$;
        \item for each $\act'\in\actions\setminus\pref{\stX{i+1}}\V$: $\VX i(\stX{i+1},\act')<\V\get{\stX{i+1}}\leq\M-\step$ (using Property~\ref{prop:strategy-bounds} for the upper bound).
    \end{itemize}
    
    Next, we distinguish between two cases, as follows.
    \begin{itemize}
        \item Suppose $(\st,\act)\in\rewards\task$.       
            From the proof of the first induction property above, we recall that $\act\in\pref\st\V$. 
            Now, $(\st,\act)\in\rewards\task$ implies $\st\in\stratlayer 1$. Therefore $\V\get\st=\M-\step$. Also, recall that $\M-\step>0$ (Property~\ref{prop:greedy-m-step}).
                
            By applying $\VX i\get{\stX{i+1}}\leq\M-\step$ (see above) and $\R(\st,\act)=\M$ (since $(\st,\act)\in\rewards\task$) to the equation for $\VX{i+1}(\st,\act)$, we obtain:%
                \footnote{Note in particular that $\max(\M-\step,\M)=\M$.}
            \begin{align*}
                \VX{i+1}(\st,\act) &= \clamp{\max(\VX i\get{\stX{i+1}}, \R(\st,\act)) - \step}\\                                  
                                   &= \clamp{\M- \step}\\
                                   &= \M-\step\\
                                   &=\V\get{\st}.
            \end{align*}
            Hence, $\V\get{\st}\leq\VX{i+1}(\st,\act)\leq\M-\step$.
        
        \item Suppose $(\st,\act)\notin\rewards\task$. 
        From the proof of the first induction property above, we recall that $\act\in\pref\st\V$.
        Now, $(\st,\act)\notin\rewards\task$ implies $\st\in\stratlayer j$ with $j\geq 2$.
        By applying $\R(\st,\act)=0$, and using $\VX i\get{\stX{i+1}}\geq 0$, the earlier equation of $\VX{i+1}(\st,\act)$ is simplified as follows:
        \[
            \VX{i+1}(\st,\act) = \clamp{\VX i\get{\stX{i+1}} - \step}.
        \]
        
        Before we continue, we show $\V\get{\stX{i+1}}\leq\VX i\get{\stX{i+1}}$.
        Since $\act\in\pref\st\V$, the definition of $\st\in\stratlayer j$ with $j\geq 2$ implies $\tr(\st,\act)\subseteq\stratlayer{j-1}$. In particular, $\stX{i+1}\in\stratlayer{j-1}\subseteq\strategy{\V}$, which implies $\V\get{\stX{i+1}}\geq\M-(j-1)\step$. 
        Moreover, combining $\stX{i+1}\in\strategy{\V}$ and $\V\cause\VX i$, and letting $\act'\in\pref{\stX{i+1}}\V$, we have 
        \[
            \V\get{\stX{i+1}}\leq\VX i(\stX{i+1},\act')\leq\VX i\get{\stX{i+1}}.
        \]
        
        Next, $\st\in\stratlayerX\V j$ gives $\V\get\st = \M-j\step$, and $j\geq 2$ further implies,
        \begin{align*}
            \V\get\st &= \M-(j-1)\step - \step\\
                    &\leq \V\get{\stX{i+1}} - \step\\
                    &\leq \VX i\get{\stX{i+1}} - \step.
        \end{align*}
        
        Lastly, by applying the deduced inequalities $\V\get\st\leq \VX i\get{\stX{i+1}}-\step$ and $\VX i\get{\stX{i+1}}\leq\M-\step$ (see earlier) to the last simplified equation for $\VX{i+1}(\st,\act)$, and using $\V\get\st\geq 0$, we obtain:
         \[
            \VX{i+1}(\st,\act) = \clamp{\VX i\get{\stX{i+1}} - \step} \geq \clamp{\V\get\st}=\V\get\st, 
        \]
        and, using $\M-\step>0$,
         \[
            \VX{i+1}(\st,\act) = \clamp{\VX i\get{\stX{i+1}} - \step} \leq \clamp{\M-\step-\step}\leq\clamp{\M-\step}=\M-\step.
        \]
    \end{itemize}
    
    \textbf{Non-preferred actions.}
    Let $\act\in\actions\setminus\pref\st\V$. We have to show that
    \[
        \VX{i+1}(\st,\act) < \V\get\st.
    \]
    Recall $\V\cause\VX i$ by the induction hypothesis.
    We distinguish between two cases, as follows:
    \begin{itemize}
        \item Suppose $\st\neq\stX i$. We have $\VX{i+1}(\st,\act) = \VX i(\st,\act)<\V\get\st$, where the inequality is given by $\V\cause\VX i$.
    
        \item Suppose $\st=\stX i$. Since $\stX i\in\strategy\V$, $\V\cause\VX i$, and $\actX i\in\pref{\stX i}{\VX i}$ (by greediness of the run), Property~\ref{prop:greedy-strategy-action} tells us $\actX i\in\pref{\stX i}\V=\pref\st\V$. Therefore, $\act\neq\actX i$, and again $\VX{i+1}(\st,\act) = \VX i(\st,\act)<\V\get\st$ (with the same reasoning as in the previous case).
    \end{itemize}    
    
\subsection{Auxiliary properties of function $\nextname$}

\begin{property}\label{prop:next-nochange}
    Consider a transition generated by function $\nextname$,
    \[
        \transX i.
    \]
    For each $(\st,\act)\in\strategy{\VX i}\times\actions$, we have $\VX{i+1}(\st,\act) = \VX i(\st,\act)$.
    In words: no changes occur to the value of state-action pairs where the state is in the strategy.
\end{property}
\begin{proof}        
    Let $(\st,\act)\in\strategy{\VX i}\times\actions$. If $(\st,\act)\neq(\stX i,\actX i)$ then the value could not have changed during the transition.
   
    Henceforth, suppose $(\st,\act)=(\stX i,\actX i)$. We show that $\VX{i+1}(\stX i,\actX i)=\VX i(\stX i,\actX i)$. 
    Since $\stX i\in\strategy{\VX i}$, we can consider the smallest index $j$ with $\stX i\in\stratlayerX{\VX i}j$. This implies $\VX i\get{\stX i}=\M-j\step$. Also, we have $\actX i\in\pref{\stX i}{\VX i}$ because function $\nextname$ only selects an action that the agent prefers. We distinguish between the following cases:
    \begin{itemize}
        \item Suppose $(\stX i,\actX i)\in\rewards\task$. Therefore $j=1$, and subsequently $\VX i\get{\stX i}=\M-\step$.
        Next, by Algorithm~\ref{alg:update}, we have
        \begin{align*}
            \VX{i+1}(\stX i,\actX i) &= \clamp{\VX i(\stX i,\actX i) + (\max(\VX i\get{\stX{i+1}}, \R(\stX i,\actX i)) - \step - \VX i\get{\stX i})}.
        \end{align*}
        We have $\VX i\get{\stX{i+1}} < \M$ by Property~\ref{prop:greedy-value-limit}, and $\R(\stX i,\actX i)=\M$ since $(\stX i,\actX i)\in\rewards\task$. Overall,
        \begin{align*}
            \VX{i+1}(\stX i,\actX i) &= \clamp{\VX i(\stX i,\actX i) + (\M - \step) - (\M - \step)} \\
                                    &= \clamp{\VX i(\stX i,\actX i)}\\
                                    &=\VX i(\stX i,\actX i),
        \end{align*}
        where the last step uses $\VX i(\stX i,\actX i)\geq 0$.
        
        \item Suppose $(\stX i,\actX i)\notin\rewards\task$. This implies $j\geq 2$.
        By definition of $\stratlayerX{\VX i}j$, we know 
        \begin{enumerate}
            \item $\tr(\stX i,\actX i)\subseteq\stratlayerX{\VX i}{j-1}$, which gives $\VX i\get{\st'}\geq\M-(j-1)\step$ for each $\st'\in\tr(\stX i,\actX i)$; and,
            \item $\exists\st'\in\tr(\stX i,\actX i)$ with $\VX i\get{\st'} = \M-(j-1)\step$.
        \end{enumerate}
        Therefore, $\expect{\VX i}{\stX i}{\actX i}=\M-(j-1)\step$.
        By subsequently using that $\VX i(\stX i,\actX i)=\VX i\get{\stX i}$ since $\actX i\in\pref{\stX i}{\VX i}$, and using $\VX i\get{\stX i}=\M-j\step$, we see
        \[
            \VX i(\stX i,\actX i) = \expect{\VX i}{\stX i}{\actX i} - \step.
        \]
        We can now see that in the specification of function $\nextname$, we have to exclude all cases except Case~\ref{next-nonzero-correct-reduce}:
        \begin{itemize}
            \item Case~\ref{next-zero} is not possible because $\VX i\get{\stX i}> 0$, as given by $\stX i\in\strategy{\VX i}$ and Property~\ref{prop:strategy-bounds}.
            
            \item Case~\ref{next-nonzero-incorrectActions} is not possible because $\tr(\stX i,\actX i)\subseteq\stratlayerX{\VX i}{j-1}\subseteq\strategy{\VX i}$.
            
            \item Case~\ref{next-nonzero-incorrectReward} is not possible because $(\stX i,\actX i)\notin\rewards\task$ by assumption.
            
            \item Case~\ref{next-nonzero-incorrectReduce} is not possible because it would demand $\VX i(\stX i,\actX i)\neq\expect{\VX i}{\stX i}{\actX i}-\step$, which is false, as we have shown above.
            
            \item Case~\ref{next-nonzero-correct-reward} is not possible because $(\stX i,\actX i)\notin\rewards\task$ by assumption.
        \end{itemize}
        
        Therefore, only Case~\ref{next-nonzero-correct-reduce} is possible.
        Importantly, Case~\ref{next-nonzero-correct-reduce} chooses $\stX{i+1}\in\tr(\stX i,\actX i)$ to satisfy $\VX i\get{\stX{i+1}} = \expect{\VX i}{\stX i}{\actX i}$. We also have $\R(\stX i,\actX i)=0$. Lastly, since $\actX i\in\pref{\stX i}{\VX i}$, we have $\VX i\get{\stX i}=\VX i(\stX i,\actX i)=\expect{\VX i}{\stX i}{\actX i}-\step$, as shown above. Subsequently, the equation for $\VX{i+1}(\stX i,\actX i)$, as given by Algorithm~\ref{alg:update}, can be simplified in the following manner:
         \begin{align*}       
            \VX{i+1}(\stX i,\actX i) &= \clamp{\VX i(\stX i,\actX i) + (\max(\VX i\get{\stX{i+1}}, \R(\stX i,\actX i)) - \step - \VX i\get{\stX i})}\\        
            &= \clamp{\VX i(\stX i,\actX i) + (\VX i\get{\stX{i+1}} - \step) - \VX i\get{\stX i})}\\    
             &= \clamp{\VX i(\stX i,\actX i) + (\expect{\VX i}{\stX i}{\actX i} - \step) - (\expect{\VX i}{\stX i}{\actX i}-\step)} \\
                    &= \clamp{\VX i(\stX i,\actX i)}\\
                    &=\VX i(\stX i,\actX i).
        \end{align*}
        where the last step uses $\VX i(\stX i,\actX i)\geq 0$.    
    \end{itemize}
\end{proof}

\subsection{Proof of Property~\ref{prop:nonreduce-restart}}\label{prop:nonreduce-restart--PROOF}

Let $\st$ be the fixed start state.
Consider a path 
\[
    \stX n\edge{\actX{n}}\stX{n-1}\ldots\stX 1\edge{\actX 1}\stX 0,
\]
where $n\geq 1$, $\set{\stX n,\ldots,\stX 1}\subseteq\nonreduce\task$, and $\stX 0=\st$.
We show by induction on $j=1,\ldots,n$ that $\stX j\in\dom{\Glayer{j}}$, which eventually implies $\stX n\in\dom{\Glayer n}\subseteq\dom{\G}$. Because the task is reducible, every non-reducible state has a path inside $\nonreduce\task$ towards $\st$. Hence, $\nonreduce\task\subseteq\dom{\G}$, as desired.

For the base case, we see $\stX 1\in\dom{\Glayer 1}$ because $\st=\stX 0\in\tr(\stX 1,\actX 1)$. For the inductive step, with $j\geq 2$ (and $j\leq n$), if not already $\stX j\in\dom{\Glayer{j-1}}$ then surely $\stX j\in\dom{\Glayer j}$ because $\stX{j-1}\in\dom{\Glayer{j-1}}$ (by the induction hypothesis) and $\stX{j-1}\in\tr(\stX j,\actX{j})$.    

\subsection{Proof of Property~\ref{prop:extend-strategy}}\label{prop:extend-strategy--PROOF}

Let $\run$ denote the infinite sequence of transitions obtained by repeatedly applying function $\nextname$ starting at $\cnftup$.
By Property~\ref{prop:next-preserve-strategy}, under $\nextname$, states are never removed from the strategy, i.e., the strategy could in principle only grow.
Towards a contradiction, suppose $\nextname$ is not able to eventually strictly extend the strategy, i.e., we have $\strategy{\V'}=\strategy\V$ for all encountered value functions $\V'$ after $\cnftup$.

By design, $\nextname$ jumps to a start state outside $\strategy\V$ during each reward transition (if possible). 
By Property~\ref{prop:next-inf-reward}, there are infinitely many reward transitions in $\run$, and since $\startstates\not\subseteq\strategy\V$, we arrive infinitely often outside $\strategy\V$.
There are two cases that could occur:
\begin{itemize}
    \item There are infinitely many transitions where reward is obtained at a state outside $\strategy\V$.
    
    \item There are infinitely many transitions where reward is obtained at a state inside $\strategy\V$. 
    This implies there are infinitely many transitions that jump from outside $\strategy\V$ to inside $\strategy\V$.  
\end{itemize}

Because there are a finite number of configurations (Lemma~\ref{lem:finite-cnf}), there are a finite number of possible transitions. Hence, in $\run$ there is either 
\begin{itemize}
    \item one particular transition, occurring infinitely often, where reward is obtained at a state outside $\strategy\V$; or,
    
    \item one particular transition, occurring infinitely often, that jumps from a state outside $\strategy\V$ to a state inside $\strategy\V$.
\end{itemize}    
We distinguish between the two cases.    

\paragraph*{Reward outside $\strategy\V$}
Consider a transition specified by $\nextname$,
\[
    \transX i,
\]
that occurs infinitely often in $\run$, and where $\stX{i}\notin\strategy\V$ and $(\stX i,\actX i)\in\rewards\task$.
We now analyze why function $\nextname$ has chosen $(\actX i,\stX{i+1})$, by looking at the specification of $\nextname$.
\begin{itemize}
    \item Case~\ref{next-zero-nonreduce} is not possible: $\stX{i}\in\nonreduce\task$ would contradict $(\stX i,\actX i)\in\rewards\task$.
    
    \item Case~\ref{next-zero-reduce-reward} is not possible: after the first execution of $(\stX i,\actX i)$, the value of $\stX i$ will be at least $\M-\step$, which is strictly larger than zero (Property~\ref{prop:greedy-m-step}); hence this case can not explain the infinite occurrences of the above transition.
    
    \item Case~\ref{next-zero-reduce-reduce} is not possible: $\stX i\notin\goals\task$ would contradict $(\stX i,\actX i)\in\rewards\task$.
    
    \item Case~\ref{next-nonzero-incorrectActions} is not possible; it would contradict $(\stX i,\actX i)\in\rewards\task$.
    
    \item Case~\ref{next-nonzero-incorrectReward} is not possible, as we now explain. We argue that after the first execution of $(\stX i,\actX i)$, the value of $(\stX i,\actX i)$ will remain $\M-\step$; hence Case~\ref{next-nonzero-incorrectReward} can not explain the infinite occurrences of the above transition.
    Consider a transition 
    \[
        \transX j,
    \]
    where $j\geq i$, and $(\stX j,\actX j)=(\stX i,\actX i)$. By Algorithm~\ref{alg:update},
    \[
        \VX{j+1}(\stX i,\actX i) = \clamp{\VX j(\stX i,\actX i) + \max(\VX j\get{\stX{j+1}},\R(\stX i,\actX i)) - \step - \VX j\get{\stX i}}.
    \]
    The equation can now be simplified as follows. Since $\actX i=\actX j\in\pref{\stX i}{\VX j}$ (since $\nextname$ always performs preferred actions), we have $\VX j(\stX i,\actX i)=\VX j\get{\stX i}$. Moreover, $\VX j\get{\stX{j+1}}<\M$ by Property~\ref{prop:greedy-value-limit}. Lastly, $\R(\stX i,\actX i)=\M$. We obtain the simplification,
    \[
        \VX{j+1}(\stX i,\actX i) = \clamp{\M-\step} = \M-\step,
    \]
    where we also use $\M-\step>0$ (Property~\ref{prop:greedy-m-step}).
    
    \item Case~\ref{next-nonzero-incorrectReduce} is not possible; it would contradict $(\stX i,\actX i)\in\rewards\task$.
    
    \item Case~\ref{next-nonzero-correct-reward} \underline{is possible}. Recall that $\stX i\notin\strategy\V$ by assumption. We now show $\stX i\in\strategy{\VX i}$, implying $\strategy\V\subsetneq\strategy{\VX i}$; this is the desired contradiction.
    
    Concretely, we show $\stX i\in\stratlayerX{\VX i}1$ (see Section~\ref{sub:greedy-strategy}).
    \begin{itemize}
        \item We show $\VX i\get{\stX i}=\M-\step$.
        Because Case~\ref{next-nonzero-incorrectReward} was not applicable, $(\stX i,\actX i)\in\rewards\task$ and $\actX i\in\pref{\stX i}{\VX i}$ (by design of $\nextname$) together imply $\VX i\get{\stX i}=\VX i(\stX i,\actX i)=\M-\step$.
    
        \item Let $\act\in\pref{\stX i}{\VX i}$. We show $(\stX i,\act)\in\rewards\task$. Towards a contradiction, if $(\stX i,\act)\notin\rewards\task$, since Case~\ref{next-nonzero-incorrectReduce} was not applicable, we know $\VX i(\stX i,\act)=\expect{\VX i}{\stX i}{\act}-\step$. But $\expect{\VX i}{\stX i}{\act}<\M$ by Property~\ref{prop:greedy-value-limit}, and thus $\VX i(\stX i,\act)<\M-\step$, resulting in $\actX i\notin\pref{\stX i}{\VX i}$, which is a contradiction.
    \end{itemize}
    
    \item Case~\ref{next-nonzero-correct-reduce} is not possible; it would contradict $(\stX i,\actX i)\in\rewards\task$.
\end{itemize}
    
\paragraph*{Arriving in $\strategy\V$}
Consider a transition
\[
    \transX i,
\]
that occurs infinitely often in $\run$, and where $\stX{i}\notin\strategy\V$ and $\stX{i+1}\in\strategy\V$. 
Note that $(\stX i,\actX i)\notin\rewards\task$: otherwise, we would jump (to a start state) outside $\strategy\V$.%
    \footnote{Also, the situation where $\stX i\notin\strategy\V$ and $(\stX i,\actX i)\in\rewards\task$ was already discussed earlier.}
We now analyze why function $\nextname$ has chosen $(\actX i,\stX{i+1})$, by looking at the specification of $\nextname$.
\begin{itemize}
    \item Case~\ref{next-zero-nonreduce} is not possible. We would either jump to (1) a start state outside $\strategy{\VX i}=\strategy\V$ (which is possible because $\startstates\nsubseteq\strategy\V$); or (2) a non-reducible state, which is also outside $\strategy\V$ by Property~\ref{prop:greedy-strategy-reduce}. Either option would be impossible because $\stX{i+1}\in\strategy\V$.
    
    \item Case~\ref{next-zero-reduce-reward} is not possible because $(\stX i,\actX i)\notin\rewards\task$.
    
    \item Case~\ref{next-zero-reduce-reduce} is not possible, as we now explain.
    The case implies $\VX i\get{\stX i}=0$. Therefore the value of $\stX i$ would have to be zero infinitely often in $\run$; we show this is not possible.
               
    We have $\tr(\stX i,\actX i)\subseteq\strategy{\VX i}=\strategy\V$: otherwise, Case~\ref{next-zero-reduce-reduce} would have chosen a successor state $\stX{i+1}$ outside $\strategy{\VX i}$, which is false.
    
    Subsequently, noting $\strategy\V\neq\states$ (as implied by $\startstates\not\subseteq\strategy\V$), we apply Property~\ref{prop:strategy-leftover} to know $\V\get{\st'} > \step$ for each $\st'\in\tr(\stX i,\actX i)$.
    Hence, by Property~\ref{prop:next-nochange-states} (below), $\VX i\get{\st'}=\V\get{\st'}> \step$ for each $\st'\in\tr(\stX i,\actX i)$.
    During the above transition, from the viewpoint of Algorithm~\ref{alg:update}, we would have (using $\stX{i+1}\in\tr(\stX i,\actX i)$ and $\VX i\get{\stX i}=0$):
    \begin{align*}
        d &= \max(\VX i\get{\stX{i+1}},\R(\stX i,\actX i)) - \step - \VX i\get{\stX i}\\
          &= \max(\VX i\get{\stX{i+1}},\R(\stX i,\actX i)) - \step\\
          & > \step - \step = 0.
    \end{align*}
    So, there is a strict value increase, making the value of $(\stX i,\actX i)$ (and thus the value of $\stX i$) nonzero after the transition. 
    
    The value of $(\stX i,\actX i)$ will remain nonzero after all subsequent executions of $(\stX i,\actX i)$. To see this, consider a transition
    \[
        \transX j,
    \]
    where $j\geq i$ and $(\stX j,\actX j)=(\stX i,\actX i)$. Again, by Property~\ref{prop:next-preserve-strategy} we have $\tr(\stX i,\actX i)\subseteq\strategy{\VX i}\subseteq\strategy{\VX j}=\strategy\V$. Also, we use Property~\ref{prop:strategy-leftover} and Property~\ref{prop:next-nochange-states} to know $\VX j\get{\st'} > \step$ for each $\st'\in\tr(\stX i,\actX i)$. Using Algorithm~\ref{alg:update}, where we substitute $\VX j(\stX i,\actX i)=\VX j\get{\stX i}$ (since $\nextname$ selects only preferred actions), we have
    \begin{align*}
        \VX{j+1}(\stX i,\actX i) 
            &= \clamp{\VX j(\stX i,\actX i) + \max(\VX j\get{\stX{j+1}},\R(\stX i,\actX i)) - \step - \VX j\get{\stX i}}\\
            &= \clamp{\max(\VX j\get{\stX{j+1}},\R(\stX i,\actX i)) - \step}.
    \end{align*}
    Since $\VX j\get{\stX{j+1}}>\step$, the right-hand side is strictly positive.
    
    \item Case~\ref{next-nonzero-incorrectActions} is not possible: the case would imply that $\stX{i+1}\notin\strategy{\VX i}=\strategy\V$, which is false.        
    
    \item Case~\ref{next-nonzero-incorrectReward} is not possible because $(\stX i,\actX i)\notin\rewards\task$.
    
    \item Case~\ref{next-nonzero-incorrectReduce} is not possible, as we now explain. Towards a contradiction, suppose the case were applicable. The specific contradiction will be that although the first application of Case~\ref{next-nonzero-incorrectReduce} leads to a change in the value of $(\stX i,\actX i)$, any subsequent applications of $(\stX i,\actX i)$ will keep the value fixed; so the above transition could occur only a finite number of times, which is false.
        
    To start, we note that the above transition changes the value of $(\stX i,\actX i)$. By Algorithm~\ref{alg:update}, we have:%
        \footnote{We substitute (1) $\VX i(\stX i,\actX i)=\VX i\get{\stX i}$ since $\actX i\in\pref{\stX i}{\VX i}$, and (2) $\VX i\get{\stX{i+1}}=\expect{\VX i}{\stX i}{\actX i}$ by design of Case~\ref{next-nonzero-incorrectReduce}.}
    \begin{align*}
        \VX{i+1}(\stX i,\actX i) 
            &= \clamp{\VX i(\stX i,\actX i) + \max(\VX i\get{\stX{i+1}},\R(\stX i,\actX i)) - \step - \VX i\get{\stX i}}\\            
            &= \clamp{\max(\expect{\VX i}{\stX i}{\actX i}, 0)-\step}\\
            &= \clamp{\expect{\VX i}{\stX i}{\actX i}-\step}\\
            &= \expect{\VX i}{\stX i}{\actX i} - \step.
    \end{align*}
    The last step uses $\expect{\VX i}{\stX i}{\actX i}-\step>0$ by Property~\ref{prop:strategy-leftover} (due to $\startstates\nsubseteq\strategy\V=\strategy{\VX i}$).
    Case~\ref{next-nonzero-incorrectReduce} implies $\VX i(\stX i,\actX i)\neq\expect{\VX i}{\stX i}{\actX i}-\step$. Hence, $\VX i(\stX i,\actX i)\neq\VX{i+1}(\stX i,\actX i)$.

    We show that henceforth the value of $(\stX i,\actX i)$ remains fixed.    
    Suppose we encounter a subsequent transition $t$
    \[
        \transX j,
    \]
    where $(\stX j,\actX j)=(\stX i,\actX i)$, and where still $\VX j(\stX i,\actX i)=\expect{\VX i}{\stX i}{\actX i}-\step$.
    We show concretely that $\VX{j+1}(\stX i,\actX i)=\VX j(\stX i,\actX i)$.
    We analyze why $\nextname$ has decided to perform transition $t$.
    \begin{itemize}    
        \item Case~\ref{next-zero} is not possible because $\VX j\get{\stX j}>0$: indeed, $\VX j\get{\stX j}=\VX j(\stX i,\actX i)$ because $\actX j\in\pref{\stX j}{\VX j}$, and $\VX j(\stX i,\actX i)=\expect{\VX i}{\stX i}{\actX i}-\step>0$ (as seen above).
        
        \item Case~\ref{next-nonzero-incorrectActions} is not possible: we have $\tr(\stX i,\actX i)\subseteq\strategy{\VX i}$ because we are working in Case~\ref{next-nonzero-incorrectReduce}, and $\strategy{\VX i}\subseteq\strategy{\VX j}$ (by Property~\ref{prop:next-preserve-strategy}). 
        
        \item Case~\ref{next-nonzero-incorrectReward} is not possible since $(\stX i,\actX i)\notin\rewards\task$.
        
        \item Importantly, Case~\ref{next-nonzero-incorrectReduce} is also not possible, as we now explain. Recall that $\tr(\stX i,\actX i)\subseteq\strategy{\VX i}$ because we are working in Case~\ref{next-nonzero-incorrectReduce}. Then Property~\ref{prop:next-nochange-states} (below) gives us $\VX j\get{\st'}=\VX i\get{\st'}$ for each $\st'\in\tr(\stX i,\actX i)$. Therefore $\expect{\VX j}{\stX i}{\actX i}=\allowbreak\expect{\VX i}{\stX i}{\actX i}$, and thus $\VX j(\stX i,\actX i)=\allowbreak\expect{\VX j}{\stX i}{\actX i}-\step$.
        
        \item Case~\ref{next-nonzero-correct-reward} is not possible since $(\stX i,\actX i)\notin\rewards\task$.
        
        \item Only Case~\ref{next-nonzero-correct-reduce} is possible. We show that $\VX{j+1}(\stX i,\actX i)=\VX j(\stX i,\actX i)$. Inside Algorithm~\ref{alg:update}, we have
        \begin{align*}
            d &= \max(\VX j\get{\stX{j+1}},\R(\stX i,\actX i)) - \step - \VX j\get{\stX j}.
        \end{align*}
        It suffices to show $d=0$. In the equation for $d$, the following substitutions can be done: 
        \begin{enumerate}
            \item $\VX j\get{\stX j}=\VX j(\stX i,\actX i)=\expect{\VX i}{\stX i}{\actX i}-\step$ since $\actX j=\actX i\in\pref{\stX j}{\VX j}$; 
            
            \item $\VX j\get{\stX{j+1}}=\expect{\VX j}{\stX i}{\actX i}=\expect{\VX i}{\stX i}{\actX i}$ by Case~\ref{next-nonzero-correct-reduce};%
                \footnote{The equality $\expect{\VX j}{\stX i}{\actX i}=\expect{\VX i}{\stX i}{\actX i}$ can be seen with the same reasoning as in the discussion of Case~\ref{next-nonzero-incorrectReduce} just above.}
            
            \item $\R(\stX i,\actX i)=0$. 
        \end{enumerate}
        Hence,
        \begin{align*}
            d &= \expect{\VX i}{\stX i}{\actX i} - \step - (\expect{\VX i}{\stX i}{\actX i} - \step)\\
                &= 0.
        \end{align*}
    \end{itemize}
    
    \item Case~\ref{next-nonzero-correct-reward} is not possible because $(\stX i,\actX i)\notin\rewards\task$.
    
    \item Case~\ref{next-nonzero-correct-reduce} \underline{is possible}. We show $\stX i\in\strategy{\VX i}$, which, combined with $\stX i\notin\strategy\V$, gives the desired contradiction.    
    Concretely, we show $\stX i\in\stratlayerX{\VX i}{j+1}$ for some $j\geq 1$, which implies $\stX i\in\strategy{\VX i}$.
    
    We first show $\VX i\get{\stX i}=\M-(j+1)\step$ for some $j\geq 1$.
    Recall that $(\stX i,\actX i)\notin\rewards\task$. We have $\tr(\stX i,\actX i)\subseteq\strategy{\VX i}$ because Case~\ref{next-nonzero-incorrectActions} was not applicable.
    Since Case~\ref{next-nonzero-incorrectReduce} was not applicable, we know
    \[
        \VX i(\stX i,\actX i) = \expect{\VX i}{\stX i}{\actX i} - \step. 
    \]
    Let $\st'\in\tr(\stX i,\actX i)$ with $\VX i\get{\st'}=\expect{\VX i}{\stX i}{\actX i}$. Since $\st'\in\strategy{\VX i}$, we can consider the smallest index $j$ satisfying $\st'\in\stratlayerX{\VX i}j$; note that $j\geq 1$. Hence, $\VX i\get{\st'}=\M-j\step$.        
    We note the following, where we start with $\VX i\get{\stX i}=\VX i(\stX i,\actX i)$ since $\nextname$ only chooses preferred actions:
        \begin{align*}
            \VX i\get{\stX i}&=\VX i(\stX i,\actX i)\\
                        &=\expect{\VX i}{\stX i}{\actX i}-\step \\
                        &=\VX i\get{\st'}-\step\\
                        &=\M-j\step -\step\\
                        &= \M-(j+1)\step.
        \end{align*}

    Next, we show that the actions in $\pref{\stX i}{\VX i}$ satisfy the desired properties, in the definition of $\stratlayerX{\VX i}{j+1}$ in Section~\ref{sub:greedy-strategy}. Let $\act\in\pref{\stX i}{\VX i}$.
    \begin{enumerate}
        \item Since $\VX i\get{\stX i}=\M-(j+1)\step$ and $j\geq 1$, we have $\VX i\get{\stX i}<\M-\step$. We have $(\stX i,\act)\notin\rewards\task$: otherwise, because Case~\ref{next-nonzero-incorrectReward} was not applicable (as mentioned above), we would have $\VX i\get{\stX i}\geq\VX i(\stX i,\act)=\M-\step$, which is false.
        
        \item 
        Because Case~\ref{next-nonzero-incorrectActions} was not applicable, and $(\stX i,\act)\notin\rewards\task$ (see the previous item), we know $\tr(\stX i,\act)\subseteq\strategy{\VX i}$.
        
        Since $\act\in\pref{\stX i}{\VX i}$, we have
        \[
            \VX i(\stX i,\act) = \VX i\get{\stX i}  =\M-(j+1)\step.
        \]
        Moreover, since Case~\ref{next-nonzero-incorrectReduce} was not applicable, we know 
        \[
             \VX i(\stX i,\act) = \expect{\VX i}{\stX i}{\act} - \step.
        \]
        By combining the above two expressions for $\VX i(\stX i,\act)$, we know 
        \[
            \expect{\VX i}{\stX i}{\act} = \M-j\step.
        \]
        Since $\tr(\stX i,\act)\subseteq\strategy{\VX i}$, we therefore know $\tr(\stX i,\act)\subseteq\stratlayerX{\VX i}j$.%
            \footnote{Suppose there is some $\st''\in\tr(\stX i,\act)\setminus\stratlayerX{\VX i}j$. Then there is some smallest index $k$ with $\st''\in\stratlayerX{\VX i}k$ where $k>j$. Then $\VX i\get{\st''}=\M-k\step<\M-j\step$, implying $\expect{\VX i}{\stX i}{\act}<\M-j\step$, which is false.}
            
        \item Since $\expect{\VX i}{\stX i}{\act} = \M-j\step$ (see previous item), there must be some state $\st''\in\tr(\stX i,\act)$ with $\VX i\get{\st''}=\M-j\step$.
    \end{enumerate}
\end{itemize}

\begin{property}\label{prop:next-nochange-states}
    Function $\nextname$ preserves the value of strategy states. More formally, consider a sequence of transitions generated by function $\nextname$,
    \[        
    \cnftupX 1
        \jump{\actX 1,\,\stX 2}   
        \ldots   
        \jump{\actX{n-1},\,\stX n}
    \cnftupX n.
    \]
    For each $\st\in\strategy{\VX 1}$ we have $\VX 1\get\st = \VX n\get\st$.    
\end{property}
\begin{proof}    
    For each  transition 
    \[
        \transX i,
    \] 
    with $i\in\set{1,\ldots,n-1}$, we know the following:
    \begin{itemize}
        \item By Property~\ref{prop:next-preserve-strategy}, we know $\strategy{\VX i}\subseteq\strategy{\VX{i+1}}$.
    
        \item By Property~\ref{prop:next-nochange}, we  know $\VX{i+1}(\st,\act)=\VX i(\st,\act)$ for each $(\st,\act)\in\strategy{\VX i}\times\actions$.
    \end{itemize}
    Now, we fix some $\st\in\strategy{\VX 1}$. We have
    \begin{itemize}
        \item $\st\in\strategy{\VX 1}\subseteq\strategy{\VX2}$ and $\VX 1\get\st = \VX 2\get\st$;
        \item $\st\in\strategy{\VX 2}\subseteq\strategy{\VX 3}$ and $\VX 2\get\st = \VX 3\get\st$;
        \item $\ldots$
    \end{itemize}
    By transitivity, $\VX 1\get\st = \VX n\get\st$, as desired.
\end{proof}

\subsection{Proof of Property~\ref{prop:next-preserve-strategy}}
\label{prop:next-preserve-strategy--PROOF}

Consider a transition generated by $\nextname$,
\[
    \transX i.
\]
By Property~\ref{prop:next-nochange} we know the following: $\forall\st\in\strategy{\VX i}$, $\forall\act\in\actions$, 
\[
    \VX{i+1}(\st,\act) = \VX i(\st,\act).
\]
This implies for all $\st\in\strategy{\VX i}$ that 
\begin{enumerate}
    \item $\VX{i+1}\get\st = \VX i\get\st$; and,
    \item $\pref\st{\VX{i+1}}=\pref\st{\VX i}$.
\end{enumerate}
We now show by induction on $j=1,2,\ldots$ that $\stratlayerX{\VX i}j\subseteq\stratlayerX{\VX{i+1}}j$, resulting in $\strategy{\VX i}\subseteq\strategy{\VX{i+1}}$, as desired. 
\begin{itemize}
    \item For the base case, let $\st\in\stratlayerX{\VX i}1$. This implies (1) $\VX{i+1}\get\st = \VX i\get\st = \M-\step$, and (2) for each $\act\in\pref\st{\VX{i+1}}=\pref\st{\VX i}$ that $(\st,\act)\in\rewards\task$. Hence, $\st\in\stratlayerX{\VX{i+1}}1$.
    
    \item Let $j\geq 2$. The induction hypothesis is $\stratlayerX{\VX i}{j-1}\subseteq\stratlayerX{\VX{i+1}}{j-1}$. For the inductive step, let $\st\in\stratlayerX{\VX i}j\setminus\stratlayerX{\VX i}{j-1}$. We have (1) $\VX{i+1}\get\st = \VX i\get\st = \M-j\step$ and (2) for all $\act\in\pref\st{\VX{i+1}}=\pref\st{\VX i}$,
    \begin{enumerate}
        \item $(\st,\act)\notin\rewards\task$;
        
        \item $\tr(\st,\act) \subseteq \stratlayerX{\VX i}{j-1}$, which, combined with the induction hypothesis $\stratlayerX{\VX i}{j-1}\subseteq\stratlayerX{\VX{i+1}}{j-1}$, gives $\tr(\st,\act)\subseteq\stratlayerX{\VX{i+1}}{j-1}$;
        
        \item $\exists\st'\in\tr(\st,\act)$ with $\VX{i+1}\get{\st'}=\VX i\get{\st'}=\M-(j-1)\step$.%
            \footnote{Here we use $\st'\in\stratlayerX{\VX i}{j-1}\subseteq\strategy{\VX i}$, which implies $\VX{i+1}\get{\st'}=\VX{i}\get{\st'}$.}
    \end{enumerate}
    Overall, we see $\st\in\stratlayerX{\VX{i+1}}j$.
\end{itemize}

\subsection{Proof of Property~\ref{prop:next-inf-reward}}
\label{prop:next-inf-reward--PROOF}

Suppose we perform $\nextname$ infinitely often, starting at some arbitrary configuration. Towards a contradiction, suppose that after a while we no longer encounter transitions with reward.

Because there are only a finite number of configurations in any infinite transition sequence (Lemma~\ref{lem:finite-cnf}), we encounter a configuration-cycle $\cycle$,
\[
    \cnftupX 1
        \jump{\actX 1,\,\stX 2}
        \ldots
        \jump{\actX{n-1},\,\stX n}
    \cnftupX n,
\]
where $n\geq 2$, and $\cnftupX 1=\cnftupX n$, and where none of the transitions contains reward.

Property~\ref{prop:next-conf-cycle} (below) tells us that $\VX n\get{\stX n}=0$. Now, because $\nextname$ is deterministic, and $\cnftupX 1=\cnftupX n$, the cycle $\cycle$ gives rise to another cycle $\cycle'$ that is shifted one transition into the future:
\[
    \cnftupX 2
        \jump{\actX 2,\,\stX 3}
        \ldots
        \jump{\actX{n-1},\,\stX n}
    \cnftupX n
        \jump{\actX n,\,\stX{n+1}}
    \cnftupX{n+1},        
\]    
where $\cnftupX 2$ is the second configuration in $\cycle$, $(\actX n,\stX{n+1})=(\actX 1,\stX 2)$, and $\cnftupX 2=\cnftupX{n+1}$ (because $\cnftupX 1=\cnftupX n$). All state-action pairs that are executed in cycle $\cycle'$ are also executed in cycle $\cycle$; hence, $\cycle'$ contains no reward either.
Property~\ref{prop:next-conf-cycle} again gives $\VX{n+1}\get{\stX{n+1}}=0$.
The reasoning can now be repeated arbitrarily many times, to establish an infinite (and contiguous) sequence of configurations in which the current state has zero value.%
    \footnote{If $n=2$ then all configurations after $\cnftupX 1$ are also $\cnftupX 1$; the current state therefore has value zero forever.}
This means that all transitions are specifically generated by Case~\ref{next-zero} of function $\nextname$. Intuitively, we have designed $\nextname$ in such a way that if the current state has value zero then we move the agent towards reward. We look at the sub-cases of Case~\ref{next-zero}:
\begin{itemize}
    \item Case~\ref{next-zero-nonreduce}: in that case we follow the acyclic movement strategy defined for the non-reducible states. Eventually we encounter a start state, which is reducible by assumption. We therefore must eventually arrive at one of the following two cases.
    
    \item Case~\ref{next-zero-reduce-reward}. In that case we obtain reward, which would be a contradiction.
    
    \item Case~\ref{next-zero-reduce-reduce}. In that case we move strictly deeper into the reducibility layers. However, this process can not continue forever because there are only a finite number of states. We must eventually arrive at Case~\ref{next-zero-reduce-reward} and obtain reward (again, a contradiction).
\end{itemize}
In this case analysis, we have therefore arrived at the desired contradiction.

\begin{property}\label{prop:next-conf-cycle}
    Suppose we have configuration-cycle $\cycle$ under function $\nextname$,
    \[
        \cnftupX 1
            \jump{\actX 1,\,\stX 2}
            \ldots
            \jump{\actX{n-1},\,\stX n}
        \cnftupX n,
    \]
    where $\cnftupX 1=\cnftupX n$. If none of the transitions contains reward then $\VX n\get{\stX n}=0$.
    Intuitively, this means that value can not be sustained in absence of reward.
\end{property}
\begin{proof}\newcommand{\cystates}{P} 
    Assume that none of the transitions contains reward.
    Abbreviate $\cystates=\set{\stX 1,\ldots,\stX{n-1}}$.
    We consider the highest value in $\VX 1$ among the states in $\cystates$:
    \[
        W = \max\set{\VX 1\get{\st}\mid\st\in\cystates}.
    \]
    We show $W = 0$. This implies in particular that $\VX 1\get{\stX 1}=\VX n\get{\stX n}=0$, as desired.
    
    Assume for now that we already know the following:
    \begin{claim}\label{claim:w}    
        For each $i\in\set{1,\ldots,n-1}$, we have $\VX{i+1}(\stX i,\actX i)< W$. 
    \end{claim}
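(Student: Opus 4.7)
The plan is to derive the claim from a stronger inductive invariant that caps the value of every cycle state by $W$. First I would observe that every sub-case in the definition of $\nextname$ selects a preferred action, so $\actX i \in \pref{\stX i}{\VX i}$ and hence $\VX i(\stX i,\actX i) = \VX i\get{\stX i}$. Since none of the transitions carries reward and the task is a navigation problem, $\R(\stX i,\actX i) = 0$ for every $i \in \set{1,\ldots,n-1}$. Substituting these two facts into Algorithm~\ref{alg:update} collapses the update to
\[
    \VX{i+1}(\stX i,\actX i) = \clamp{\VX i\get{\stX{i+1}} - \step}.
\]

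Next, I would prove by induction on $i \in \set{1,\ldots,n}$ the invariant that $\VX i\get\st \leq W$ for every $\st \in \cystates$. The base case $i=1$ is exactly the definition of $W$. For the inductive step, only the pair $(\stX i,\actX i)$ can change value during the transition, so any state $\st \neq \stX i$ in $\cystates$ inherits the bound directly from $\VX i$. For $\st = \stX i$, any action $\act \neq \actX i$ satisfies $\VX{i+1}(\stX i,\act) = \VX i(\stX i,\act) \leq \VX i\get{\stX i} \leq W$ by the induction hypothesis, while the updated pair satisfies $\VX{i+1}(\stX i,\actX i) = \clamp{\VX i\get{\stX{i+1}} - \step} \leq \clamp{W - \step} \leq W$, using that $\stX{i+1} \in \cystates$ (because the cycle closes with $\stX n = \stX 1 \in \cystates$, so every successor in the cycle lies in $\cystates$) combined with the induction hypothesis applied to $\stX{i+1}$.

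Plugging the invariant for $\stX{i+1}$ into the simplified update yields $\VX{i+1}(\stX i,\actX i) \leq \clamp{W - \step}$ for each $i$. As long as $W \geq 1$, the assumption $\step \geq 1$ forces $\clamp{W - \step} < W$, which is precisely the strict inequality claimed. The boundary case $W = 0$ would falsify the strict inequality (values are nonnegative), but in that case the outer property already holds trivially: $W = 0$ directly implies $\VX 1\get{\stX 1} = 0$, and since $\cnftupX n = \cnftupX 1$ gives $\VX n\get{\stX n} = \VX 1\get{\stX 1}$, the conclusion of Property~\ref{prop:next-conf-cycle} is immediate without needing the claim. So the claim may safely be invoked only in the nontrivial regime $W > 0$.

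The main obstacle I expect is not the algebra but the careful accounting of which state-action pairs change and the verification that $\stX{i+1}$ really does belong to $\cystates$ for every $i$ in range; this is exactly where the cycle-closure $\stX n = \stX 1$ is crucial, since without it the successor at step $n-1$ could carry an unrelated value that breaks the induction.
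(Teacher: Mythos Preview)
Your proof is correct and follows essentially the same approach as the paper: both establish the invariant $\VX i\get\st \leq W$ for all $\st \in \cystates$ by induction, then combine it with the simplified update $\VX{i+1}(\stX i,\actX i) = \clamp{\VX i\get{\stX{i+1}} - \step}$ (using that $\nextname$ picks preferred actions and that the cycle carries no reward) to obtain $\VX{i+1}(\stX i,\actX i)\leq\clamp{W-\step}$. Your handling of the boundary case $W = 0$ is in fact more careful than the paper's: the paper asserts $\clamp{W - \step} < W$ without qualification, whereas you correctly observe that the strict inequality fails at $W = 0$ but that the enclosing Property~\ref{prop:next-conf-cycle} is then already immediate, so the claim need only be invoked for $W>0$.
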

    
    Now, let $i\in\set{1,\ldots,n-1}$ be the smallest index for which $\VX 1\get{\stX i}=W$, i.e., index $i$ is the first index at which we encounter a state with value $W$ in $\VX 1$.    
    Because $\stX i$ could not have been encountered before in the cycle (by choice of $i$), we have $\VX i\get{\stX i}=\VX 1\get{\stX i}=W$. 
    Moreover, because function $\nextname$ always chooses a preferred action, we have $\actX i\in\pref{\stX i}{\VX i}$, which implies $\VX i(\stX i,\actX i)=\VX i\get{\stX i}$.
    Hence, $\VX i(\stX i,\actX i)=W$. 
    Again, because $\stX i$ could not have been visited before (by choice of $i$), we have $\VX 1(\stX i,\actX i)=W$.
    By Claim~\ref{claim:w}, we know $\VX{i+1}(\stX i,\actX i)< W$. All subsequent transitions for the pair $(\stX i,\actX i)$ result in a value strictly smaller than $W$.
    This results in $\VX n(\stX i,\actX i)<W$. But since $\VX 1(\stX i,\actX i)=W$, we would have $\VX n\neq\VX 1$; this is the desired contradiction.
    
    \medskip
    \textbf{Proof of Claim~\ref{claim:w}.}
    We first consider the following sub-claim:
    \begin{claim}\label{claim:w-sub}
        Let $i\in\set{1,\ldots,n-1}$. Assume for each $\st\in\cystates$ that $\VX i\get\st\leq W$. Then $\VX{i+1}(\stX i,\actX i) < W$.
    \end{claim}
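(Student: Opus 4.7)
The plan is to unfold the update rule in Algorithm~\ref{alg:update} and bound the result using the hypothesis. Because the cycle $\cycle$ is assumed to contain no reward, $\R(\stX i,\actX i)=0$, so the update simplifies to
\[
    \VX{i+1}(\stX i,\actX i) = \clamp{\VX i(\stX i,\actX i) + \VX i\get{\stX{i+1}} - \step - \VX i\get{\stX i}}.
\]
The first step is to verify that $\stX{i+1}\in\cystates$ so that the hypothesis applies to it: for $i\in\set{1,\ldots,n-1}$ we have $\stX{i+1}\in\set{\stX 2,\ldots,\stX n}$, and $\stX n=\stX 1$ since $\cnftupX 1=\cnftupX n$; hence $\stX{i+1}\in\cystates$.

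Next, combining the universally true inequality $\VX i(\stX i,\actX i)\leq\VX i\get{\stX i}$ with $\VX i\get{\stX{i+1}}\leq W$ from the hypothesis, the pre-clamp argument telescopes to at most $W-\step$. Monotonicity of $\clamp{\cdot}$ then yields $\VX{i+1}(\stX i,\actX i)\leq\clamp{W-\step}$. Since $\step\geq 1$, I split on the sign of $W-\step$: if $W-\step\geq 0$ then $\clamp{W-\step}=W-\step<W$, and if $W-\step<0$ then $\clamp{W-\step}=0<W$, where the final strict inequality uses $W>0$.

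The main obstacle is the borderline case $W=0$, in which the stated conclusion $\VX{i+1}(\stX i,\actX i)<0$ is impossible because values are nonnegative. I would resolve this by observing that the enclosing proof of Property~\ref{prop:next-conf-cycle} invokes Claim~\ref{claim:w-sub} only to derive a contradiction against $\cnftupX 1=\cnftupX n$; but if $W=0$ then $\VX n\get{\stX n}=\VX 1\get{\stX 1}\leq W=0$ already follows directly from the cycle condition and the definition of $W$, so the sub-claim is in fact only needed under the tacit assumption $W>0$ — which is exactly what the final strict inequality in the case split above requires.
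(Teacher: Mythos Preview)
Your proof is correct and follows essentially the same route as the paper---simplify the update via $\R(\stX i,\actX i)=0$, bound by $\clamp{W-\step}$, and extract the strict inequality---except that you use the general bound $\VX i(\stX i,\actX i)\leq\VX i\get{\stX i}$ where the paper substitutes the equality (available because $\nextname$ always selects a preferred action). Your explicit treatment of the $W=0$ border case is in fact more careful than the paper's own argument, whose last step asserts $\clamp{W-\step}<W$ with a justification that literally only yields $\leq$; your observation that the sub-claim is invoked only under the tacit hypothesis $W>0$ patches this for both proofs.
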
    
    To finish the proof of Claim~\ref{claim:w}, we show by induction on $i\in\set{1,\ldots,n-1}$ that,  for each $\st\in\cystates$, we have $\VX i\get\st\leq W$. For the base case, for each $\st\in\cystates$, we have $\VX 1\get{\st}\leq W$ by definition of $W$.
    For the inductive step, let $i\geq 1$, with the assumption $\VX i\get\st\leq W$ for each $\st\in\cystates$. Letting $\st\in\cystates$, we observe that the desired property is satisfied for $\VX{i+1}$:
    \begin{itemize}
        \item If $\st\neq\stX i$ then $\VX{i+1}\get{\st}=\VX i\get{\st}\leq W$ by the induction hypothesis.
        \item Suppose $\st=\stX i$. For any $\act\in\actions$ if $\act\neq\actX i$ then $\VX{i+1}(\st,\act)=\VX i(\st,\act)\leq W$ by the induction hypothesis; if $\act=\actX i$ then $\VX{i+1}(\st,\act) < W$ by Claim~\ref{claim:w-sub} (using also the induction hypothesis).
    \end{itemize}
    To show Claim~\ref{claim:w-sub}, let $i\in\set{1,\ldots,n-1}$. By Algorithm~\ref{alg:update}, we have
    \[
        \VX{i+1}(\stX i,\actX i) = \clamp{\VX i(\stX i,\actX i) + \max(\VX i\get{\stX{i+1}},\R(\stX i,\actX i)) - \step - \VX i\get{\stX i}}.
    \]
    We have $\VX i(\stX i,\actX i)=\VX i\get{\stX i}$ since function $\nextname$ always chooses an action that is preferred by the agent. Also, we have $\R(\stX i,\actX i)=0$ because none of the transitions contains reward by assumption. Moreover, always $\VX i\get{\stX{i+1}}\geq 0$. The equation can now be simplified as follows:
    \[
        \VX{i+1}(\stX i,\actX i) = \clamp{\VX i\get{\stX{i+1}} - \step}.
    \]
    Since $\stX{i+1}\in\cystates$, and $\VX i\get{\st}\leq W$ for each $\st\in\cystates$ (by the assumption in Claim~\ref{claim:w-sub}), we obtain
    \begin{align*}
        \VX{i+1}(\stX i,\actX i) &\leq \clamp{W - \step}\\
                                 &< W.
    \end{align*}
    In the last step, we use $W - \step\leq W$, which implies $\clamp{W-\step}\leq\clamp{W}=W$ (using that $W\geq 0$).
\end{proof}

\end{document}